\documentclass{article}

\usepackage{microtype}
\usepackage{graphicx}
\usepackage{subcaption}
\usepackage{booktabs} % for professional tables

\usepackage{hyperref}

\usepackage[accepted]{icml2026}
\usepackage{amsmath}
\usepackage{amssymb}
\usepackage{mathtools}
\usepackage{amsthm}
\usepackage{color}
\usepackage{colortbl}
\usepackage{pifont}
\usepackage{gensymb}
\usepackage{amssymb}
\usepackage{subcaption}
\usepackage{multirow}
\usepackage{threeparttable}
\usepackage{makecell}

\usepackage[capitalize,noabbrev]{cleveref}

\theoremstyle{plain}
\newtheorem{theorem}{Theorem}[section]

\newtheorem{lemma}[theorem]{Lemma}

\theoremstyle{definition}

\newtheorem{assumption}[theorem]{Assumption}
\theoremstyle{remark}

\usepackage[textsize=tiny]{todonotes}

\icmltitlerunning{Prototype-guided Bilateral Alignment Multimodal Federated Learning}

\begin{document}

\twocolumn[
\icmltitle{Prototype-guided Bilateral Alignment Multimodal Federated Learning}

% It is OKAY to include author information, even for blind submissions: the
% style file will automatically remove it for you unless you've provided
% the [accepted] option to the icml2026 package.

% List of affiliations: The first argument should be a (short) identifier you
% will use later to specify author affiliations Academic affiliations
% should list Department, University, City, Region, Country Industry
% affiliations should list Company, City, Region, Country

% You can specify symbols, otherwise they are numbered in order. Ideally, you
% should not use this facility. Affiliations will be numbered in order of
% appearance and this is the preferred way.
\icmlsetsymbol{equal}{*}

\begin{icmlauthorlist}
\icmlauthor{Tianchi Liao}{hk,sysu}
\icmlauthor{Lele Fu}{sysu}
\icmlauthor{Sheng Huang}{sysu}
\icmlauthor{Qing Hu}{sysu}
\icmlauthor{Hong-Ning Dai}{hk}
\icmlauthor{Chuan Chen}{sysu}
%\icmlauthor{Firstname7 Lastname7}{comp}
%%\icmlauthor{}{sch}
%\icmlauthor{Firstname8 Lastname8}{sch}
%\icmlauthor{Firstname8 Lastname8}{yyy,comp}
%%\icmlauthor{}{sch}
%%\icmlauthor{}{sch}
\end{icmlauthorlist}

\icmlaffiliation{hk}{Department of Computer Science, Hong Kong Baptist University, Hong Kong SAR, China}
\icmlaffiliation{sysu}{School of Computer Science and Engineering, Sun Yat-Sen University, Guangzhou, China}

\icmlcorrespondingauthor{Chuan Chen}{chenchuan@mail.sysu.edu.cn}
%\icmlcorrespondingauthor{Firstname2 Lastname2}{first2.last2@www.uk}

% You may provide any keywords that you find helpful for describing your
% paper; these are used to populate the "keywords" metadata in the PDF but
% will not be shown in the document
\icmlkeywords{Machine Learning, ICML}

\vskip 0.3in
]

% this must go after the closing bracket ] following \twocolumn[ ...

% This command actually creates the footnote in the first column listing the
% affiliations and the copyright notice. The command takes one argument, which
% is text to display at the start of the footnote. The \icmlEqualContribution
% command is standard text for equal contribution. Remove it (just {}) if you
% do not need this facility.

% Use ONE of the following lines. DO NOT remove the command.
% If you have no special notice, KEEP empty braces:
\printAffiliationsAndNotice{}  % no special notice (required even if empty)
% Or, if applicable, use the standard equal contribution text:
% \printAffiliationsAndNotice{\icmlEqualContribution}

\begin{abstract}
%Multimodal federated learning (MFL) has emerged as a pivotal paradigm for leveraging distributed data to enhance model performance. However, existing methodologies predominantly rely on idealized assumptions of model homogeneity and balanced modality distributions, rendering them ill-suited for practical scenarios characterized by heterogeneous client architectures and severe modality imbalance. To address these challenges, we propose a Multimodal federated learning Prototype-guided Bilateral Alignment (MFedPBA) framework. MFedPBA facilitates robust knowledge synergy through a dual alignment mechanism: at the feature level, it aligns heterogeneous feature spaces via a projection encoder optimized by contrastive learning and the Gromov-Wasserstein distance; simultaneously, at the decision level, it employs an entropy-weighted aggregation of naturally aligned logit prototypes. Extensive experiments demonstrate that our method significantly outperforms state-of-the-art baselines under conditions of architectural heterogeneity and modality imbalance.
Multimodal federated learning (MFL) has emerged as a pivotal paradigm for leveraging distributed data to enhance model performance. However, existing methods predominantly rely on idealized assumptions of model homogeneity and balanced modality distributions, rendering them ill-suited for practical scenarios characterized by heterogeneous client architectures and severe modality imbalance. To address these challenges, we propose a \textbf{M}ultimodal \textbf{Fed}erated learning \textbf{P}rototype-guided \textbf{B}ilateral \textbf{A}lignment (MFedPBA) framework. MFedPBA facilitates robust knowledge synergy through a dual alignment mechanism: (i) at the feature level, it aligns heterogeneous feature spaces via a projection encoder optimized by contrastive learning and the Gromov-Wasserstein distance; (ii) at the decision level, it employs an entropy-weighted aggregation of naturally aligned logit prototypes. This novel design achieves robust MFL by jointly tackling heterogeneous feature spaces and collectively aggregating decisions.
Extensive experiments demonstrate that our method significantly outperforms state-of-the-art baselines under conditions of model heterogeneity and modality imbalance.
\end{abstract}

\section{Introduction}
\label{Introduction}
With the rapid proliferation of multi-sensor Internet of Things devices and the growing demand for data privacy, personalized federated learning (FL) has emerged as an important paradigm for training customized local models on decentralized clients \cite{collins2021exploiting, meng2024improving,kou2026fedharmony,mafedmc}. Recently, the integration of heterogeneous sensory data has further advanced multimodal federated learning (MFL), which aims to exploit complementary information across modalities such as vision, audio, and text to improve model robustness and personalization \cite{qi2023cross,che2023multimodal,huang2024multimodal,11511401}. 
% However, most existing multimodal federated learning frameworks are built upon an idealized assumption of model homogeneity, requiring all clients to share identical neural network architectures \cite{fedcross,cao2026two}. In practical deployments, as shown in Figure \ref{fig:case}(a), this assumption rarely holds, as clients often exhibit substantial heterogeneity in hardware constraints and modality data distributions, making personalized model architectures inevitable \cite{chen2024fedmbridge,rahman2024multimodal}.
However, the majority of existing multimodal federated learning frameworks are predicated on the idealized assumption of model homogeneity, mandating that all participating clients deploy an identical neural network architecture \cite{fedcross,liao2024swiss,cao2026two}. In practice, as illustrated in Figure \ref{fig:case}(a), this assumption rarely holds true in real-world deployments. Given that practical clients typically exhibit profound heterogeneity in both hardware resource constraints (e.g., computational and storage capacities) and multimodal data distributions (such as missing modalities or data skewness), a single global model struggles to simultaneously accommodate the diverse conditions of all nodes. Consequently, designing and deploying personalized, heterogeneous model architectures tailored to different clients has emerged as an inevitable trend \cite{chen2024fedmbridge,rahman2024multimodal}.

To address model heterogeneity, prior work has explored prototype-based federated learning, where class embeddings are exchanged as communication units \cite{tan2022fedproto,huang2023rethinking,li2025re}. Nevertheless, such approaches encounter severe bottlenecks in multimodal settings. Since heterogeneous encoders map multimodal data into disparate embedding spaces, the direct aggregation of prototypes on the server results in significant representation misalignment~\cite{fu2025federated, zhang2025fedpall}.
As illustrated in Figure \ref{fig:case}(b), when clients employ different encoders, prototypes corresponding to the same class may reside in incompatible feature spaces. Consequently, forced aggregation causes the global prototype to be close to class decision boundaries.
Instead of yielding performance gains, such forced aggregation can lead to significant knowledge contamination. This issue often causes the global model to degrade to the point where its performance falls below that of a baseline trained purely on local data \cite{DBLP:journals/tpds/LiXQWLG24,chen2025advances,xiao2026enhancing,hu2024aggregation}. This raises a critical research challenge: \textit{How can we overcome feature space incompatibility to facilitate effective knowledge transfer when both client neural architectures and input modality configurations are inconsistent?}
\begin{figure*}[t]
\centering 
\includegraphics[width=1\textwidth]{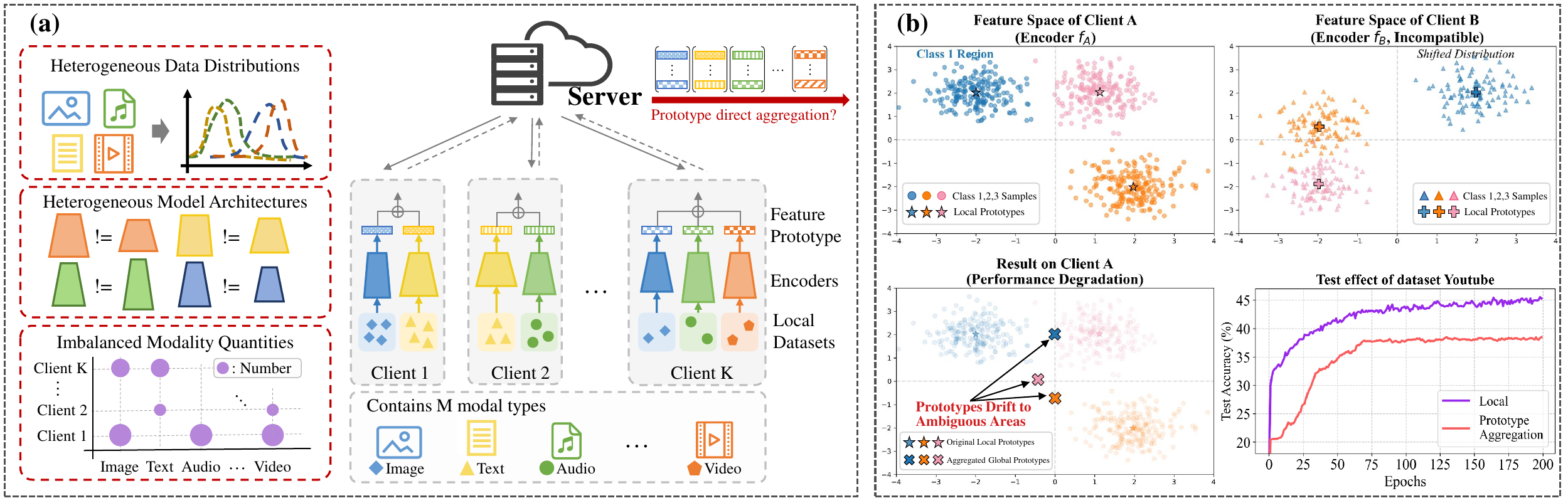}
\caption{(a) Illustration of the multimodal federated learning setting with complex architectural and data inconsistencies. (b) Visualization of feature space incompatibility across heterogeneous encoders, where direct prototype aggregation induces drift towards ambiguous regions, resulting in performance inferior to local training.}
\label{fig:case}
\end{figure*}

Furthermore, existing MFL research typically operates under the assumption of balanced modality distributions. While some methods consider scenarios with missing modalities, they are often restricted to bimodal settings or assume that all clients share an identical set of modalities~\cite{che2024leveraging,wang2024fedmmr,pan2025fedvlp,10325611}. Such assumptions fail to capture the prevalent imbalance in modality availability across decentralized clients. In real-world applications, systems commonly involve multiple modalities, and certain clients may entirely lack data for specific modalities. This imbalance substantially increases the difficulty of collaborative learning by introducing a higher degree of information heterogeneity \cite{qi2025cross,he2025spmc}. This leads to another key challenge: \textit{How can we design an alignment mechanism that ensures precise collaboration in complex scenarios characterized by unbalanced data distributions and modality quantities exceeding the bimodal limit?}

To address these challenges, we propose a multimodal federated learning prototype-guided bilateral alignment framework (MFedPBA). The framework implements a bilateral alignment mechanism on the server that integrates knowledge from heterogeneous clients through modality-specific feature prototypes and logit prototypes.
\ding{182} At the feature level, the server constructs an autoencoding alignment framework. It leverages contrastive learning to enhance intra-modal semantic consistency, while employing the Gromov–Wasserstein (GW) distance to align the geometric structures between heterogeneous feature spaces and the global space, enabling effective aggregation within a unified representation space. 
\ding{183} At the decision level, unlike feature prototypes that may reside in incompatible spaces, logits naturally reside in a shared semantic space aligned across heterogeneous models, as each dimension directly corresponds to a specific class. Leveraging this property, we employ an entropy-based weighting strategy during aggregation. This mechanism quantifies the reliability of local predictions to construct highly discriminative global logit prototypes. 
By synergizing feature-level and decision-level insights, our framework facilitates robust and efficient cross-client knowledge transfer while strictly preserving the architectural heterogeneity of local models.

The primary contributions of this work are summarized as follows:
\begin{itemize}
% \item \textbf{New perspective:} In this paper, we propose a novel method built upon the concept of category-wise mean insight matrices. Our approach bridges the gap between logit invariance and feature invariance, offering new insights into out-of-distribution (OOD) generalization specifically tailored to the challenges of federated learning settings. 
% \item \textbf{New perspective:} We investigate a realistic MFL setting that simultaneously accounts for model heterogeneity and modality quantity imbalance. This setting goes beyond the common homogeneous and modality-balanced assumptions, providing a new perspective on knowledge transfer under heterogeneous architectures and incomplete multimodal observations.
\item \textbf{New perspective:} We address a realistic MFL setting characterized by model heterogeneity and modality imbalance, transcending conventional homogeneous assumptions to enable robust knowledge transfer.
% \item  \textbf{Effective algorithm:} We propose a prototype-guided bilateral alignment framework that enables dual-level knowledge transfer via feature projection using GW distance and contrastive learning, and decision-level alignment through entropy-based logit aggregation, while preserving local model heterogeneity. We further establish convergence guarantees for MFedPBA under non-convex settings.
\item  \textbf{Effective algorithm:} We propose MFedPBA, a bilateral alignment framework that achieves dual-level synergy: feature alignment via GW distance and contrastive learning, and decision alignment via entropy-based logit aggregation. We further provide theoretical convergence guaranties in non-convex settings.
%	\item \textbf{Theoretical guarantee:} We theoretically derived the convergence guarantees of MFedPBA under non-convex settings.
%\vspace{-0.3cm}
\item \textbf{Superior performance:} Extensive experiments on multiple multimodal benchmarks demonstrate that MFedPBA consistently outperforms state-of-the-art baselines under diverse modality distribution settings.
\end{itemize}
\section{Related Work}
\subsection{Multimodal Federated Learning}
Multimodal federated learning \cite{feng2023fedmultimodal, ouyang2023harmony,pan2024survey} has emerged as a pivotal paradigm for addressing privacy concerns in mobile sensing systems, finding broad application in real-world scenarios \cite{zhao2022multimodal,zheng2023autofed,thrasher2025multimodal}.
Existing MFL literature predominantly addresses two settings: (1) \textit{unimodal clients}, where each client possesses only a single modality~\cite{zhang2025unimodal,deng2025cross}, and (2) \textit{missing modalities}, where clients hold incomplete subsets of views~\cite{che2024leveraging,liu2025fedmobile}.
While these approaches aim to aggregate effective global models, real-world deployments often necessitate distinct feature extractors tailored to specific modalities \cite{chen2024feddat}. To handle such heterogeneity, recent work like FedMBridge \cite{chen2024fedmbridge} proposes a topology-aware hypernetwork. However, this method incurs prohibitive communication overhead, which scales poorly with increasing model complexity and modality counts. Consequently, existing methods fail to adequately address the compounding challenges of inter-node modal imbalance and model heterogeneity.   

% Multimodal federated learning \cite{feng2023fedmultimodal, ouyang2023harmony} has emerged as a pivotal paradigm for addressing privacy concerns in distributed sensing systems, finding broad application in real-world scenarios \cite{zhao2022multimodal, zheng2023autofed, thrasher2025multimodal}. Existing MFL literature primarily addresses two constrained scenarios: (1) \textit{uni-modal clients}, where each client possesses only a single modality \cite{zhang2025unimodal,deng2025cross}; and (2) \textit{missing modalities}, where clients hold incomplete subsets of modalities \cite{che2024leveraging,liu2025fedmobile}. 
% While these methods aim to aggregate effective global models, practical deployment often necessitates feature extractors tailored to specific modalities to handle system heterogeneity \cite{chen2024feddat}. 
% To address this heterogeneity, recent studies such as FedMBridge~\cite{chen2024fedmbridge} have introduced topology-aware hyper-networks. 

However, such methods incur prohibitive communication overhead and suffer from severe scalability bottlenecks as model complexity and the number of modalities increase. In contrast, our work proposes a communication-efficient heterogeneous federated learning framework via multimodal joint alignment.

\subsection{Federated Prototype Learning}
% Federated prototype learning has been extensively explored across various tasks, where a prototype refers to the average feature vector of samples belonging to the same class \cite{huang2022learn,qi2025cross,liao2025federated}. In the Federated Learning literature, prototypes serve to abstract knowledge while strictly preserving privacy \cite{tan2022federated,zhao2024fedta}.
% Due to their high representational capacity \cite{huang2023rethinking,yi2023fedgh}, prototypes are widely adopted to enforce local regularization and enhance communication efficiency.
% \cite{fu2025federated,zhang2025fedpall} employs prototypes to mitigate issues caused by feature shift across domains. 
% Moreover, FedTGP \cite{zhang2024fedtgp} trains a set of well-separated prototypes on the server.

Federated prototype learning has been extensively investigated across diverse tasks, wherein a prototype is defined as the mean feature vector of instances belonging to a specific class \cite{huang2022learn,qi2025cross,liao2025federated}. Within the federated learning literature, prototypes serve as a vital mechanism to abstract knowledge while strictly preserving data privacy \cite{tan2022federated,zhao2024fedta}. 
Due to their robust representational capacity, prototypes are widely adopted to enforce local regularization and enhance communication efficiency \cite{huang2023rethinking,yi2023fedgh}. 
To mitigate the feature drift issue in FL, \cite{fu2025federated} proposed a federated domain-independent prototype learning approach, which achieves representation and parameter space alignment under feature shifts. Furthermore, FedPall \cite{zhang2025fedpall} leverages prototype-based adversarial learning to unify the feature space and employs collaborative learning to reinforce class-specific information within the features. Conventionally, local prototypes are simply aggregated into a global prototype via weighted averaging on the server, yielding sub-optimal global knowledge that negatively impacts client performance. To address this limitation, FedTGP \cite{zhang2024fedtgp} employs adaptive margin-enhanced contrastive learning to optimize trainable global prototypes at the server level.

However, the feature misalignment arising from multimodal heterogeneity is significantly more severe than standard domain shifts. Consequently, naive aggregation fails to bridge the divergent feature spaces generated by heterogeneous local models. To address this challenge, we propose a learnable prototype framework on the server. By distilling knowledge from client-side prototypes, our approach effectively aligns the heterogeneous feature spaces.
\begin{figure*}[th]
\centering
\includegraphics[width=1\textwidth]{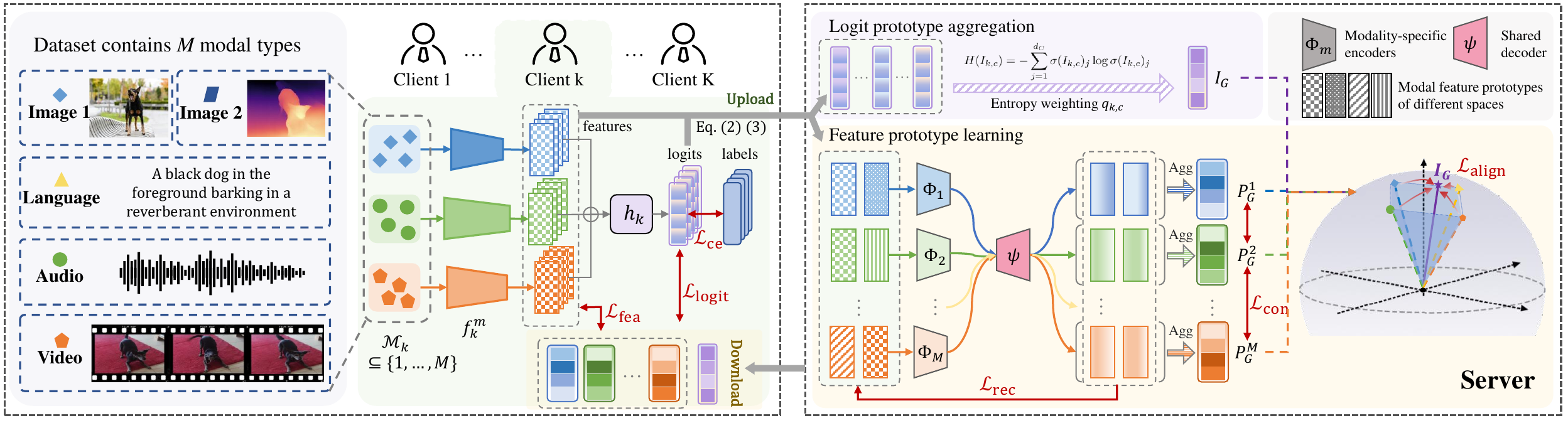}
\caption{The framework of the MFedPBA.}
\label{fig:framework}
\end{figure*}
% \section{Problem Definition}
%\vspace{-0.5cm}
\section{Preliminary}
\subsection{General multimodal FL Framework}
Consider a heterogeneous MFL problem. The system consists of a central server and $K$ clients. There are a total of $M$ modality types (e.g. image, video, text, and audio, etc.) and $C$ classes globally. The $k$-th client only possesses data from a subset of modalities $\mathcal{M}_k \subseteq \{1, \dots, M\}$, and has a combinatorial input space $\mathcal{X}_{\mathcal{M}_k} = (\mathcal{X}_{m} | \forall m \in \mathcal{M}_k)$, where $\mathcal{X}_{m} $ is the subspace associated with the modality type $m$.
The data distribution, quantity, and modality configuration of different clients are inconsistent.
For client $k$ and modality $m \in \mathcal{M}_k$, its local data is denoted as $\mathcal{D}_k = \{(\boldsymbol{x}_{k,i}, y_{k,i})\}_{i=1}^{N_k}$, where $y_{k,i} \in \{1, \dots, C\}$. 
%This dataset is sampled from \( \boldsymbol{x}_{k,i} \sim \mathcal{P}_k(\boldsymbol{x}) \) and $ y_{k,i} \sim \mathcal{Q}_k(y \mid \boldsymbol{x}_{k,i})$, 
%where \( \mathcal{P}_k \) is the client-specific input distribution over the combinatorial input space \( \mathcal{X}_{\mathcal{M}_k} \), and \( \mathcal{Q}_k \) is the conditional output distribution over the space \( \mathcal{Y}_k \). 
Each sample’s input consists of the modalities \( \boldsymbol{x}_{k,i} = ( \boldsymbol{x}_{k,i}^{m} )_{m \in \mathcal{M}_k} \) present as in \( \mathcal{M}_k \), where \( \boldsymbol{x}_{k,i}^{m} \) denotes the modality \( m \) in \( \boldsymbol{x}_{k,i} \).

Following previous conventions \cite{tan2022fedproto}, we split each client $k$’s model $\theta_k$ into a feature extractor $f_k$ parameterized by $\varphi_k$ and a classifier $h_k$ parameterized by $w_k$.
Each client is equipped with $|\mathcal{M}_k|$ modality specific feature extractors, denoted as $f_k^m: \mathcal{X}_{\mathcal{M}_k} \rightarrow \mathbb{R}^{d_D}$. The client possesses a single classifier, denoted as $h_k: \mathbb{R}^{d_D} \rightarrow \mathbb{R}^{d_C}$.
For inference or training, the client concatenates the feature vectors output by all modality specific feature extractors, and feeds this into the classifier $h_k$ to obtain the final prediction.
% Thus, the model for client $k$ can be expressed as $\mathcal{F}_k(\theta_{k})=f_k(\varphi_k) \circ h_k(w_k)$, $\circ$ denotes concatenation among model components.
The global objective of MFL is formulated as:
\begin{equation}
\label{eqb:obj}
\min_{\theta}\frac{1}{K}{\sum_{k=1}^{K}\frac{1}{N_k}\sum_{i=1}^{N_k}{\mathcal{L}_k(\theta_k)+\Omega(\theta_1,\theta_2,\cdots,\theta_K) }},
\end{equation} 
which aims to jointly optimize the local objectives of all clients $\min_{\theta_k} \mathcal{L}_k(\theta_k)= \mathbb{E}_{(\boldsymbol{x}, y) \sim \mathcal{D}_k} l\left(y, f_{k}(\boldsymbol{x}; \theta_k)\right)$, where $l(\cdot,\cdot)$ is the loss function, and Eq.~\eqref{eqb:obj} utilizes a central server to encourage privacy-preserving knowledge sharing schemes $\Omega(\cdot)$ among clients in order to improve the local model performance of each client.
\subsection{Prototype-based Federated Learning}
In contrast to conventional FL, which relies on aggregating model parameters, prototype-based FL achieves knowledge sharing by exchanging class prototypes between clients and the server, thereby offering a viable solution for scenarios involving model heterogeneity.

% \subsubsection{Embedding-based prototypes.}
\textbf{Feature-based prototypes.}
During the local training phase, each client $k$ computes its local prototype for each class $c$ of each modality, using the following method:
\begin{equation}
\label{eq:P_E}
E_{k,c}^{m} = \frac{1}{|\mathcal{D}_{k,c}^m|} \sum_{(\boldsymbol{x},y)\in \mathcal{D}_{k,c}^m} f_k^m({\boldsymbol{x}}),
\end{equation} 
% where $\mathcal{D}_{k,c}^m$ denotes the subset of the local dataset $\mathcal{D}_{k}$ consisting of all data samples belonging to class $c$.
where  $\mathcal{D}_{k,c}^m$ represents a subset of $\mathcal{D}_{k}^m$ for the $m$-th modality of the local dataset, containing all data samples belonging to class $c$.
Previous studies commonly upload local feature prototypes to the server and aggregate them via weighted averaging. However, in MFL, such direct aggregation is challenging due to heterogeneous feature extractors that project data into incompatible embedding spaces. 
% Statistical and model heterogeneity further cause substantial discrepancies in feature separability and prototype boundaries across clients. 
% Consequently, enforcing prototype aggregation under embedding misalignment not only limits performance gains but may also introduce knowledge contamination, leading to severe performance degradation compared to purely local training.

% \subsubsection{Logits-based prototypes.}
\textbf{Logit-based prototypes.}
In contrast to feature prototypes, which suffer from inherent space incompatibility across heterogeneous models, logit prototypes reside in a naturally aligned shared semantic space. Each dimension of a logit prototype directly corresponds to a specific class, thereby providing intrinsic semantic alignment across heterogeneous models. This property effectively avoids the collaborative degradation caused by inconsistent embedding spaces and enables more robust knowledge aggregation in heterogeneous federated learning scenarios.
Therefore, the prototype based on logits can be defined as the average of the logit vectors of all nodes under category $c$:
\begin{equation}
\label{eq:P_I}
I_{k,c} = \frac{1}{|\mathcal{D}_{k,c}^m|} \sum_{(\boldsymbol{x},y)\in \mathcal{D}_{k,c}^m} h_k(f_k^m({\boldsymbol{x}})).
\end{equation} 
Considering that logits naturally occupy a common semantic space irrespective of the underlying model architectures, the direct alignment of output-level logit prototypes emerges as a more rational approach compared to utilizing feature-level prototypes.

\subsection{Gromov Wasserstein Distance}
The Gromov-Wasserstein (GW) distance provides a rigorous framework for quantifying the discrepancy between two metric measure spaces \cite{memoli2011gromov}. The GW distance is particularly well-suited for heterogeneous settings where a direct correspondence between feature dimensions is unavailable, which facilitates alignment by comparing the intrinsic geometric or relational structures of the data \cite{saha2025fedpia}. Specifically, the GW framework evaluates the internal pairwise distance distributions within each space and seeks an optimal coupling (transport plan) that minimizes the relational distortion between these topologies.

Formally, let $(\mathcal{X}, C_X, \mathbf{p})$ and $(\mathcal{Y}, C_Y, \mathbf{q})$ be two metric measure spaces, where $C_X$, $C_Y$ denote the distance metrics of data $\mathcal{X}$ and $\mathcal{Y}$, respectively \cite{peyre2016gromov}. The $\mathbf{p},\mathbf{q}$ represent the probability measures associated with each domain. The GW distance is defined as:
\begin{equation}
\resizebox{0.91\hsize}{!}{$
\begin{aligned}
&GW_p(C_X, C_Y, \mathbf{p}, \mathbf{q}) = \\& \left( \min_{T \in \Pi(\mathbf{p}, \mathbf{q})} \sum_{i,j,k,l} L(C_X(i,k),\ C_Y(j,l))^p \, T_{ij} T_{kl} \right)^{1/p},
\end{aligned}
$}
\end{equation} 
where $\Pi(\mathbf{p}, \mathbf{q}) = \left\{ T \in \mathbb{R}_+^{n \times m} \,\middle|\, T \mathbf{1}_m = \mathbf{p},\ T^\top \mathbf{1}_n = \mathbf{q} \right\}$ is a joint distribution of all couplings from $\mathbf{p}$ to $\mathbf{q}$, and $p$ is the order of distance (commonly $p$=2).
% \begin{equation}
% \Pi(\mathbf{p}, \mathbf{q}) = \left\{ T \in \mathbb{R}_+^{n \times m} \,\middle|\, T \mathbf{1}_m = \mathbf{p},\ T^\top \mathbf{1}_n = \mathbf{q} \right\}
% \end{equation} 

\section{Methodology}
% The proposed MFedPBA mainly operates by having the server collect two prototypes for alignment. The framework of MFedPBA is shown in Figure \ref{fig:framework}, and details are provided below.
The proposed MFedPBA aligns prototypes by collecting two types at the server. The MFedPBA framework is depicted in Figure \ref{fig:framework}, with the following details.
\subsection{Global Prototype Modeling}
% After receiving the bilateral prototypes uploaded by all clients, the server constructs global prototypes at both the logit level and the feature level, respectively.
Each client first performs local training and then collects its local prototypes, denoted as $\{E_{k,c}^{m}\}_{k \in K}$ and $\{I_{k,c}\}_{k \in K}$, according to Eq. (\ref{eq:P_E}) and (\ref{eq:P_I}), and uploads them to the server. 
Upon receiving the bilateral prototypes from all clients, the server constructs global prototypes at both the logit level and the feature level, respectively.

% \subsubsection{Logit prototype agggnregation}
\textbf{4.1.1. Logit Prototype Aggregation}\\
Since logits are the final outputs of classifiers, they naturally reside in a semantically aligned shared representation space across heterogeneous models. Therefore, no additional processing is applied to the logit prototypes, and they are directly aggregated. 
Considering that data distribution discrepancies across clients lead to varying levels of predictive uncertainty in their logit prototypes, an entropy weighted aggregation strategy is adopted. Clients with lower predictive uncertainty, corresponding to lower entropy, are assigned higher weights, resulting in a more reliable global logit prototype. 
Accordingly, defining $\sigma(\cdot)$ as the softmax, the entropy of a logit prototype is defined as:
\begin{equation}
\label{eqb:H_shang}
H({I}_{k,c}) = -\sum_{j=1}^{d_C} \sigma({I}_{k,c})_j \log \sigma({I}_{k,c})_j.
\end{equation} 
% where $\sigma(\cdot)$ is softmax.
The server weights the values based on the inverse of the entropy, obtaining a global logit prototype for class $c$:
\begin{equation}
\label{eq:I_Gc}
\mathbf{I}_{G,c} = \sum_{k=1}^K q_{k,c} {I}_{k,c}, 
\quad 
q_{k,c} = \frac{\left( H({I}_{k,c}) + \epsilon \right)^{-1}}{\sum_{k'}^K \left( H({I}_{k',c}) + \epsilon \right)^{-1}},
\end{equation} 
where $\epsilon$ is a very small positive constant that avoids numerical collapse.

% \subsubsection{Feature prototype learning}
\textbf{4.1.2. Feature Prototype Learning}\\
To effectively capture modality-specific discriminative characteristics and facilitate cross-modal collaboration, the server introduces an auto-encoding framework composed of modality-specific encoders $\Phi_m(\cdot): \mathbb{R}^{d_D} \rightarrow \mathbb{R}^{{d_D}/2}$ and a shared decoder $\psi(\cdot): \mathbb{R}^{{d_D}/2} \rightarrow \mathbb{R}^{d_D}$. This projects feature prototypes of varying modalities and dimensionalities into a unified and comparable semantic space, while enforcing multiple constraints to achieve representation alignment across both clients and modalities. Therefore, we have:
\begin{equation}
P_{k,c}^{m} = \psi(\Phi_m(E_{k,c}^{m})).
\label{eq:projection}
\end{equation}
The modality-specific encoders project features of the same modality from different clients into a unified representation space, while the shared decoder enforces a common semantic structure during reconstruction. This promotes cross-modal consistency and constraining latent representations from different modalities within a shared and information-rich semantic space.
Consequently, global modality feature prototypes are derived through the weighted aggregation of these intra-modality aligned prototypes:
\begin{equation}
\label{eq:P_Gc}
\mathbf{P}_{G,c}^{m} = \frac{1}{\sum_{k} \mathbb{I}(m \in \mathcal{M}_k)} \sum_{k,m\in \mathcal{M}_k}P_{k,c}^{m},
\end{equation}
here $\mathbb{I}$ denotes the indicator function. 

To make the learned global modal prototypes more discriminative, the server updates the network parameters by minimizing the following joint loss function:\\
\textbf{Prototype Reconstruction Loss}: This loss ensures that the projection and reconstruction process preserves the essential information of the input prototypes: 
\begin{equation}
\mathcal{L}_{\text{rec}} =\frac{1}{|K|} \sum_{k,m,c} \|P_{k,c}^m-E_{k,c}^m\|^2.
\end{equation}
\\
\textbf{Inter-modal Contrastive Loss}: This loss enhances semantic consistency across modalities within the same class by pulling together projected features of different modalities for class $c$, while pushing apart features from different classes
\begin{equation}
\mathcal{L}_{\text{con}} = -\sum_{c} \sum_{m \neq m'} \log \frac{\exp\left( \text{sim}\!\left( \mathbf{P}_{G,c}^{m}, \mathbf{P}_{G,c}^{m'} \right) / \tau \right)}{\sum_{c'} \exp\left( \text{sim}\!\left( \mathbf{P}_{G,c}^{m}, \mathbf{P}_{G,c'}^{m'} \right) / \tau \right)},
\end{equation} 
where $\text{sim}(u,v)=u{\top}v/(\|u\|\|v\|)$, and $\tau$ is the temperature coefficient.\\
\textbf{Intra-modal Distribution Alignment Loss}: This loss aligns the structural consistency of the same class across different modalities using the GW distance. This loss does not require strict dimensional alignment but preserves the proportional relationships of inter-class distances between the feature space and the logit space, thereby maintaining topological consistency.

We compute the GW distance between the modality-specific feature prototypes $\mathbf{P}_{G,c}^{m}$ with the global logit prototypes $\mathbf{I}_{G,c}$ space. Within each metric space~\cite{memoli2011gromov}, we characterize the intrinsic geometric topology using the pairwise squared Euclidean distance, defined as 
$C_{i,j} = \|x_i - x_j\|_2^2$. 
Consequently, we derive the cost matrix $C_{\mathbf{I}_G} \in \mathbb{R}^{d_C \times d_C}$ and $C_{\mathbf{P}_{G}^{m}}\in \mathbb{R}^{d_C \times d_C}$ to represent the inter-class relational structures within the feature and logit spaces, respectively.
%Consequently, we derive the cost matrix $C_I \in \mathbb{R}^{d_C \times d_C}$ to represent the inter-class relational structure of the global logit prototypes, and $C_P\in \mathbb{R}^{d_C \times d_C}$ for the corresponding modality-specific feature prototypes. 
Therefore, this distance loss can be expressed as:
\begin{equation}
%\resizebox{0.9\hsize}{!}{$
\begin{aligned}
&\mathcal{L}_{\text{align}} = \sum_m \left( GW_2^2(C_{\mathbf{I}_G}, C_{\mathbf{P}_{G}^{m}}, \mathbf{p}, \mathbf{q})-\varepsilon H(T^m)\right)  
\\&=  \sum_{m} \min_{T^m \in \Pi(\mathbf{p}, \mathbf{q})} \sum_{i,j,k,l} \left( C_{\mathbf{I}_G}(i,k) - C_{\mathbf{P}_{G}^{m}}(j,l) \right)^2 \, T_{ij}^{m} T_{kl}^{m}\\&\quad +\sum_m \varepsilon \sum_{ij}T_{ij}^{m}\log T_{ij}^{m}.
\end{aligned}
%$}
\end{equation}
To solve this problem, we introduce entropy regularization $H(T)$ to make the problem differentiable, where $\varepsilon$
weights this regularization, and then use the Sinkhorn algorithm \cite{cuturi2013sinkhorn,sejourne2021unbalanced} to solve the optimal transmission problem of this regularization.

%Therefore, the overall optimization loss of the server can be expressed as:
Therefore, the server optimizes the global loss for $S$ rounds. The total loss is formulated as:
\begin{equation}
\label{eq:server_loss}
\mathcal{L}_{\text{server}} = \mathcal{L}_{\text{rec}}+\mathcal{L}_{\text{con}}+\mathcal{L}_{\text{align}}.
\end{equation} 
The refined client feature prototypes incorporate richer global semantics and modality-specific information. 
Therefore, the server constructs a global feature prototype for each modality $m$ by averaging the learned prototypes.
% \begin{equation}
% P_{c}^{m} = \frac{1}{K}\sum_{k=1}^{K} P_{k,c}^{m},
% \end{equation}

Finally, the server sends the global prototypes $\{\mathbf{P}_{G,c}^{m}\}_{m=1}^{M}$ and $\mathbf{I}_{G,c}$ to each client, transferring complementary cross-modal knowledge to compensate for the information deficiency in clients with missing modalities.
\subsection{Client Local Update}
Upon receiving the prototype set from the server, the objective of local training is to effectively distill knowledge from both the global feature prototypes and the global logit prototypes at the representation and decision levels, thereby maximally injecting global information into local representation learning. To this end, we propose a prototype-based supervised contrastive loss composed of two terms.

\textbf{Feature alignment loss} is introduced to mitigate feature space drift caused by model heterogeneity and biased local data distributions. Since the local feature extractor $f_k^m(\cdot)$ tends to overfit limited modal client data, we directly anchor local feature representations to the global feature prototype space, encouraging all clients to optimize toward a shared and consensus feature distribution. Specifically, we employ the mean squared error as the alignment metric, defined as
\begin{equation}
\mathcal{L}_{\text{fea}} = \frac{1}{|\mathcal{M}_k|}\sum_{m\in \mathcal{M}_k}\frac{1}{C}\sum_{c=1}^C\|\mathbf{P}_{G,c}^{m}-E_{k,c}^m\|_2^2.
\end{equation}
\textbf{Logit alignment loss} performs knowledge distillation at the decision level. The global logit prototypes $\mathbf{I}_{G,c}$ encode rich information about inter-class relationships and global decision structures. By enforcing the local classifier outputs to match the global logit prototypes, robust and transferable decision knowledge is distilled into local models. Accordingly, we measure the distribution discrepancy using the Kullback–Leibler (KL) divergence:
\begin{equation}
\mathcal{L}_{\text{logit}} = \frac{1}{C}\sum_{c=1}^C \mathbb{D}_{\text{KL}}(\mathbf{I}_{G,c}\|I_{k,c}),
\end{equation}
where $\mathbb{D}_{\text{KL}}(P\|Q)=\sum_iP(i)\ln(\frac{P(i)}{Q(i)})$.
Finally, to preserve discriminative capability on local data domains, we incorporate the standard cross-entropy loss between the predicted logits and ground-truth labels:
\begin{equation}
\mathcal{L}_{\text{ce}} = \sum_{(\boldsymbol{x}_{i}, y_{i}) \in \mathcal{D}_k}-\mathbf{1}_{y_i}\log(h_k(f_k^m({x_i}))),
\end{equation}
%where $\sigma(\cdot)$ is softmax.
%The total local loss as follows:
where $\sigma(\cdot)$ is the softmax function.
The total local loss is given as follows:
\begin{equation}
\label{eq:client_loss}
\mathcal{L}_{\text{client}} = \mathcal{L}_{\text{ce}}+\lambda_1\mathcal{L}_{\text{fea}}+\lambda_2\mathcal{L}_{\text{logit}}.
\end{equation} 
The overall MFL algorithm is shown in Algorithm \ref{alg:algorithm}.

\section{Convergence Analysis}
To analyze the convergence of MFedPBA, we define $t$ as the current communication round, $e \in \{0,1,\cdots,E\}$ as the number of local iterations, where $E$ denotes the maximum number of local iterations. Thus, $(tE+e)$ represents the $e$-th iteration in the $(t+1)$-th communication round. 
%The $(t+0)$ denotes that at the beginning of the $(t+1)$-th round, the client uses the global shared layers gradients from round $t$ to update the local shared layer parameters. Note that ${(tE+E)}$ corresponds to the last iteration in round ${(t+1)}$.
We make some assumptions see Appendix \ref{A.ass}.
Based on the above assumptions, we have the following lemmas and theorems:
%Based on the above assumptions, due to our same local training, Tan \textit{et al.} \cite{tan2022fedproto} and Yi \textit{et al.} \cite{yi2023fedgh} deduce that Lemma \ref{lemma:LocalTraining} and \ref{lemma:AfterAggregation} still holds. For notational simplicity, we set $\eta=\eta_\theta= \eta_w$.
\begin{lemma} 
\label{lemma:LocalTraining}
Based on Assumption \ref{assumption1} and \ref{assump:Unbiased}, in the local iteration $e \in \{0,1,...,E\}$ of the $(t+1)$-th training round, the local model loss of any client is bounded by:
\begin{equation}
% \small
\resizebox{0.8\hsize}{!}{$
\begin{split}
&\mathbb{E}\left[\mathcal{L}_{(t+1) E}\right] \leq \\&\mathcal{L}_{t E+0} -(\eta_c-\frac{L \eta^2_c}{2}) \sum_{e=0}^{E} \|\nabla \mathcal{L}_{tE+e} \|_2^2 + \frac{L E \eta_c^2}{2} \sigma^2.
\end{split}
$}
\end{equation}
\end{lemma}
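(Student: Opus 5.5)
The plan is the standard one-step descent argument for SGD under $L$-smoothness, as in FedProto-style analyses, telescoped over the $E$ local iterations of round $t+1$. We treat the local objective $\mathcal{L}$ of a fixed client as the total client loss in Eq.~\eqref{eq:client_loss}. During local training the global prototypes $\mathbf{P}_{G,c}^{m}$ and $\mathbf{I}_{G,c}$ are held fixed, so $\mathcal{L}$ is a fixed function of $\theta_k$ throughout the round, and Assumption \ref{assumption1} ($L$-smoothness) applies to it directly. The update is $\theta_{tE+e+1}=\theta_{tE+e}-\eta_c g_{tE+e}$, where $g_{tE+e}$ is the stochastic gradient on a mini-batch.

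\textbf{Single step.} By $L$-smoothness,
\[
\mathcal{L}_{tE+e+1}\le \mathcal{L}_{tE+e}-\eta_c\langle\nabla\mathcal{L}_{tE+e},g_{tE+e}\rangle+\tfrac{L\eta_c^2}{2}\|g_{tE+e}\|_2^2 .
\]
Take the conditional expectation given the iterate $\theta_{tE+e}$ and use Assumption \ref{assump:Unbiased}. Unbiasedness gives $\mathbb{E}[g]=\nabla\mathcal{L}$. The bounded variance $\sigma^2$ gives $\mathbb{E}\|g\|^2=\|\nabla\mathcal{L}\|^2+\mathbb{E}\|g-\nabla\mathcal{L}\|^2\le\|\nabla\mathcal{L}\|^2+\sigma^2$. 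Together these yield
\[
\mathbb{E}[\mathcal{L}_{tE+e+1}]\le\mathcal{L}_{tE+e}-\Big(\eta_c-\tfrac{L\eta_c^2}{2}\Big)\|\nabla\mathcal{L}_{tE+e}\|_2^2+\tfrac{L\eta_c^2}{2}\sigma^2 .
\]

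\textbf{Telescoping.} Sum this inequality over the local steps of the round and apply the tower property of expectation. The left side telescopes to $\mathbb{E}[\mathcal{L}_{(t+1)E}]-\mathcal{L}_{tE+0}$. The gradient terms accumulate into $\sum_e\|\nabla\mathcal{L}_{tE+e}\|^2$, which appears in expectation; following the paper's convention this expectation is left implicit. Each step adds one copy of $\tfrac{L\eta_c^2}{2}\sigma^2$, giving a total of $\tfrac{LE\eta_c^2}{2}\sigma^2$.

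\textbf{Index bookkeeping.} There are exactly $E$ steps, from $e=0$ to $E-1$. The stated gradient sum runs to $e=E$, one more term than the telescoping produces. Since $\eta_c-\tfrac{L\eta_c^2}{2}>0$ once $\eta_c<2/L$, subtracting this extra nonnegative gradient term is not automatically harmless. I would resolve it by the FedProto convention that the $E$-th iterate is the round's final local iterate before upload, so $(t+1)E$ coincides with step $tE+E$; the sum is then understood over the $E$ updates, and the variance term correctly counts $E$ steps.

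\textbf{Main obstacle.} The only genuinely delicate point is justifying that the loss is a single fixed smooth function within a round. This holds because the prototype-dependent terms $\mathcal{L}_{\text{fea}}$ and $\mathcal{L}_{\text{logit}}$ use prototypes that are constant during local training. They do still depend on $\theta_k$ through the local prototypes $E_{k,c}^m$ and $I_{k,c}$, so their smoothness must be covered by Assumption \ref{assumption1}. I would state explicitly that the assumption is imposed on the whole $\mathcal{L}_{\text{client}}$, not only on the cross-entropy term.
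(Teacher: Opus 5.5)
Your proposal is correct and is the same argument as the paper's: the $L$-smoothness descent inequality for one SGD step, unbiasedness together with the variance bound to get $\mathbb{E}\|g\|_2^2\le\|\nabla\mathcal{L}\|_2^2+\sigma^2$, and telescoping over the $E$ local steps. Your index check is more careful than the paper's, which telescopes $E$ steps but writes $\sum_{e=0}^{E}$; however, no convention about $(t+1)E$ versus $tE+E$ rescues the upper limit $E$ (already for $E=1$, $\sigma=0$, $\mathcal{L}(\theta)=\tfrac{L}{2}\|\theta\|_2^2$ the literal bound fails for every $\eta_c\in(0,2/L)$ with $\eta_c\neq 1/L$), so state and prove the version with $\sum_{e=0}^{E-1}$, which is what both arguments actually give.
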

\begin{lemma} 
\label{lemma:AfterAggregation}
After feature prototype learning and logit prototype aggregation are completed on the server, the loss function of any client can be constrained as follows:
\begin{equation}
\resizebox{0.76\hsize}{!}{$
\begin{split}
&{\Bbb E} [\mathcal{L}_{(t+1)E+0}] \leq \mathcal{L}_{(t+1)E}\\&+  (M\lambda_{1}+\lambda_{2})L_rE\eta_cG_c+2\lambda_{1}(\delta_{s}+MG_s).
\end{split}
$}
\end{equation}
\end{lemma}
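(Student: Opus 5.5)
The plan is to follow the FedProto/FedGH template. The only change to a client's objective between iteration $(t+1)E$ (end of local training) and $(t+1)E+0$ (start of the next round) is the replacement of the global prototypes $\{\mathbf{P}^m_{G,c}\}^{(t)}, \mathbf{I}^{(t)}_{G,c}$ by their updated versions $\{\mathbf{P}^m_{G,c}\}^{(t+1)}, \mathbf{I}^{(t+1)}_{G,c}$. The local parameters $\theta_k$ do not change, so $\mathcal{L}_{\text{ce}}$ cancels and
\begin{equation*}
\mathcal{L}_{(t+1)E+0}-\mathcal{L}_{(t+1)E}=\lambda_1\Delta_{\text{fea}}+\lambda_2\Delta_{\text{logit}},
\end{equation*}
where each $\Delta$ is the change of the corresponding regularizer caused only by the change in its global prototype argument. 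I will bound the two differences separately, using the assumptions in Appendix \ref{A.ass}: Lipschitz continuity of the embedding/logit maps with constant $L_r$, bounded gradients $G_c$ on the client and $G_s$ on the server, and a bounded server-side discrepancy $\delta_s$ of the learned projector $\psi\circ\Phi_m$.

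\textbf{Feature term.} For each modality I expand $\|\mathbf{P}^{m,(t+1)}_{G,c}-E_{k,c}^m\|^2-\|\mathbf{P}^{m,(t)}_{G,c}-E_{k,c}^m\|^2$ and bound it by $\|\mathbf{P}^{m,(t+1)}_{G,c}-\mathbf{P}^{m,(t)}_{G,c}\|$ times a bounded factor, or via $\|a\|^2-\|b\|^2\le\|a-b\|^2+2\langle a-b,b\rangle$. The prototype change splits into two contributions. The first is the change of the client embeddings $E^m_{k,c}$ over $E$ local steps. Lipschitzness gives $\|E^{(t+1)}-E^{(t)}\|\le L_r\sum_e\eta_c\|\nabla\|\le L_rE\eta_cG_c$, and summing over at most $M$ modalities (averaged by $1/|\mathcal{M}_k|\le 1$, bounded crudely by $M$) yields $M\lambda_1L_rE\eta_cG_c$. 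The second is the change from the server's $S$ optimization steps on $\mathcal{L}_{\text{server}}$ in Eq.~\eqref{eq:server_loss}. I will bound the movement of $\psi(\Phi_m(\cdot))$ by $\delta_s$ (reconstruction/projection discrepancy) plus a per-modality gradient-step bound $G_s$. This gives $2\lambda_1(\delta_s+MG_s)$, with the factor $2$ coming from the cross term in the square expansion.

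\textbf{Logit term.} $\mathbf{I}_{G,c}$ is a convex combination (Eq.~\eqref{eq:I_Gc}) of the $I_{k,c}$, so its change is at most $\max_k\|I^{(t+1)}_{k,c}-I^{(t)}_{k,c}\|\le L_rE\eta_cG_c$ by the same local-drift argument. I then bound the KL change by a Lipschitz bound of the KL in its first argument over the bounded logit domain, absorbing constants into $L_r$. Since the weights $q_{k,c}$ also change, I will use convexity rather than bounding their derivatives.

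\textbf{Main obstacle.} The hard step is controlling the server-side change of $\mathbf{P}_{G,c}^m$ through the nonconvex GW/contrastive optimization. There is no contraction available, so I will simply assume (as $\delta_s,G_s$ encode) bounded gradients of $\mathcal{L}_{\text{server}}$ and a bounded projector discrepancy, and take expectation over the stochastic local steps. Collecting the terms gives $\mathbb{E}[\mathcal{L}_{(t+1)E+0}]\le\mathcal{L}_{(t+1)E}+(M\lambda_1+\lambda_2)L_rE\eta_cG_c+2\lambda_1(\delta_s+MG_s)$.
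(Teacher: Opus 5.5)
Your reduction (parameters frozen, $\mathcal{L}_{\text{ce}}$ cancels, split into a feature-prototype change and a logit-prototype change) matches the paper, as does your drift estimate $L_rE\eta_cG_c$ per modality. The server term $2\lambda_1(\delta_s+MG_s)$, however, is read off the target rather than derived.

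In Assumption~\ref{assump:Server}, $\delta_s$ and $G_s$ are gradient bounds: $\mathbb{E}\|\nabla\mathcal{L}_{\text{server}}\|^2\le\delta_s^2$ and $\mathbb{E}\|\nabla\mathcal{L}_{\text{m}}\|^2\le G_s^2$, with $\mathcal{L}_{\text{m}}=\mathcal{L}_{\text{con}}+\mathcal{L}_{\text{align}}$. They are not a projector discrepancy, and a gradient bound alone says nothing about where $\mathbf{P}^m_{G,c}$ sits. Tracking the movement of $\psi\circ\Phi_m$ over the $S$ server steps would give factors like $S\eta_s$ times a parameter-Lipschitz constant of the projector, not $\delta_s+MG_s$.

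The missing idea is to route the difference through the means $\mathbf{E}_{t+1},\mathbf{E}_{t+2}$ of the uploaded prototypes:
\[
\|\mathbf{P}_{G,t+2}-\mathbf{P}_{G,t+1}\|\le\|\mathbf{P}_{G,t+2}-\mathbf{E}_{t+2}\|+\|\mathbf{E}_{t+2}-\mathbf{E}_{t+1}\|+\|\mathbf{E}_{t+1}-\mathbf{P}_{G,t+1}\|.
\]
The middle term is your drift term. For the outer two, the paper uses that the reconstruction part of the server gradient is proportional to the prototype--mean gap, $\nabla\mathcal{L}_{\text{server}}=M(\mathbf{P}_G-\mathbf{E}_{\text{avg}})+M\nabla\mathcal{L}_{\text{m}}$. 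Hence $\mathbb{E}\|\mathbf{P}_G-\mathbf{E}_{\text{avg}}\|\le\delta_s/M+G_s$, and summing over the $M$ modalities gives $\delta_s+MG_s$ for each outer term. The factor $2$ simply counts the two outer terms (current and previous round).

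Your sketch has no mechanism that yields $\delta_s+MG_s$ rather than $M(\delta_s+G_s)$; in the paper this asymmetry comes from the $1/M$. Also, a factor $2$ from a cross term would multiply the drift contribution as well. So the constants you state do not follow from the mechanism you describe.

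Two further steps fail as written.

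\emph{Squared norms.} With squares, $\|a\|^2-\|b\|^2\le\|a-b\|^2+2\|a-b\|\,\|b\|$ introduces $\|b\|$, the embedding-to-prototype distance, which no assumption bounds, plus a quadratic term. The paper's proof instead replaces the KL by the MSE surrogate of Eq.~\eqref{eq:local} and then writes both regularizers as unsquared distances. The reverse triangle inequality $\|a-b\|-\|a-c\|\le\|b-c\|$ then gives coefficient exactly one; the stated constants depend on that choice.

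\emph{Changing logit weights.} Convexity does not handle them: $\sum_k q'_{k,c}I'_{k,c}-\sum_k q_{k,c}I_{k,c}$ contains $\sum_k(q'_{k,c}-q_{k,c})I_{k,c}$. That term is governed by the spread between clients' logit prototypes, not by $L_rE\eta_cG_c$. The paper holds $q_k$ fixed across the two rounds, and you would need to do the same or carry an extra term. Likewise, absorbing a KL Lipschitz constant into $L_r$ silently changes Assumption~\ref{assump:LContinuity}.
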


%Based on Lemma~\ref{lemma:LocalTraining} and Lemma \ref{lemma:AfterAggregation},  we can derive the model nonconvex convergence rate.
Based on Lemma~\ref{lemma:LocalTraining} and Lemma \ref{lemma:AfterAggregation},  we can further derive the following theorems.
\begin{theorem} 
\label{theorem:One-round}
The above assumptions, the expectation of the loss of an arbitrary client's local model before the start of a round of local iteration satisfies:
\begin{equation}
\resizebox{0.89\hsize}{!}{$
\begin{split}
\mathbb{E}[\mathcal{L}_{(t+1) E+0}]  &\leq \mathcal{L}_{tE+0} -(\eta_c-\frac{L \eta_c^2}{2}) \sum_{e=0}^{E} \|\nabla \mathcal{L}_{tE+e} \|_2^2  \\ &+ \frac{L E \eta_c^2}{2} \sigma^2 + \Gamma_{d}E\eta_c+\Gamma_{s}.
\end{split}
$}
\end{equation}
where $\Gamma_{\mathrm{d}}=(M\lambda_{1}+\lambda_{2})L_rG_c$ and $\Gamma_{\mathrm{s}}=2\lambda_{1}(\delta_{s}+MG_s)$ represent the drift constant and server bias constant.
\end{theorem}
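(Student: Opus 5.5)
The plan is to chain Lemma~\ref{lemma:LocalTraining} and Lemma~\ref{lemma:AfterAggregation}. Within a single round, the loss evolves in two stages. First, the $E$ local SGD iterations take the model from $\mathcal{L}_{tE+0}$ to $\mathcal{L}_{(t+1)E}$. Second, the server produces new global prototypes. The local objective $\mathcal{L}_{\text{client}}$ depends on these prototypes through $\mathcal{L}_{\text{fea}}$ and $\mathcal{L}_{\text{logit}}$, so this update changes the loss from $\mathcal{L}_{(t+1)E}$ to $\mathcal{L}_{(t+1)E+0}$. Each lemma bounds exactly one of these stages.

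Concretely, start from Lemma~\ref{lemma:AfterAggregation}:
\[
\mathbb{E}[\mathcal{L}_{(t+1)E+0}] \le \mathcal{L}_{(t+1)E} + (M\lambda_1+\lambda_2)L_rE\eta_cG_c + 2\lambda_1(\delta_s+MG_s).
\]
Its right-hand side contains the random quantity $\mathcal{L}_{(t+1)E}$. Take expectations over the stochastic gradients of round $t+1$, conditioning on the state at $tE+0$. Since the two additive constants are deterministic, the tower property gives $\mathbb{E}[\mathcal{L}_{(t+1)E+0}] \le \mathbb{E}[\mathcal{L}_{(t+1)E}] + \Gamma_dE\eta_c + \Gamma_s$. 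Here we use the abbreviations $\Gamma_d=(M\lambda_1+\lambda_2)L_rG_c$ and $\Gamma_s=2\lambda_1(\delta_s+MG_s)$.

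Next, substitute the bound from Lemma~\ref{lemma:LocalTraining} for $\mathbb{E}[\mathcal{L}_{(t+1)E}]$:
\[
\mathcal{L}_{tE+0} - \Bigl(\eta_c-\tfrac{L\eta_c^2}{2}\Bigr)\sum_{e=0}^{E}\|\nabla\mathcal{L}_{tE+e}\|_2^2 + \tfrac{LE\eta_c^2}{2}\sigma^2 .
\]
Adding the two bounds gives exactly the claimed inequality. Algebraically, the only step is regrouping the constants into $\Gamma_d E\eta_c+\Gamma_s$.

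The only delicate point is the handling of expectations. Lemma~\ref{lemma:LocalTraining} is stated with the gradient norms outside the expectation. I would interpret the gradient-norm sum as an expectation conditioned on the start of the round, or equivalently take the full expectation of both sides, and keep the theorem's notation as the paper does. I would also note that the server step is independent of the client's last-iteration noise once the uploaded prototypes are fixed, so composing the two conditional bounds is legitimate. Beyond this bookkeeping, no further estimate is needed.
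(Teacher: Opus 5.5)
Your proposal is correct and follows the paper's proof essentially verbatim. The paper splits the one-round change into a client phase and a server phase, substitutes Lemma~\ref{lemma:LocalTraining} into the right-hand side of Lemma~\ref{lemma:AfterAggregation}, and regroups the constants as $\Gamma_d E\eta_c+\Gamma_s$; the proof of Lemma~\ref{lemma:AfterAggregation} itself already ends in the form $\mathbb{E}[\mathcal{L}_{(t+1)E+0}]-\mathbb{E}[\mathcal{L}_{(t+1)E}]\le\cdots$ that you derive. Your conditioning remarks are, if anything, more careful than the paper, which also leaves the gradient-norm sum outside the expectation, and your independence remark about the server step is not needed, since Lemma~\ref{lemma:AfterAggregation} already bounds the expected increment.
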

\begin{theorem} 
\label{theorem:non-convex}
The above assumptions, for arbitrary client and any $\epsilon>0$, if learning rate $\eta_c < \min \left\{ \frac{2}{L}, \frac{2(\epsilon - \Gamma_{\mathrm{d}} E)}{L(\epsilon + E\sigma^2)} \right\}$ and $\Gamma_{\mathrm{s}}\rightarrow0$, the following inequality holds:
\begin{equation}
% \small
\begin{aligned}
\frac{1}{T}  \sum_{t=0}^{T-1} \sum_{e=0}^{E}& \mathbb{E}\left[\left\|\mathcal{L}_{t E+e}\right\|_2^2\right]  \leq\frac{2\left(\mathcal{L}_{t=1}-\mathcal{L}^*\right)}{T \eta_c\left(2-L \eta_c\right)}\\& \quad+\frac{LE\eta_c^2\sigma^2+2(\Gamma_{d}E\eta_c+\Gamma_{s})}{2 \eta_c-L \eta_c^2}.
%\leq \epsilon.
%			\text { s.t. } \eta & < \frac{2 \epsilon-K \omega^2}{K\left(\epsilon+E \sigma^2\right)}.
\end{aligned}
\end{equation}
\end{theorem}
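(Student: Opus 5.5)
The plan is a standard telescoping argument built on Theorem~\ref{theorem:One-round}. That theorem already gives the one-round descent
\begin{equation*}
\mathbb{E}[\mathcal{L}_{(t+1)E+0}] \leq \mathcal{L}_{tE+0} -\Bigl(\eta_c-\tfrac{L \eta_c^2}{2}\Bigr) \sum_{e=0}^{E} \|\nabla \mathcal{L}_{tE+e} \|_2^2 + \tfrac{L E \eta_c^2}{2} \sigma^2 + \Gamma_{d}E\eta_c+\Gamma_{s}.
\end{equation*}

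First, take total expectation on both sides and rearrange so that the gradient-norm sum is isolated on the left. Since $\eta_c<2/L$, the coefficient $\eta_c - L\eta_c^2/2 = \eta_c(2-L\eta_c)/2$ is strictly positive, so we may divide by it. This gives, for each round $t$,
\begin{equation*}
\sum_{e=0}^{E}\mathbb{E}\|\nabla \mathcal{L}_{tE+e}\|_2^2 \le \frac{2\bigl(\mathbb{E}[\mathcal{L}_{tE+0}]-\mathbb{E}[\mathcal{L}_{(t+1)E+0}]\bigr)}{\eta_c(2-L\eta_c)} + \frac{LE\eta_c^2\sigma^2+2(\Gamma_d E\eta_c+\Gamma_s)}{2\eta_c-L\eta_c^2}.
\end{equation*}

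Second, sum over $t=0,\dots,T-1$. The loss differences telescope to $\mathcal{L}_{0}-\mathbb{E}[\mathcal{L}_{TE+0}]$. We then lower bound $\mathbb{E}[\mathcal{L}_{TE+0}]$ by the optimal value $\mathcal{L}^*$, which requires the loss to be bounded below; I would take this from the appendix assumptions or note it explicitly. The constant term simply accumulates $T$ times. Dividing by $T$ then yields the stated inequality, with $\mathcal{L}_{t=1}$ read as the initial loss.

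Third, justify the learning-rate condition, which ensures the bound can be made at most $\epsilon$. The initial-gap term vanishes as $T\to\infty$, and $\Gamma_s\to 0$ by hypothesis. It therefore suffices that
\begin{equation*}
\frac{LE\eta_c\sigma^2+2\Gamma_d E}{2-L\eta_c}<\epsilon.
\end{equation*}
Rearranging gives
\begin{equation*}
LE\eta_c\sigma^2+L\epsilon\eta_c<2\epsilon-2\Gamma_dE,
\end{equation*}
which is $\eta_c<\frac{2(\epsilon-\Gamma_dE)}{L(\epsilon+E\sigma^2)}$. This matches the stated condition, up to absorbing the factor $E$ in $\sigma^2$, so I would check the constant convention there.

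The main obstacle is bookkeeping rather than conceptual. First, the index ranges must be consistent: $e$ runs from $0$ to $E$, and the gradient at $e=E$ is counted together with the aggregation step. Second, the left-hand side of the statement writes $\|\mathcal{L}_{tE+e}\|$, which should be read as the gradient norm. Third, the step $\mathcal{L}_{TE}\ge\mathcal{L}^*$ requires lower boundedness, which must be supplied from the assumptions.
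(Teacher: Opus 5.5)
Your proposal is correct and follows essentially the same route as the paper: isolate the gradient sum from Theorem~\ref{theorem:One-round} using $\eta_c<2/L$, take expectations, telescope over $T$ rounds against $\mathcal{L}^*$, and then obtain the step-size condition by requiring the residual term to be below $\epsilon$. The only difference is that the paper first solves the full quadratic inequality in $\eta_c$ with $\Gamma_s$ present and then lets $\Gamma_s\to 0$, whereas you set $\Gamma_s=0$ directly; your condition $\eta_c<\frac{2(\epsilon-\Gamma_dE)}{L(\epsilon+E\sigma^2)}$ already matches the statement exactly, so no absorption of $E$ into $\sigma^2$ is needed.
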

With this, it is evident that the local model of any client of MFedPBA converges at a non-convex convergence rate $\mathcal{O}\left(\frac{1}{T}\right)$. See Appendix \ref{AS:ConvA} for a detailed proof.

\section{Experiments}
\subsection{Experimental Setup}
%\subsubsection{Datasets}
\textbf{Datasets and Model:}
We used four benchmark datasets for experiments:
% Caltech101,
% Reuters,
% NUS-WIDE,
% Youtube \cite{fei2004learning,lewis2004rcv1,chua2009nus}.
Caltech101 \footnote{https://data.caltech.edu/records/mzrjq-6wc02.},
Reuters \footnote{https://www.kaggle.com/datasets/nltkdata/reuters.},
NUS-WIDE \footnote{https://www.kaggle.com/datasets/xinleili/nuswide.}, and Youtube \footnote{http://archive.ics.uci.edu/ml/datasets.}.
The data information is shown in Table \ref{tab:Dataset}.
All datasets are divided into training and test sets in a ratio of 75\%/25\%.

%\vspace{-0.2cm}
\begin{table}[htbp]
\centering  
\caption{Dataset statistics. Acronyms for modality types: I (Image), L (Language), A (Audio), T (Time-series), X1 (Style-one X), X2 (Style-two X). The $\#$S: Samples, $\#$C: Classes, $\#$M: Modalities.} 
\label{tab:Dataset}  
\resizebox{1\columnwidth}{!}{
% \setlength{\tabcolsep}{3mm}
%		\scalebox{1}{
\begin{tabular}{c|c|c|c|c|c}  
	\toprule[1pt]
	Datasets&Types & $\#$ S &$\#$ C &$\#$ M  &Input modalities\cr
	\midrule	
	Caltech101 & \{I\} &9144 &102  &3 &\{I1\}, \{I2\}, and \{I3\}		
	\cr 
	\midrule 
	Reuters &\{L\} &18758 &6  &5 &
	\makecell{\{L1\}, \{L2\}, \{L3\} \\  \{L4\}, and \{L5\}	}
	\cr 	
	\midrule         
	NUS-WIDE&\{I, L\} &5000 &10  &4 &
	\makecell{\{I1\}, \{I2\}, \{I3\}, and \{L\}	}		
	\cr 
	\midrule 
	Youtube&\{I, A, T\} &2000 &10 &6 &
	\makecell{\{I1\}, \{I2\}, \{T\} \\  \{A1\}, \{A2\}, and \{A3\}	}
	\cr 	   
	\bottomrule[1pt]
\end{tabular}     
}
%		\begin{tablenotes}
%			\footnotesize
%			{\item[1]$\#$S: Samples, $\#$C: Classes, $\#$M: Modalities. }
%		\end{tablenotes}
\end{table}
To model heterogeneous MFL scenarios, under the assumption that the number of clients $K$ equals the number of modalities $M$, we consider three client data distribution settings: “M2”, where each client possesses two modalities; “M1+”, where each client has at least one modality; and “M1”, where each client contains exactly one modality. 
Using the YouTube dataset as an example, the corresponding data distributions are illustrated in Figure \ref{fig:data_distribution}.
We assign heterogeneous models to each modality of every client to simulate realistic model heterogeneity. Detailed descriptions of the local dataset partitions and the corresponding neural architecture configurations are provided in Appendix \ref{AS:Dataset Description} \& \ref{AS:Heterogeneous Model}.
\begin{figure}[htbp]
	\centering
	\includegraphics[width=0.32\linewidth]{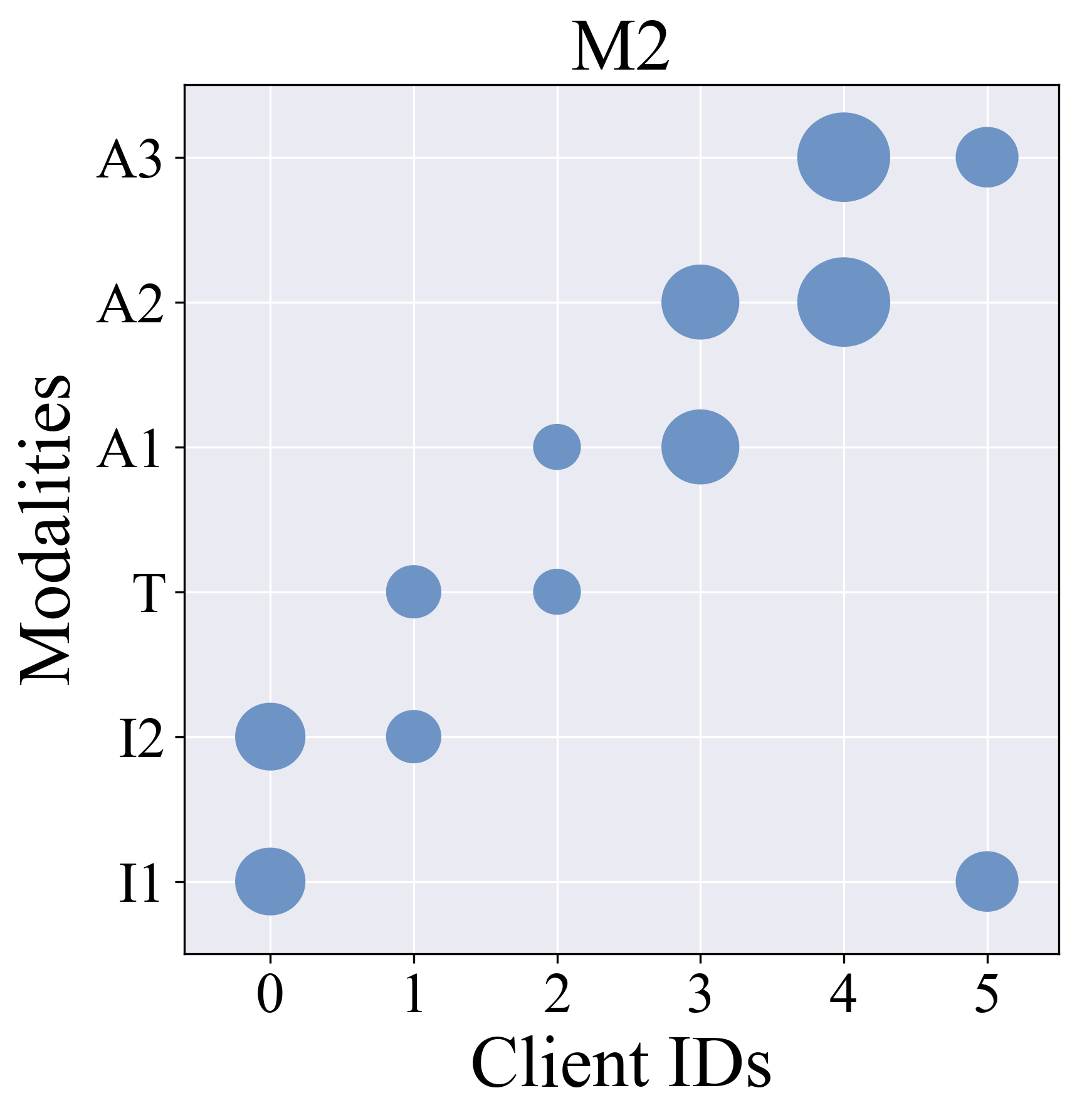}
	\includegraphics[width=0.32\linewidth]{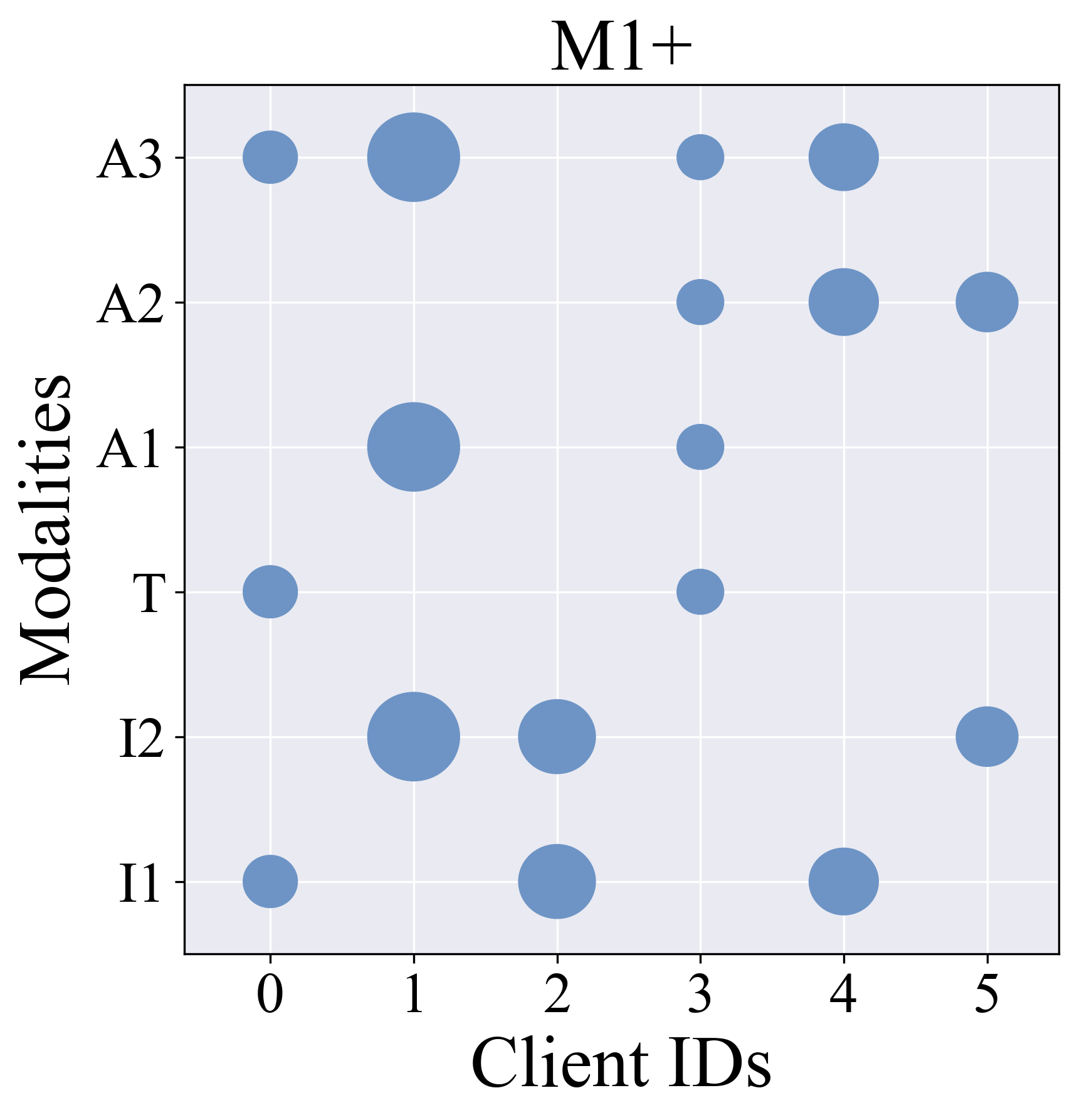}
	\includegraphics[width=0.32\linewidth]{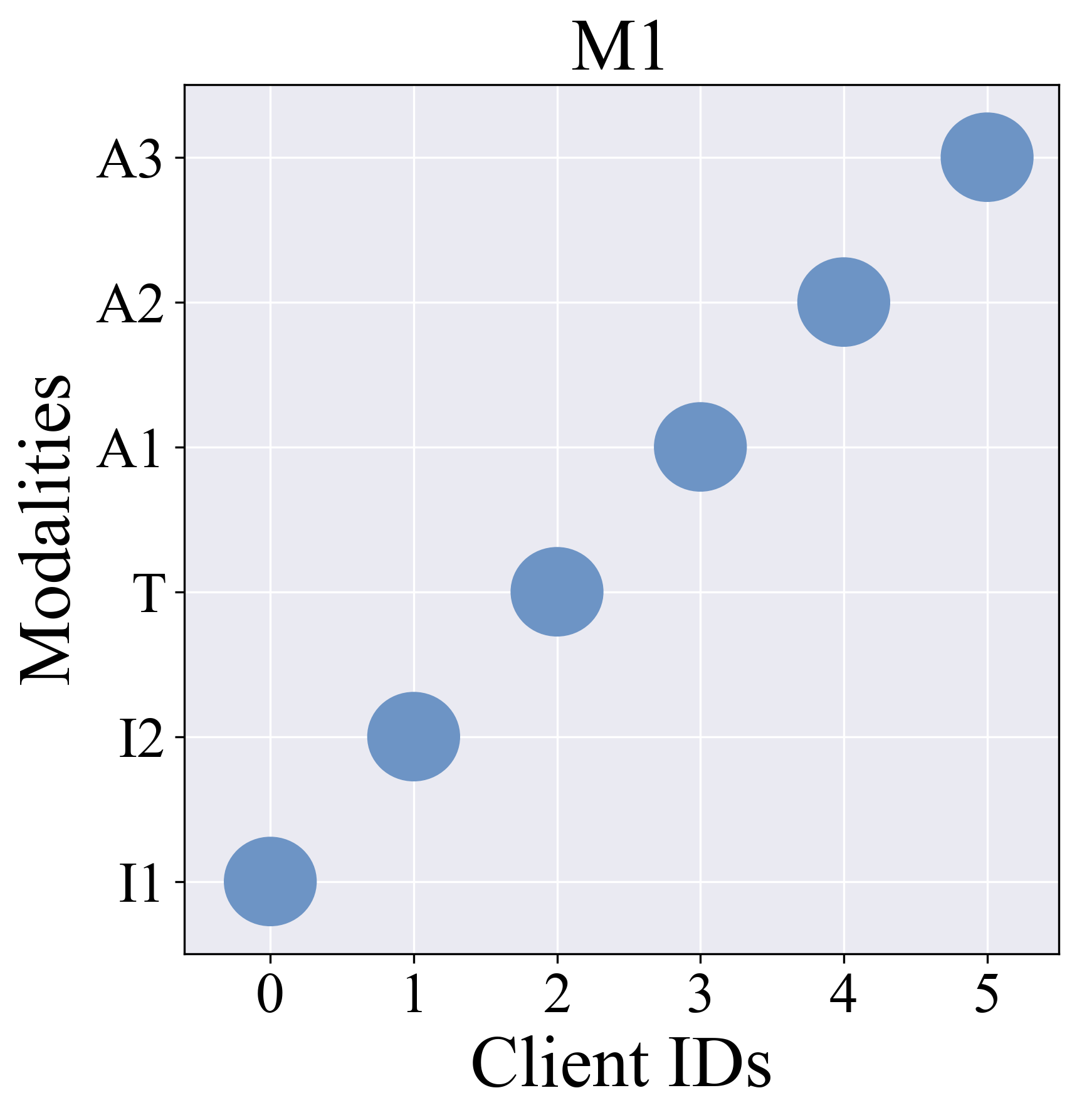}
	\caption{Example of dataset distribution partitioning.}
	 \label{fig:data_distribution}
\end{figure}

%\begin{table*}[htbp]
%	\centering  
%	% \renewcommand\arraystretch{1.2}
%	% \resizebox{1\columnwidth}{!}{
%		\caption{Dataset statistics. Acronyms for some modality types: I (Image), L (Language), A (Audio), T (Time-series), X1 (Style-one X), X2 (Style-two X)} 
%		\label{tab:Dataset}  
%		% \setlength{\tabcolsep}{3mm}
%%		\scalebox{1}{
%			\begin{tabular}{c|c|c|c|c|c}  
	%				\toprule[1pt]
	%				Datasets& $\#$ Samples &$\#$ Classes  &Types &$\#$ Modalities   &Input modalities\cr
	%				\midrule	
	%				Caltech101 &8677 &102 & \{I\} &3 &\{I1\}, \{I2\}, and \{I3\}		
	%				\cr 
	%				\midrule 
	%				Reuters &18758 &6 &\{L\} &5 &\{L1\}, \{L2\}, \{L3\}, \{L4\}, and \{L5\}\cr 	
	%				\midrule         
	%				NUS-WIDE &5000 &10 &\{I, L\} &6 &\{I1\}, \{I2\}, \{I3\}, \{I4\}, \{I5\}, and \{L\}			
	%				\cr 
	%				\midrule 
	%				Youtube &2000 &10 &\{I, A, T\}&6 & \{I1\}, \{I2\}, \{T\}, \{A1\}, \{A2\}, and \{A3\}		
	%				\cr 	   
	%				\bottomrule[1pt]
	%			\end{tabular}     
%%		}
%\end{table*}

\begin{table*}[ht]
\centering
\caption{Comparison of average performance of different methods in simulations with different data distributions. M\#: Data partition (K=M); 	K\#: Number of large clients (in M1+).
	Best and second-best are \textbf{bold} and \underline{underlined} respectively.}
\label{tab:main}
\resizebox{\linewidth}{!}{
	\begin{tabular}{c|c||c|ccc|ccc|c}
		\toprule
		\multicolumn{2}{c||}{Dataset} & Local & FedProto & FedTGP & FedPall & Harmony & FedMVP & FedMobile & \textbf{MFedPBA} \\
		\midrule
		\multirow{4}{*}{\centering{Caltech101}} 
		& M2 &  42.65$_{\pm 1.47}$ & 38.72$_{\pm 0.42}$  & 40.78$_{\pm 1.31}$ & 42.29$_{\pm 0.27}$ & \underline{45.29$_{\pm 0.38}$} & 40.23$_{\pm 0.65}$ & 43.51$_{\pm 0.54}$ & \textbf{49.04}$_{\pm 0.21}$ \\
		&M1+ & 40.83$_{\pm 1.12}$ & 36.81$_{\pm 0.35}$ & 40.71$_{\pm 1.23}$ & 41.8$_{\pm 0.77}$ & \underline{43.04$_{\pm 0.64}$} & 38.91$_{\pm 1.25}$ & 42.29$_{\pm 0.6}$ & \textbf{46.82}$_{\pm 0.54}$\\
		&M1 &  38.47$_{\pm 0.45}$ & 31.57$_{\pm 1.23}$ & 36.18$_{\pm 0.72}$ & \underline{40.77$_{\pm 0.36}$} & 40.1$_{\pm 0.69}$  & 38.92$_{\pm 0.62}$ & 40.09$_{\pm 0.13}$ & \textbf{43.64}$_{\pm 0.11}$\\
		&K50 &29.38$_{\pm 1.63}$ & 25.17$_{\pm 1.78}$ & 28.77$_{\pm 1.43}$ & 30.59$_{\pm 1.26}$ & 30.58$_{\pm 0.65}$ & 27.74$_{\pm 1.5}$ & \underline{30.88$_{\pm 0.81}$} & \textbf{31.19}$_{\pm 0.92}$ \\
		\midrule
		% \centering {Reuters}
		\multirow{4}{*}{\centering{Reuters}} 
		&M2 &  70.61$_{\pm 0.28}$  & 73.39$_{\pm 0.57}$ & 69.73$_{\pm 0.88}$ & 73.08$_{\pm 1.04}$ & 72.18$_{\pm 0.41}$ & \underline{74.07$_{\pm 1.08}$} & 73.19$_{\pm 0.26}$ & \textbf{75.16}$_{\pm 0.45}$\\
		&M1+ & 70.85$_{\pm 0.59}$ & 68.54$_{\pm 1.46}$ & 69.8$_{\pm 0.59}$  & 71.09$_{\pm 1.56}$ & 72.28$_{\pm 0.47}$ & 68.09$_{\pm 0.82}$ & \underline{72.79$_{\pm 0.44}$} & \textbf{75.01}$_{\pm 0.22}$\\
		&M1  & 67.05$_{\pm 1.04}$ & 68.75$_{\pm 0.63}$ & 65.81$_{\pm 0.33}$ & 66.96$_{\pm 0.22}$ & 70.16$_{\pm 1.14}$ & \underline{70.18$_{\pm 0.57}$} & 69.92$_{\pm 0.95}$ & \textbf{72.09}$_{\pm 0.79}$\\
		&K100 &52.88$_{\pm 1.08}$ & 46.48$_{\pm 0.93}$ & 48.31$_{\pm 1.41}$ & 56.55$_{\pm 0.53}$ & 53.96$_{\pm 0.51}$ & 52.28$_{\pm 1.53}$ & \underline{56.64$_{\pm 0.29}$} & \textbf{57.83}$_{\pm 0.97}$\\
		\midrule
		\multirow{4}{*}{\centering{NUS-WIDE}} 
		&M2 & 32.18$_{\pm 0.64}$ & 29.28$_{\pm 0.46}$ & 31.19$_{\pm 0.51}$ & 33.38$_{\pm 0.91}$ & 32.97$_{\pm 0.28}$ & 31.89$_{\pm 0.4}$ & \underline{33.86$_{\pm 0.22}$} & \textbf{35.2}$_{\pm 0.46}$\\
		&M1+ & 28.72$_{\pm 0.71}$ & 27.79$_{\pm 0.55}$ & 27.13$_{\pm 0.18}$ & 30.99$_{\pm 0.63}$ & \underline{31.56$_{\pm 0.61}$} & 29.82$_{\pm 0.59}$ & 30.95$_{\pm 0.57}$ & \textbf{32.15}$_{\pm 0.36}$\\
		&M1 & 31.15$_{\pm 0.14}$ & 24.96$_{\pm 0.52}$ & 29.55$_{\pm 0.73}$ & 31.02$_{\pm 0.96}$ & 32.19$_{\pm 0.35}$ & \underline{32.7$_{\pm 0.51}$}  & 31.2$_{\pm 0.36}$  & \textbf{33.25}$_{\pm 0.24}$\\
		&K20 &24.52$_{\pm 0.5}$ & 24.47$_{\pm 0.95}$ & 24.15$_{\pm 0.37}$ & 26.35$_{\pm 0.44}$ & 26.28$_{\pm 0.41}$ & 26.61$_{\pm 0.86}$ & \underline{27.34$_{\pm 0.37}$} & \textbf{28.08}$_{\pm 0.37}$\\
		\midrule
		\multirow{4}{*}{\centering{Youtube}} 
		&M2 &41.72$_{\pm 0.66}$ & 41.12$_{\pm 0.68}$ & 45.16$_{\pm 1.02}$ & 44.23$_{\pm 0.3}$ & 43.8$_{\pm 0.76}$  & \underline{45.53$_{\pm 0.52}$} & 44.83$_{\pm 0.53}$ & \textbf{47.92}$_{\pm 0.93}$\\
		&M1+ &41.89$_{\pm 0.76}$ & 38.82$_{\pm 1.12}$ & \underline{44.55$_{\pm 0.68}$} & 43.04$_{\pm 1.09}$ & 43.47$_{\pm 0.91}$ & 44.24$_{\pm 0.36}$ & 44.28$_{\pm 0.64}$ & \textbf{47.21}$_{\pm 0.59}$\\
		&M1 &40.01$_{\pm 0.41}$ & 36.59$_{\pm 0.39}$ & 38.62$_{\pm 0.64}$ & 39.15$_{\pm 1.5}$ & \underline{41.07$_{\pm 0.42}$} & 40.04$_{\pm 0.46}$ & 39.48$_{\pm 0.99}$ & \textbf{44.52}$_{\pm 0.54}$\\
		&K20 &32.01$_{\pm 0.25}$ & 30.57$_{\pm 1.49}$ & 34.15$_{\pm 0.31}$ & 32.59$_{\pm 0.47}$ & 33.5$_{\pm 0.49}$  & 31.12$_{\pm 0.77}$ & \underline{34.22$_{\pm 0.63}$} & \textbf{35.07}$_{\pm 0.92}$\\
		\bottomrule
	\end{tabular}
}
\end{table*}

%\begin{table*}[t]
%	\centering
%	\caption{Large Client. 
%		Best and second-best are \textbf{bold} and \underline{underlined} respectively.}
%	\label{tab:C_large}
%	\renewcommand{\arraystretch}{1.1}
%	\resizebox{\linewidth}{!}{
%		\begin{tabular}{c||c|ccc|ccc|cc}
	%			\toprule
	%			\multicolumn{1}{c||}{Dataset} & Local & FedProto & FedTGP & FedPall & Harmony & FedMVP & FedMobile & \textbf{MFedPBA} \\
	%			\midrule
	%			\centering{Caltech101 (K100)}
	%			&49.04$_{\pm 0.53}$ & 45.23$_{\pm 0.31}$ & 48.17$_{\pm 1.22}$ & 52.19$_{\pm 1.38}$ & 51.23$_{\pm 0.99}$ & 50.06$_{\pm 1.75}$ & 51.84$_{\pm 1.28}$ & \textbf{53.67}$_{\pm 0.76}$\\
	%			\midrule
	%			\centering {Reuters (K100)}
	%			&  \\
	%			\midrule
	%			\centering {NUS-WIDE (K20)}
	%			& \\
	%			\midrule
	%			\centering {Youtube (K20)}
	%			& \\
	%			\bottomrule
	%		\end{tabular}
%	}
%\end{table*}
%\vspace{-0.5cm}
\textbf{Baselines:}
%\subsubsection{Baseline Methods}
To ensure fair comparison, we re-implemented all baseline methods within a unified HtFLlib framework \cite{Zhang2025htfllib} and evaluated them under identical experimental settings. Specifically, we include representative prototype-based federated learning methods FedProto \cite{tan2022fedproto}, FedTGP \cite{zhang2024fedtgp}, and FedPall \cite{zhang2025fedpall}, as well as state-of-the-art multimodal federated learning approaches Harmony \cite{ouyang2023harmony}, FedMVP \cite{che2024leveraging}, and FedMobile \cite{liu2025fedmobile}, with local training (Local) serving as a reference baseline. For consistency, Local and all prototype-based methods adopt the same local multimodal aggregation strategy as our approach. Moreover, to accommodate model heterogeneity, methods that originally require uploading local models are uniformly modified to upload classifier parameters instead. Details of all baselines are provided in {Appendix \ref{AS:Baseline Methods}}.

\textbf{Implementation details:} 
% All methods were tested 5 times to report average performance and standard deviation. 
To ensure reliability, all methods are evaluated across five independent trials, and we report the mean performance alongside the standard deviation. SGD is adopted as the optimizer uniformly, with the number of local training epochs set to 2. Additionally, to accommodate varying dataset scales, the total number of global communication rounds is configured adaptively for each dataset.
The details are provided in Appendix \ref{AS:Parameter Setting Details}.

\begin{figure*}[t]
\centering 
\includegraphics[width=0.9\textwidth]{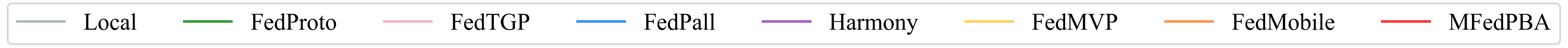}
\includegraphics[width=0.23\textwidth]{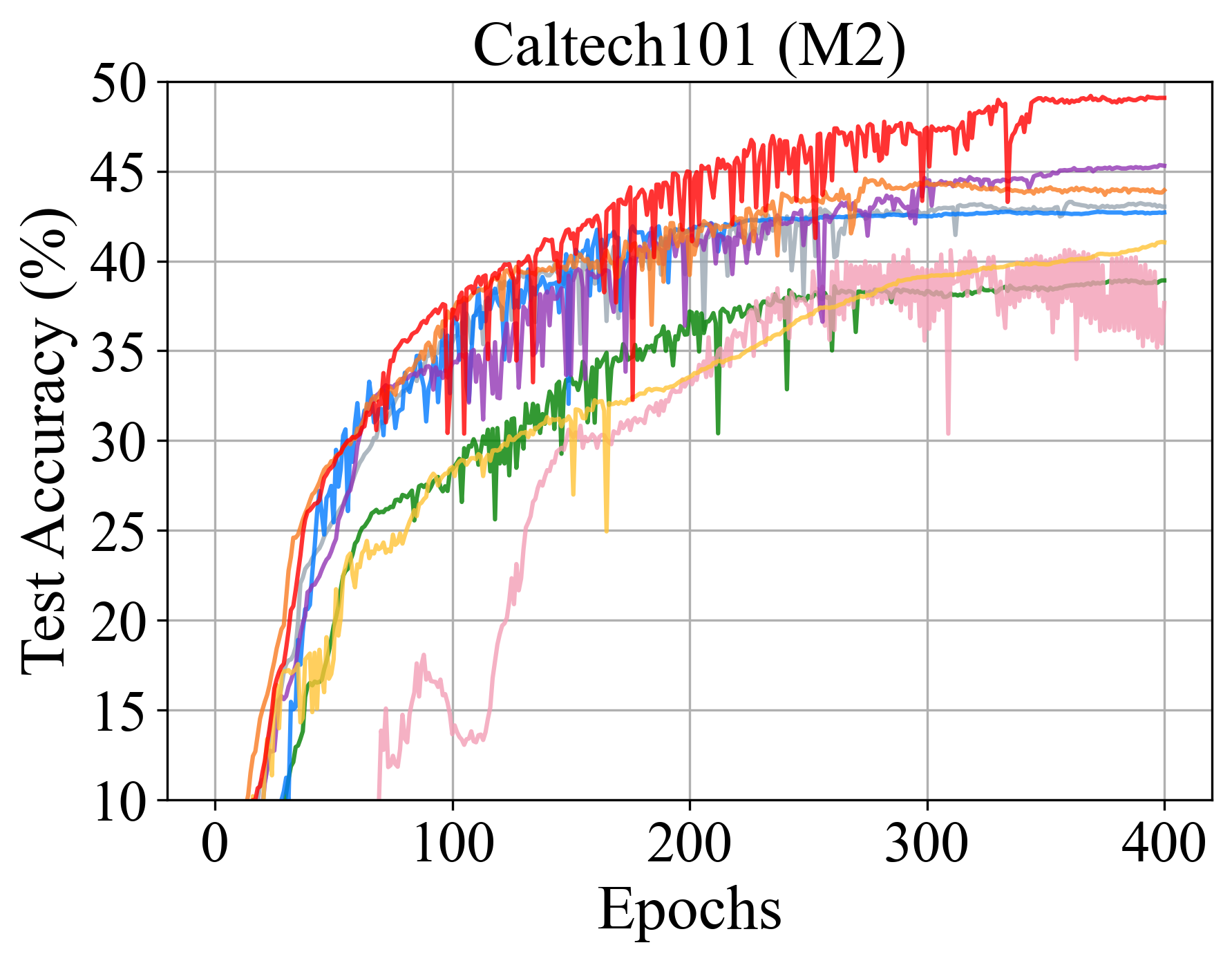}
\includegraphics[width=0.23\textwidth]{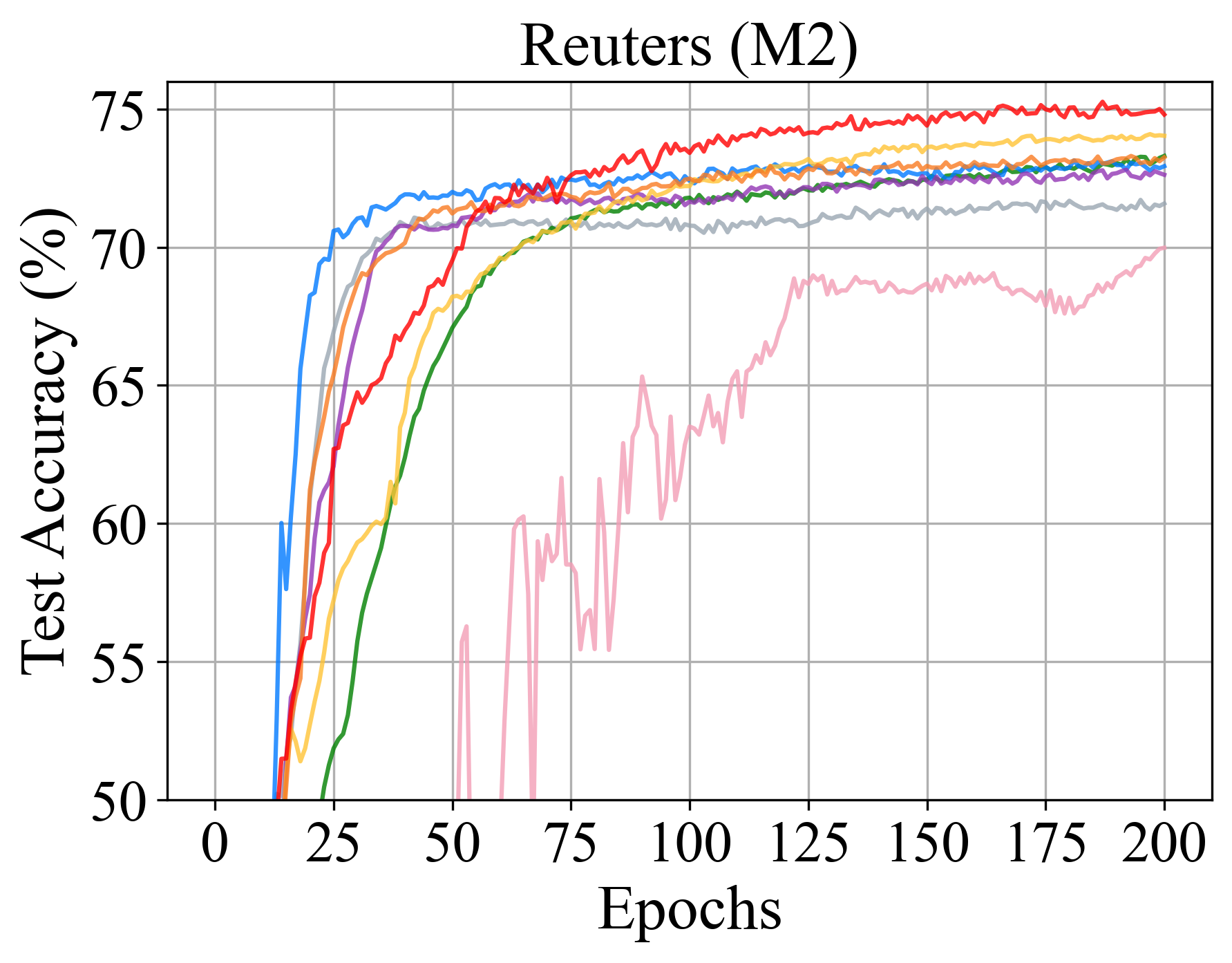}
\includegraphics[width=0.23\textwidth]{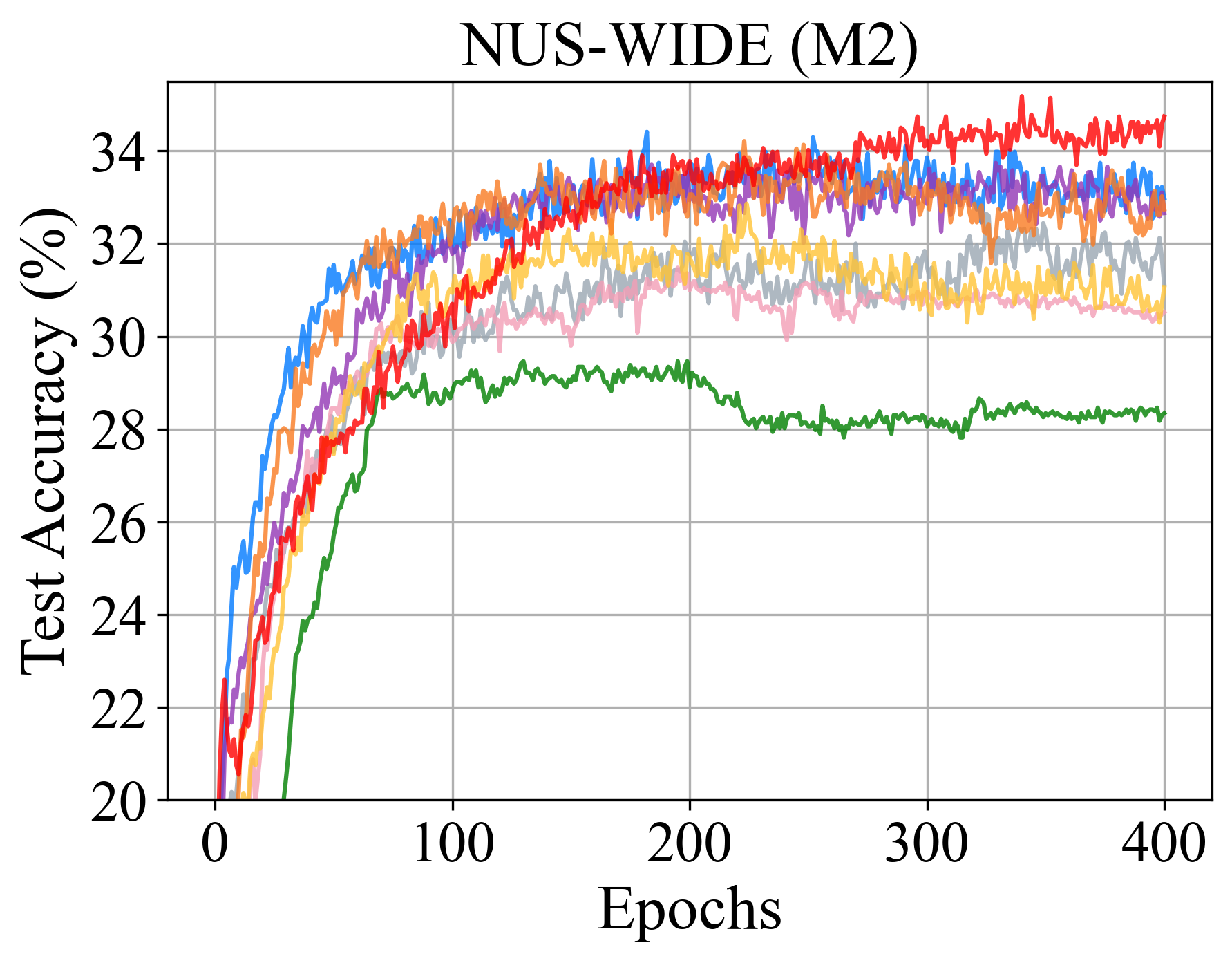}
\includegraphics[width=0.23\textwidth]{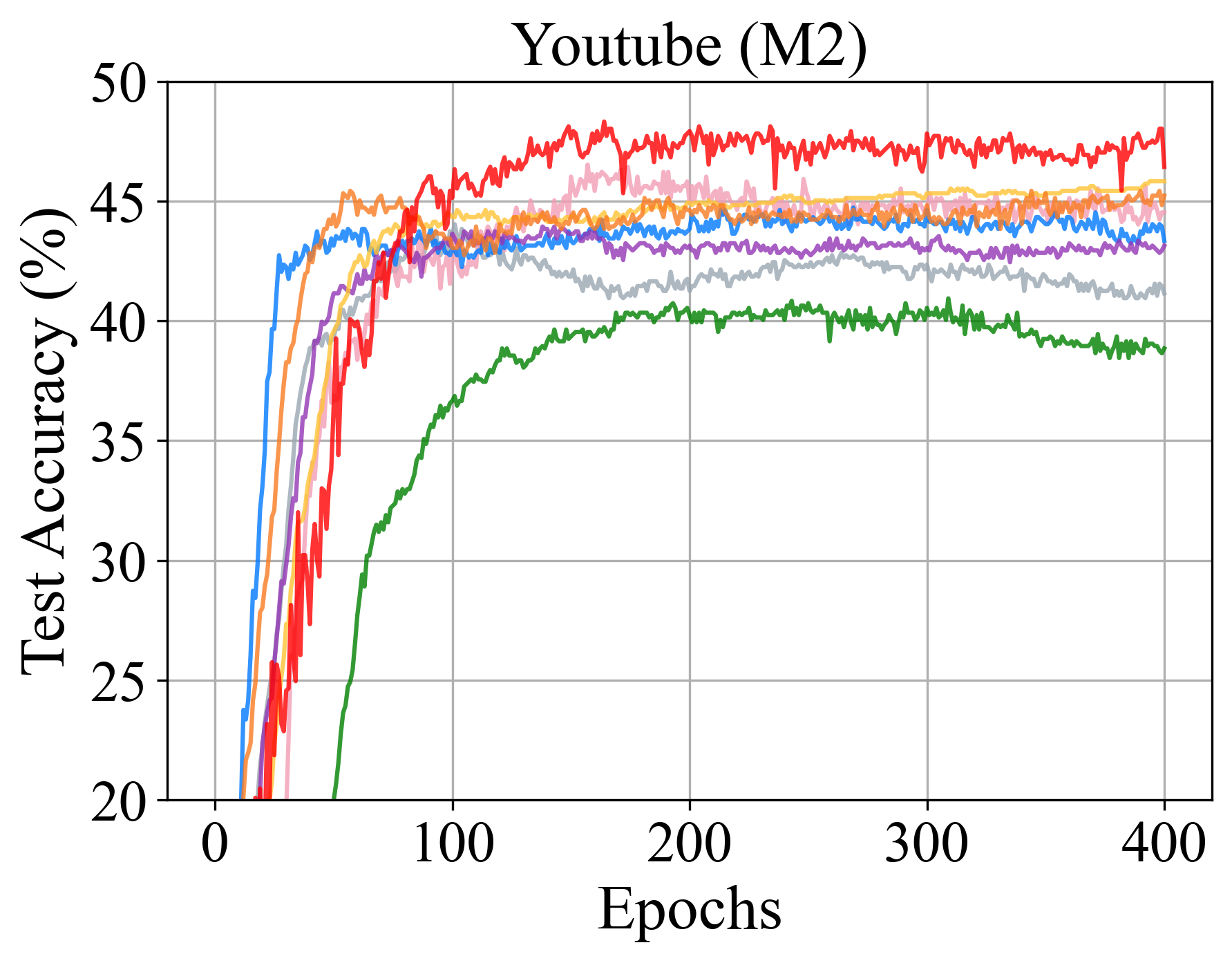}
\caption{Test accuracy (\%) on four datasets under the M2 data distribution setting with model heterogeneity.}
\label{fig: conv}
\end{figure*}

\subsection{Experimental Result}
\textbf{Test Accuracy.}
Table \ref{tab:main} reports the test accuracies of comparative methods on four multimodal datasets under the M2, M1+, and M1 partition settings. MFedPBA consistently achieves superior performance in all scenarios, with substantial improvements validating the efficacy of the dual-prototype bilateral alignment framework. 

To further investigate feature space, we visualize the feature prototypes $E_k^m$ of three-modal clients in the Reuters dataset using t-SNE \cite{van2008visualizing}, as shown in Figure \ref{fig:feature_tsne}. Despite sharing identical class labels, features extracted from different modalities are clearly separated due to heterogeneous encoders, corroborating our motivation on feature space incompatibility. Compared with Local training, our method exhibits more compact and well-structured clusters, indicating more effective representation learning.
Moreover, in several settings, FedProto underperforms Local training, further confirming that naively aggregating misaligned prototypes under model heterogeneity can introduce knowledge contamination and lead to severe performance degradation rather than improvement.
% Notably, in settings where clients possess only a single modality, although the server cross-modal contrastive loss is not applied, the Wasserstein loss continues to facilitate cross-client intra-modality alignment, enabling our method to flexibly accommodate diverse modality availability.
%Moreover, the fact that FedProto is surpassed by local training in several configurations further substantiates our motivation.
%Specifically, forcing prototype aggregation in heterogeneous settings can lead to knowledge contamination and severe performance degradation due to the incompatible embedding spaces generated by disparate encoders. 

Furthermore, we extended experiment to a larger number of clients to assess the scalability of the proposed framework. As summarized in Table \ref{tab:main} (K \#Client number), the experimental results demonstrate that MFedPBA consistently maintains its superior performance even in large-scale FL settings.
Detailed discussions and per-client performance results are provided in {Appendix \ref{AS:Experimental Result}.

\begin{figure}[htbp]
\centering
\includegraphics[width=0.48\linewidth]{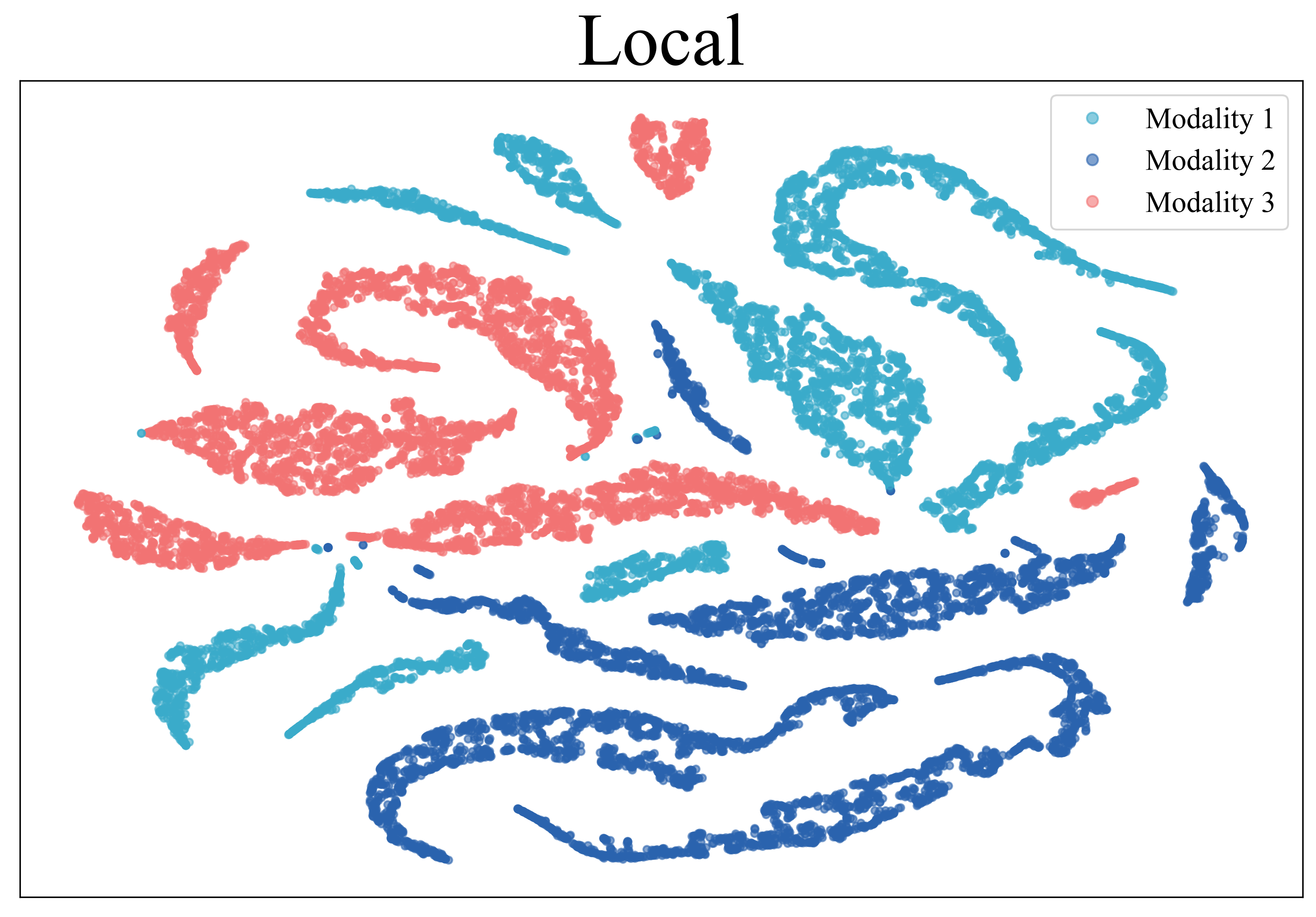}
\includegraphics[width=0.48\linewidth]{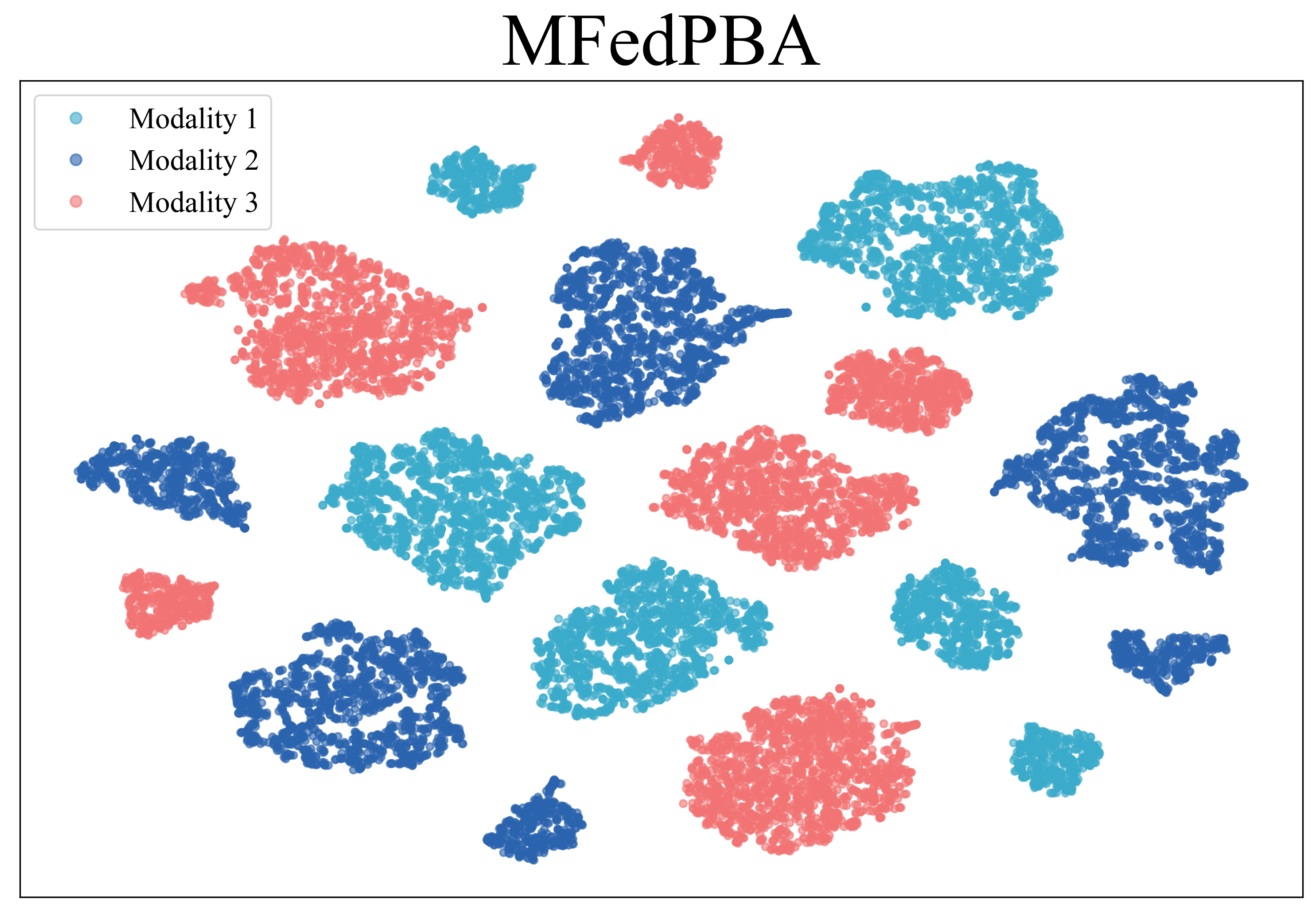}
\caption{Feature visualization results of Local and MFedPBA.}
\label{fig:feature_tsne}
\end{figure}

\textbf{Communication Efficiency.}
Theoretically, prototype-based FL significantly reduces communication overhead compared to methods that transmit full model parameters. The standard bidirectional communication cost for FedProto is $K2CM_kd_D$, our framework introduces only a marginal increase for transmitting logit prototypes, bringing the total complexity to $K2C(M_kd_D + d_C)$. Since the number of classes is typically several orders of magnitude smaller than the total model parameters, this additional overhead remains negligible. Consequently, our approach maintains high communication efficiency without imposing substantial costs relative to methods requiring parameters.

% Figure \ref{fig: conv} illustrates the performance curve of various methods under the M2 configuration, confirming the steady convergence of our proposed framework. 
We plotted the performance curves of various methods across four datasets under the M2 configuration, as shown in Figure \ref{fig: conv}, thereby confirming the stable convergence of our proposed framework.
We observe that prototype-based baselines, particularly FedTGP, exhibit pronounced fluctuations during the training process. This instability stems from the erratic server prototype updates in multimodal scenarios, which hinder the acquisition of discriminative representations. In contrast, our feature prototypes are regularized by a multi-faceted objective function, allowing them to consolidate global semantic knowledge while preserving critical modality-specific discriminability. 

\textbf{Computational Cost.}
Compared to the baseline FedProto, the incremental client overhead is limited to the KL divergence between local and global logit prototypes, yielding a marginal complexity of $\mathcal{O}(C^2)$.
On the server, logit aggregation follows a linear complexity of $\mathcal{O}(KCd_C)$, while feature-level operations encompass autoencoder mapping at $\mathcal{O}(\sum_k|\mathcal{M}_k|C{d_D}^2)$,  reconstruction loss calculation at $\mathcal{O}(\sum_k|\mathcal{M}_k|Cd_D)$, cross-modal contrastive loss at $\mathcal{O}(CM^2d_D)$, and intra-modality structural alignment at $\mathcal{O}(\mathcal{T}_sMC^2)$, where $\mathcal{T}_s$ is the number of iterations of the Sinkhorn algorithm. Although these complexities scale with the class count $C$, they remain manageable within the scope of standard classification tasks.
Crucially, our framework strategically offloads the primary computational burden to the resource-rich server, leveraging its superior processing power to facilitate high-precision alignment without straining resource-constrained clients.
Figure \ref{fig:Time} shows the running time of the client in each round on Reuters, further demonstrating that our computational overhead is acceptable.
%\begin{figure}[htbp]
%	\centering
%	\includegraphics[width=0.45\linewidth]{fig/time2.png}
%	\includegraphics[width=0.45\linewidth]{fig/time2.png}
%	\label{fig:times}
%\end{figure}
\begin{figure}[h]
\hspace{-0.3cm}
\begin{minipage}[t]{0.5\linewidth}
	\centering
	\includegraphics[width=1\textwidth]{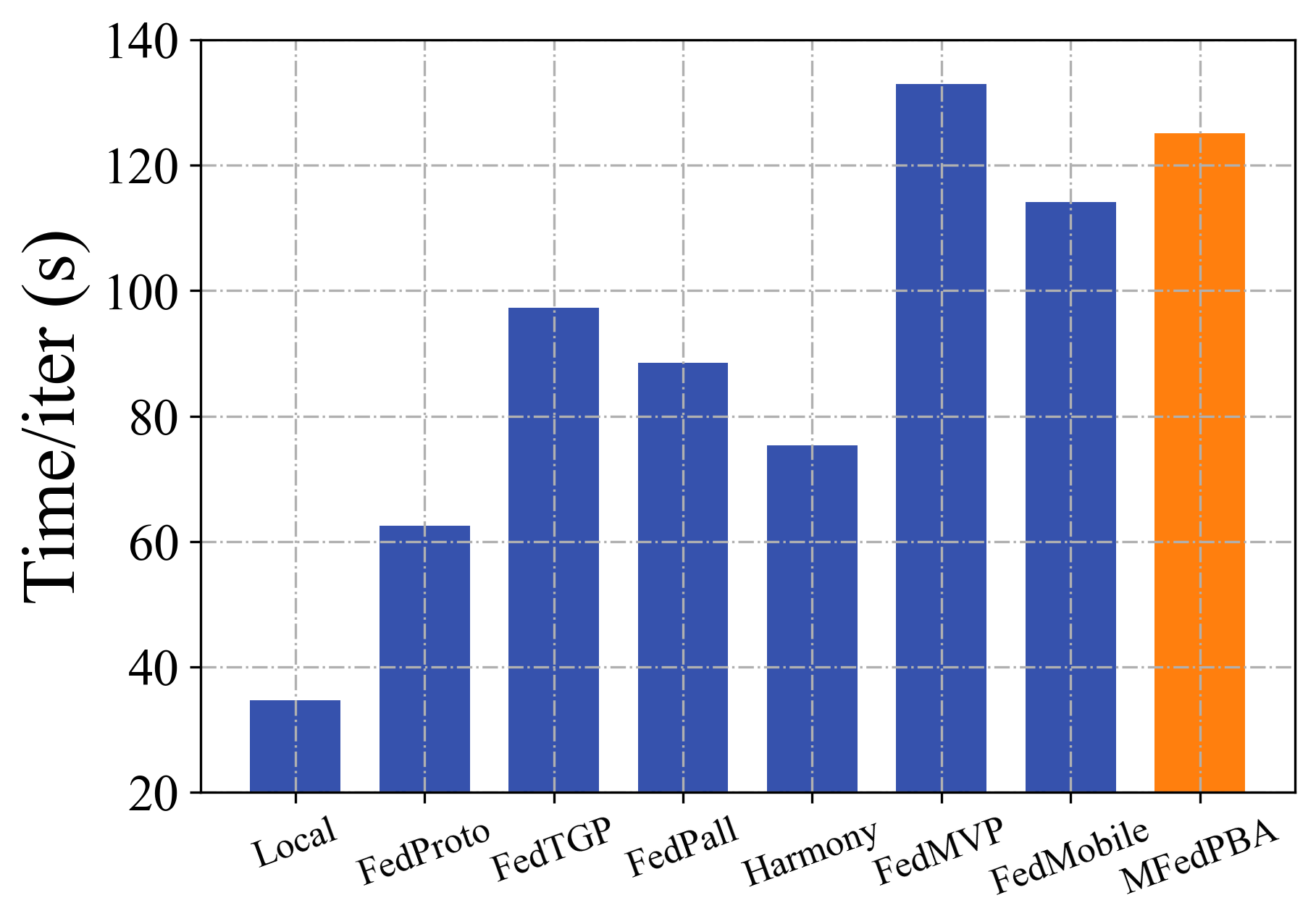} 
	\caption{Time consumption.}
	\label{fig:Time}
\end{minipage}
\hspace{-0.15cm}
\begin{minipage}[t]{0.5\linewidth}
	\centering
	\includegraphics[width=1\textwidth]{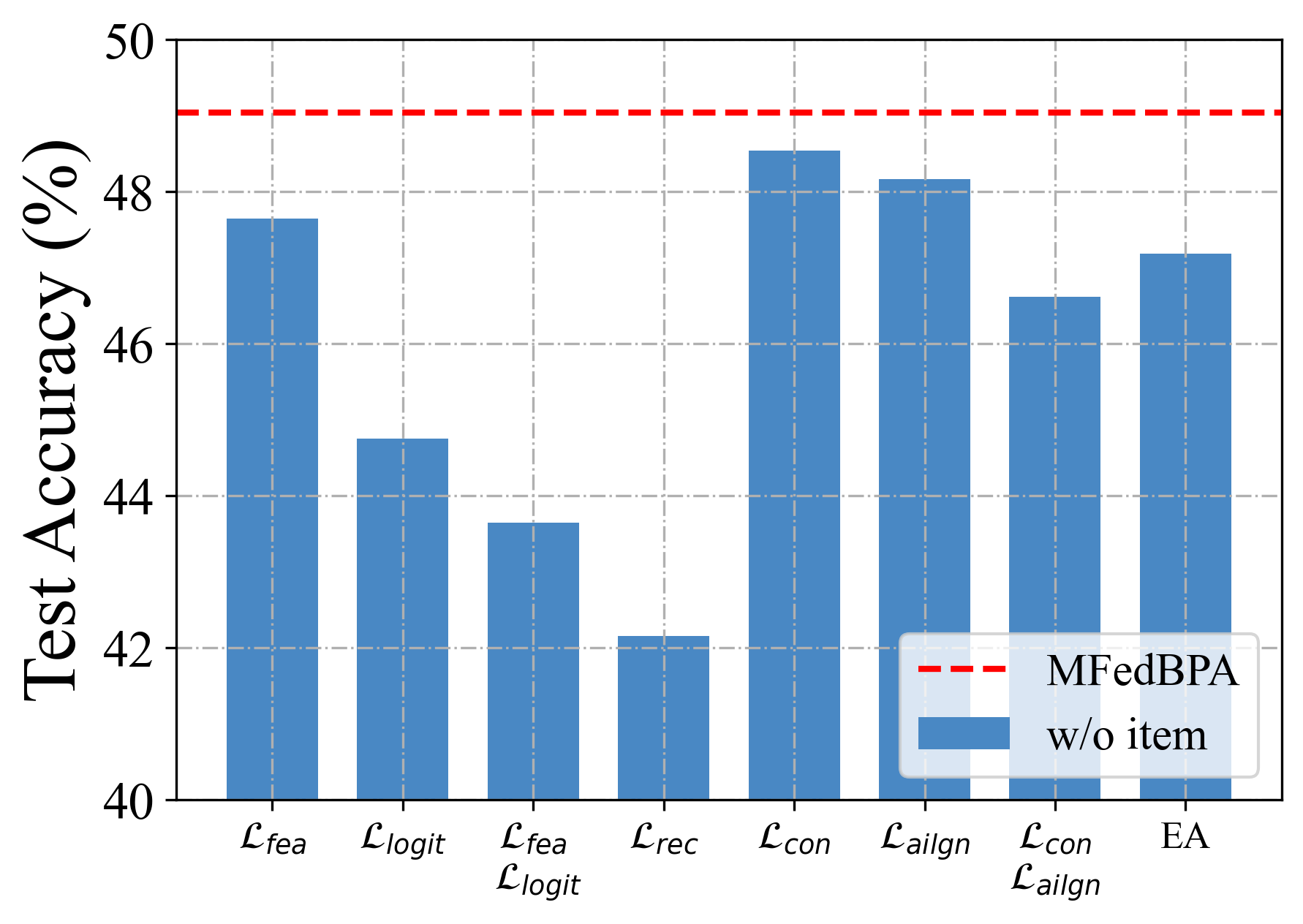} 
	\caption{Ablation results.}
	\label{fig:Ablation}
\end{minipage}
\end{figure}

\textbf{Ablation Study.}
%Figure \ref{fig:Ablation} presents a comprehensive ablation study evaluating the efficacy of individual components from both server and client training perspectives under the M2 configuration of the Caltech101. The empirical results demonstrate that each server-side module contributes significantly to the overall performance, underscoring the necessity of our integrated design for enhancing model accuracy.
%Figure \ref{fig:Ablation} illustrates a comprehensive ablation study conducted under the Caltech101 M2 configuration, where the horizontal axis denotes the exclusion (w/o) of specific components and "EA" signifies the replacement of entropy-based weighting with uniform averaging. This study evaluates the efficacy of individual modules from both server-side and client-side training perspectives. Empirical results demonstrate that each server-side module contributes significantly to the overall performance, underscoring the necessity of our integrated design for achieving high model accuracy. Notably, the removal of the reconstruction loss triggers a drastic performance decline. This degradation is attributed to the fact that the absence of reconstruction constraints leads to representation collapse within the latent space, thereby introducing erroneous knowledge during the local training phase.
Figure \ref{fig:Ablation} presents a comprehensive ablation study conducted under the Caltech101 M2 configuration to evaluate the individual contributions of both server-side and client-side modules. In this analysis, the horizontal axis denotes the exclusion (w/o) of specific algorithmic components, while "EA" indicates the substitution of our proposed entropy-based weighting with standard equal averaging. The empirical results clearly demonstrate that the removal of any server-side module precipitates a significant drop in overall accuracy, thereby validating the necessity and synergy of the integrated design. Notably, eliminating the reconstruction loss ($\mathcal{L}_{\text{rec}}$) triggers the most severe performance degradation. This acute decline occurs because the absence of reconstruction constraints induces representation collapse within the latent space, which subsequently propagates erroneous global knowledge into the local training phase of individual clients.

% Figure \ref{fig:Ablation} illustrates a comprehensive ablation study conducted under the Caltech101 M2 configuration, where the horizontal axis denotes the exclusion (w/o) of specific components and “EA” signifies the replacement of entropy-based weighting with uniform averaging.
% The significant performance drop upon removing any server-side module underscores the necessity of our integrated design.
% Specifically, omitting $\mathcal{L}_{\text{rec}}$ triggers a drastic performance decline, as the absence of reconstruction constraints leads to representation collapse within the latent space, thereby introducing erroneous knowledge during local training.

%\begin{table}[]
%	\centering
%	\caption{Ablation study of MFedPBA in Caltech101.}
%	\label{tab:AB}
%	\resizebox{\linewidth}{!}{
%	\begin{tabular}{ccc|c||ccc|c}
	%		\toprule[1pt]
	%		\multicolumn{4}{c||}{$\mathcal{L}_{\text{sever}}$}     & \multicolumn{4}{c}{$\mathcal{L}_{\text{client}}$} \\
	%		\midrule
	%		$\mathcal{L}_{\text{rec}}$ & $\mathcal{L}_{\text{con}}$ & $\mathcal{L}_{\text{align}}$ & Acc & $\mathcal{L}_{\text{ce}}$  & $\mathcal{L}_{\text{fea}}$ & $\mathcal{L}_{\text{logit}}$ & Acc \\
	%		\midrule
	%		$\checkmark$ & & &46.41 & $\checkmark$ & & &42.65\\
	%		$\checkmark$ & $\checkmark$ & & 46.41 & $\checkmark$ & $\checkmark$ & & 88.88 \\
	%		$\checkmark$ & & $\checkmark$ & 88.88 & $\checkmark$ & & $\checkmark$ & 88.88 \\
	%		$\checkmark$ & $\checkmark$ & $\checkmark$ & 49.04 & $\checkmark$ & $\checkmark$ & $\checkmark$ & 49.04 \\
	%		\bottomrule[1pt]
	%	\end{tabular}
%}
%\end{table}

\textbf{Impact of Feature Dimensions.}
We systematically evaluate the impact of the feature dimension $d_D$ on model performance by varying it across a predefined set of values (e.g., $d_D \in \{24, 48, 64, 128, 256\}$), as summarized in Table \ref{tab:feature}. We observe a clear trade-off: smaller dimensions restrict the model's ability to capture complex data patterns, leading to underfitting, whereas excessively large dimensions introduce redundant noise and complicate classifier training. Specifically, on the YouTube dataset, most evaluated methods achieve their most stable and superior performance at $d_D=48$. Based on this empirical observation, we independently conduct this sensitivity analysis across all datasets, ultimately selecting the dataset-specific optimal dimensions to strike the best balance between representational capacity and trainability.

%We systematically evaluate the impact of feature dimension $d_D$ on model performance, as summarized in Table \ref{tab:feature}.
%We observe that on the YouTube dataset, most methods achieve more stable and superior performance when $d_D=48$, as higher dimensions complicate classifier training.
%Based on this observation, we select optimal dimensions for each dataset to balance representational capacity and trainability.

%\vspace{-0.2cm}
\begin{table}[htbp]
\centering  
\caption{The test accuracy (\%) on Youtube in the M1+ setting. “Fed” is omitted in the method name due to limited space.} 
\label{tab:feature}  
\resizebox{1\columnwidth}{!}{
	% \setlength{\tabcolsep}{3mm}
	%				\scalebox{1}{
		\begin{tabular}{c|ccccccccc}  
			\toprule[1pt]
			$d_D$&Local&Proto&TGP &Pall &Harm&MVP&Mobile&PBA\cr
			\midrule	
			24-d & 41.19 & 30.12 & 41.19 & 42.56 & 42.06 & 43.85 & 40.91 & 44.43
			\cr 
			48-d & 42.34 & 39.05 & 43.85 & 44.63 & 44.43 & 45.21 & 44.57 & 47.02
			\cr 	         
			64-d& 43.02  & 37.82 & 45.22 & 42.27 & 43.13 & 43.21 & 44.24 & 46.46
			\cr 
			128-d& 42.64 & 31.24 & 45.01 & 42.15 & 43.24 & 44.33 & 44.33 & 45.68
			\cr	
			256-d& 41.95 & 26.32 & 43.22 & 41.75 & 42.74 & 42.84 & 42.54 & 43.84
			\cr	
			\bottomrule[1pt]
		\end{tabular}     
	}
\end{table}

%\vspace{-0.3cm}
\textbf{Hyper-parameter.}
We analyze the sensitivity of server epochs $S$ and regularization terms $\lambda_{1},\lambda_{2}$.
As shown in Figure \ref{fig:hy_param} (left). 
While accuracy improves with larger $S$, gains diminish significantly from $50$ to $1000$ epochs. Consequently, we set $S=10$ to balance efficiency and performance. 
We evaluated the sensitivity of the $\lambda_{1},\lambda_{2}$ using Youtube in the range $\{0.001, 0.01, 0.05, 0.1, 0.5, 1,5,8,10\}$.
As shown in Figure \ref{fig:hy_param} (right), MFedPBA demonstrates strong robustness across most parameter choices, consistently achieving accuracy above 46 and outperforming all baselines. Under several favorable configurations, the performance further improves to 48.9. In contrast, excessively large loss weights may introduce overly strong alignment constraints, leading to performance degradation.
%	Following a grid search, we empirically adopt $\lambda_1=0.01$ and $\lambda_2=1$.
%\begin{figure}[h]
%	\hspace{-0.4cm}
%	\begin{minipage}[t]{0.5\linewidth}
	%		\centering
	%		\includegraphics[width=1\textwidth]{fig/param_lambada.png}
	%		\caption{Parameter $\lambda_1$ and $\lambda_2$.}
	%		\label{fig:para_lambda}
	%	\end{minipage}
%	\hspace{-0.15cm}
%	\begin{minipage}[t]{0.5\linewidth}
	%		\centering
	%		\includegraphics[width=1\textwidth]{fig/ab.png} 
	%		\caption{Ablation results.}
	%		\label{fig:server_e}
	%	\end{minipage}
%\end{figure}
%\vspace{-0.2cm}
\begin{figure}[htbp]
	\centering
	\includegraphics[width=0.48\linewidth]{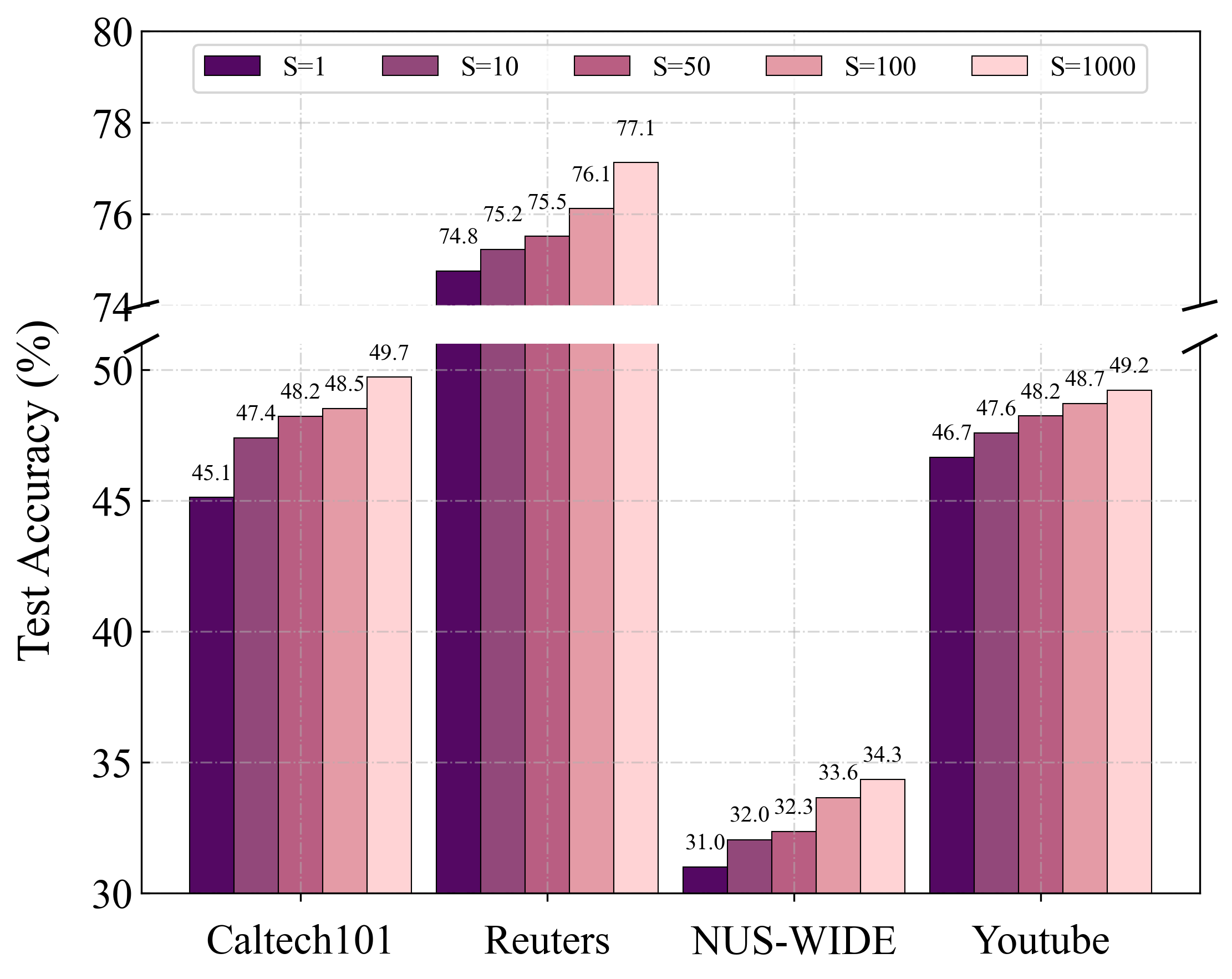}
	\includegraphics[width=0.48\linewidth]{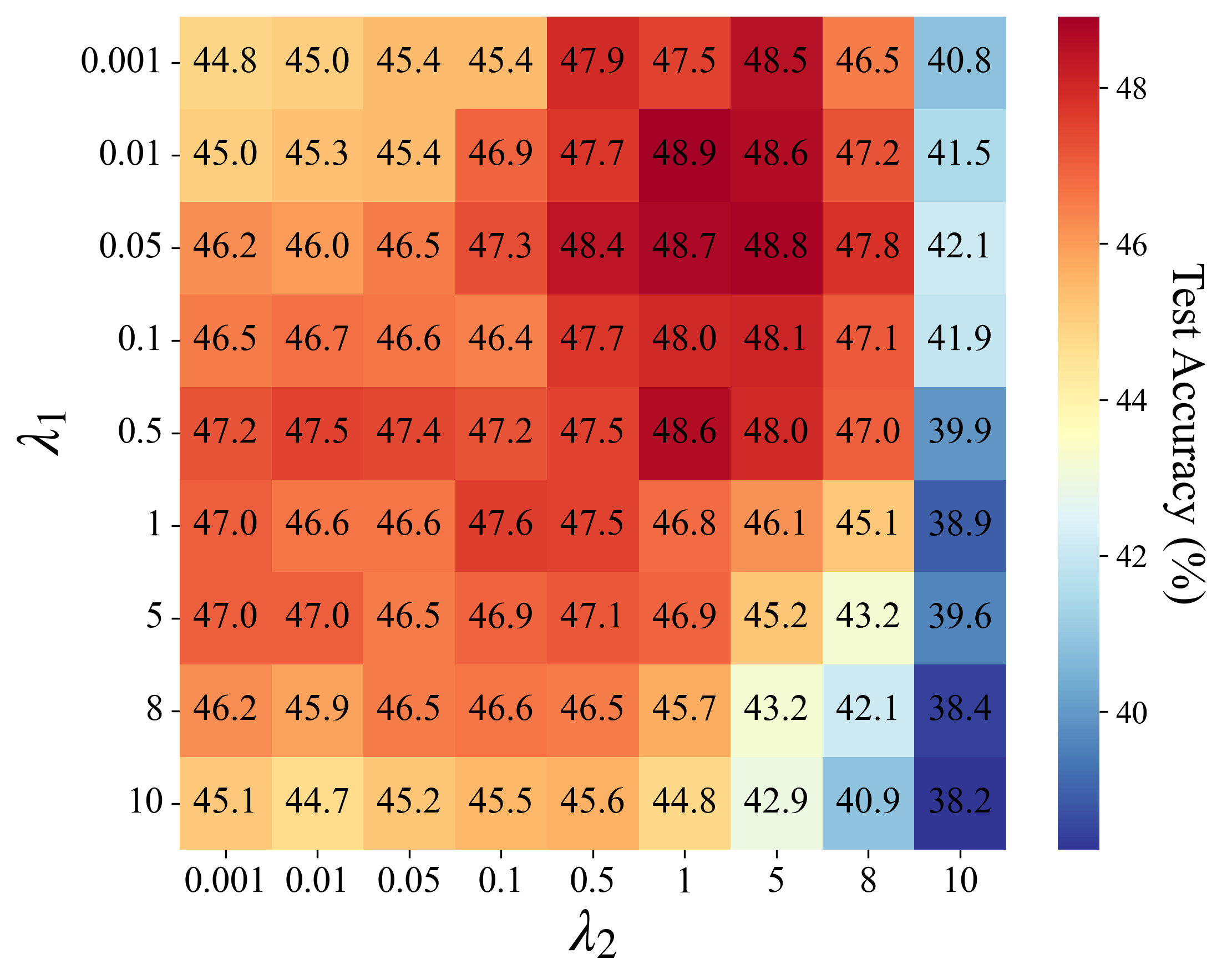}
	\caption{Hyperparameter experiments for $\lambda_{1}$,$\lambda_{2}$, and $S$.}
	\label{fig:hy_param}
\end{figure}

%\vspace{-0.2cm}
\section{Conclusion.}
This paper addresses the challenges of model heterogeneity and modality imbalance in multimodal federated learning by proposing a prototype-guided bidirectional alignment framework for effective knowledge sharing across heterogeneous feature spaces. We analyze the limitations of conventional prototype aggregation under heterogeneous encoders and introduce a dual alignment mechanism that anchors knowledge transfer through semantically aligned logit prototypes while bridging embedding space discrepancies via GW–based feature alignment. Experimental results show that the proposed method effectively mitigates negative transfer and knowledge contamination, and significantly improves generalization and robustness under imbalanced and missing-modality settings.

\section*{Acknowledgements}
The research is supported by Seed Funding for Collaborative Research Grants of HKBU (with Grant No. RC-SFCRG/23-24/R2/SCI/06), the National Key Research and Development Program of China (2023YFB2703700), and the GMCC-SYSU Joint Lab for Smart Applications.
\section*{Impact Statement}
This paper presents work whose goal is to advance the field of Machine Learning. There are many potential societal consequences of our work, none which we feel must be specifically highlighted here.
%\nocite{langley00}

\bibliography{example_paper}

@inproceedings{qi2023cross,
  title={Cross-silo prototypical calibration for federated learning with non-iid data},
  author={Qi, Zhuang and Meng, Lei and Chen, Zitan and Hu, Han and Lin, Hui and Meng, Xiangxu},
  booktitle={Proceedings of the 31st ACM international conference on multimedia},
  pages={3099--3107},
  year={2023}
}

@inproceedings{hu2024aggregation,
  title={Is aggregation the only choice? federated learning via layer-wise model recombination},
  author={Hu, Ming and Yue, Zhihao and Xie, Xiaofei and Chen, Cheng and Huang, Yihao and Wei, Xian and Lian, Xiang and Liu, Yang and Chen, Mingsong},
  booktitle={Proceedings of the 30th ACM SIGKDD Conference on Knowledge Discovery and Data Mining},
  pages={1096--1107},
  year={2024}
}

@article{tan2022federated,
  title={Federated learning from pre-trained models: A contrastive learning approach},
  author={Tan, Yue and Long, Guodong and Ma, Jie and Liu, Lu and Zhou, Tianyi and Jiang, Jing},
  journal={Advances in neural information processing systems},
  volume={35},
  pages={19332--19344},
  year={2022}
}

@ARTICLE{11511401,
  author={Huang, Sujia and Fu, Lele and Chen, Zhaoliang and Zhang, Tong and Li, Xiaoli and Cui, Zhen},
  journal={IEEE Transactions on Image Processing}, 
  title={ADAN: Adversarial Distribution Alignment Network for Multi-View Semi-Supervised Classification}, 
  year={2026},
  volume={35},
  pages={4861-4876}
}

@inproceedings{liao2025federated,
  title={Federated domain generalization with decision insight matrix},
  author={Liao, Tianchi and Xie, Binghui and Fu, Lele and Huang, Sheng and Deng, Bowen and Chen, Chuan and Zheng, Zibin},
  booktitle={Proceedings of the International Joint Conference on Artificial Intelligence, Montr{\'e}al, QC, Canada},
  pages={16--22},
  year={2025}
}

@ARTICLE{10325611,
  author={Yu, Shengju and Liu, Suyuan and Wang, Siwei and Tang, Chang and Luo, Zhigang and Liu, Xinwang and Zhu, En},
  journal={IEEE Transactions on Neural Networks and Learning Systems}, 
  title={Sparse Low-Rank Multi-View Subspace Clustering With Consensus Anchors and Unified Bipartite Graph}, 
  year={2025},
  volume={36},
  number={1},
  pages={1438-1452}
}

@article{meng2024improving,
  title={Improving global generalization and local personalization for federated learning},
  author={Meng, Lei and Qi, Zhuang and Wu, Lei and Du, Xiaoyu and Li, Zhaochuan and Cui, Lizhen and Meng, Xiangxu},
  journal={IEEE Transactions on Neural Networks and Learning Systems},
  volume={36},
  number={1},
  pages={76--87},
  year={2024},
  publisher={IEEE}
}

@inproceedings{kou2026fedharmony,
  title     = {FedHarmony: Harmonizing Heterogeneous Label Correlations in Federated Multi-Label Learning},
  author    = {Kou, Zhiqiang and Wu, Junxiang and Huang, Wenke and He, Wenwen and Xie, Ming-Kun and Wang, Changwei and Jia, Yuheng and Jiang, Di and Liu, Yang and Geng, Xin and Yang, Qiang},
  booktitle = {Proceedings of the IEEE/CVF Conference on Computer Vision and Pattern Recognition},
  year      = {2026}
}

@article{memoli2011gromov,
  title={Gromov--Wasserstein distances and the metric approach to object matching},
  author={M{\'e}moli, Facundo},
  journal={Foundations of computational mathematics},
  volume={11},
  number={4},
  pages={417--487},
  year={2011},
  publisher={Springer}
}

@article{huang2024multimodal,
  title={Multimodal federated learning: Concept, methods, applications and future directions},
  author={Huang, Wei and Wang, Dexian and Ouyang, Xiaocao and Wan, Jihong and Liu, Jia and Li, Tianrui},
  journal={Information Fusion},
  volume={112},
  pages={102576},
  year={2024},
  publisher={Elsevier}
}

@article{li2025re,
  title={Re-fed+: A better replay strategy for federated incremental learning},
  author={Li, Yichen and Wang, Haozhao and Qi, Yining and Liu, Wei and Li, Ruixuan},
  journal={IEEE Transactions on Pattern Analysis and Machine Intelligence},
  year={2025},
  publisher={IEEE}
}

@inproceedings{qi2025cross,
  title={Cross-silo feature space alignment for federated learning on clients with imbalanced data},
  author={Qi, Zhuang and Meng, Lei and Li, Zhaochuan and Hu, Han and Meng, Xiangxu},
  booktitle={Proceedings of the AAAI Conference on Artificial Intelligence},
  volume={39},
  number={19},
  pages={19986--19994},
  year={2025}
}

@inproceedings{collins2021exploiting,
  title={Exploiting shared representations for personalized federated learning},
  author={Collins, Liam and Hassani, Hamed and Mokhtari, Aryan and Shakkottai, Sanjay},
  booktitle={International conference on machine learning},
  pages={2089--2099},
  year={2021},
  organization={PMLR}
}

@article{che2023multimodal,
  title={Multimodal federated learning: A survey},
  author={Che, Liwei and Wang, Jiaqi and Zhou, Yao and Ma, Fenglong},
  journal={Sensors},
  volume={23},
  number={15},
  pages={6986},
  year={2023},
  publisher={MDPI}
}

@inproceedings{rahman2024multimodal,
  title={Multimodal federated learning with model personalization},
  author={Rahman, Ratun and Nguyen, Dinh C},
  booktitle={OPT 2024: Optimization for Machine Learning},
  year={2024}
}

@article{pan2025fedvlp,
  title={FedVLP: Visual-aware latent prompt generation for Multimodal Federated Learning},
  author={Pan, Hao and Zhao, Xiaoli and Jiang, Yuchen and He, Lipeng and Wang, Bingquan and Shu, Yincan},
  journal={Computer Vision and Image Understanding},
  volume={259},
  pages={104442},
  year={2025},
  publisher={Elsevier}
}

@article{chen2025advances,
  title={Advances in robust federated learning: A survey with heterogeneity considerations},
  author={Chen, Chuan and Liao, Tianchi and Deng, Xiaojun and Wu, Zihou and Huang, Sheng and Zheng, Zibin},
  journal={IEEE Transactions on Big Data},
  volume={11},
  number={3},
  pages={1548--1567},
  year={2025},
  publisher={IEEE}
}

@article{xiao2026enhancing,
  title={Enhancing privacy in multimodal federated learning with information theory},
  author={Xiao, Tianzhe and Li, Yichen and Qi, Yining and Liu, Yi and Wang, Haozhao and Wang, Yi and Li, Ruixuan},
  journal={Advances in Neural Information Processing Systems},
  volume={38},
  pages={142199--142215},
  year={2026}
}

@inproceedings{he2025spmc,
  title={SPMC: Self-Purifying Federated Backdoor Defense via Margin Contribution},
  author={He, Wenwen and Huang, Wenke and Yang, Bin and Liu, Shukan and Ye, Mang},
  booktitle={Forty-second International Conference on Machine Learning},
  year={2025}
}

@inproceedings{wang2024fedmmr,
  title={Fedmmr: Multi-modal federated learning via missing modality reconstruction},
  author={Wang, Shu and Qu, Zhe and Liu, Yuan and Kan, Shichao and Liang, Yixiong and Wang, Jianxin},
  booktitle={2024 IEEE International Conference on Multimedia and Expo},
  pages={1--6},
  year={2024}
}

@inproceedings{chen2024fedmbridge,
  title={FedMBridge: Bridgeable multimodal federated learning},
  author={Chen, Jiayi and Zhang, Aidong},
  booktitle={Forty-first International Conference on Machine Learning},
  year={2024}
}

@article{zhang2025unimodal,
  author       = {Rongyu Zhang and
                  Xiaowei Chi and
                  Wenyi Zhang and
                  Guiliang Liu and
                  Dan Wang and
                  Fangxin Wang},
  title        = {Unimodal Training-Multimodal Prediction: Cross-Modal Federated Learning
                  With Hierarchical Aggregation},
  journal      = {{IEEE} Trans. Mob. Comput.},
  volume       = {24},
  number       = {10},
  pages        = {10009--10023},
  year         = {2025}
}

@inproceedings{huang2022learn,
  title={Learn from others and be yourself in heterogeneous federated learning},
  author={Huang, Wenke and Ye, Mang and Du, Bo},
  booktitle={Proceedings of the IEEE/CVF conference on computer vision and pattern recognition},
  pages={10143--10153},
  year={2022}
}

@inproceedings{zheng2023autofed,
  title={Autofed: Heterogeneity-aware federated multimodal learning for robust autonomous driving},
  author={Zheng, Tianyue and Li, Ang and Chen, Zhe and Wang, Hongbo and Luo, Jun},
  booktitle={Proceedings of the 29th annual international conference on mobile computing and networking},
  pages={1--15},
  year={2023}
}

@article{deng2025cross,
  title={Cross-Modal Federated Learning among Unimodal Devices},
  author={Deng, Yongheng and He, Ningxin and Li, Xinyi and Wu, Fan and Zhang, Yaoxue and Ren, Ju},
  journal={Proceedings of the ACM on Interactive, Mobile, Wearable and Ubiquitous Technologies},
  volume={9},
  number={3},
  pages={1--26},
  year={2025},
  publisher={ACM New York, NY, USA}
}

@inproceedings{zhao2022multimodal,
  title={Multimodal federated learning on iot data},
  author={Zhao, Yuchen and Barnaghi, Payam and Haddadi, Hamed},
  booktitle={2022 IEEE/ACM seventh international conference on internet-of-things design and implementation (ioTDI)},
  pages={43--54},
  year={2022},
  organization={IEEE}
}

@article{thrasher2025multimodal,
  title={Multimodal federated learning in healthcare: a review},
  author={Thrasher, Jacob and Devkota, Alina and Siwakoti, Prasiddha and Chivukula, Rohit and Poudel, Pranav and Hu, Chuanbo and Tafti, Ahmad and Bhattarai, Binod and Gyawali, Prashnna},
  journal={Journal of Healthcare Informatics Research},
  pages={1--30},
  year={2025},
  publisher={Springer}
}

@inproceedings{chen2024feddat,
  title={Feddat: An approach for foundation model finetuning in multi-modal heterogeneous federated learning},
  author={Chen, Haokun and Zhang, Yao and Krompass, Denis and Gu, Jindong and Tresp, Volker},
  booktitle={Proceedings of the AAAI Conference on Artificial Intelligence},
  volume={38},
  number={10},
  pages={11285--11293},
  year={2024}
}

@article{cuturi2013sinkhorn,
  title={Sinkhorn distances: Lightspeed computation of optimal transport},
  author={Cuturi, Marco},
  journal={Advances in neural information processing systems},
  volume={26},
  year={2013}
}

@inproceedings{saha2025fedpia,
  title={Fedpia--permuting and integrating adapters leveraging wasserstein barycenters for finetuning foundation models in multi-modal federated learning},
  author={Saha, Pramit and Mishra, Divyanshu and Wagner, Felix and Kamnitsas, Konstantinos and Noble, J Alison},
  booktitle={Proceedings of the AAAI Conference on Artificial Intelligence},
  volume={39},
  number={19},
  pages={20228--20236},
  year={2025}
}

@inproceedings{peyre2016gromov,
  title={Gromov-wasserstein averaging of kernel and distance matrices},
  author={Peyr{\'e}, Gabriel and Cuturi, Marco and Solomon, Justin},
  booktitle={International conference on machine learning},
  pages={2664--2672},
  year={2016},
  organization={PMLR}
}

@article{van2008visualizing,
  title={Visualizing data using t-SNE.},
  author={Van der Maaten, Laurens and Hinton, Geoffrey},
  journal={Journal of machine learning research},
  volume={9},
  number={11},
  year={2008}
}

@article{sejourne2021unbalanced,
  title={The unbalanced gromov wasserstein distance: Conic formulation and relaxation},
  author={S{\'e}journ{\'e}, Thibault and Vialard, Fran{\c{c}}ois-Xavier and Peyr{\'e}, Gabriel},
  journal={Advances in Neural Information Processing Systems},
  volume={34},
  pages={8766--8779},
  year={2021}
}

@inproceedings{mafedmc,
  title={FedMC: Federated Manifold Calibration},
  author={Ma, Yanbiao and Dai, Wei and Jiang, Gaoyang and Zhou, Chenyue and Zhang, Yiwei and Luo, Fei and Wang, Junhao and Zhang, Andi and others},
  booktitle={The Fourteenth International Conference on Learning Representations}
}

@article{liao2024swiss,
  title={A swiss army knife for heterogeneous federated learning: Flexible coupling via trace norm},
  author={Liao, Tianchi and Fu, Lele and Chen, Jialong and Wang, Zhen and Zheng, Zibin and Chen, Chuan},
  journal={Advances in Neural Information Processing Systems},
  volume={37},
  pages={139886--139911},
  year={2024}
}

@inproceedings{zhang2025fedpall,
  title={FedPall: Prototype-based Adversarial and Collaborative Learning for Federated Learning with Feature Drift},
  author={Zhang, Yong and Liang, Feng and Yuan, Guanghu and Yang, Min and Li, Chengming and Hu, Xiping},
  booktitle={Proceedings of the IEEE/CVF International Conference on Computer Vision},
  pages={3111--3120},
  year={2025}
}

@inproceedings{che2024leveraging,
  title={Leveraging foundation models for multi-modal federated learning with incomplete modality},
  author={Che, Liwei and Wang, Jiaqi and Liu, Xinyue and Ma, Fenglong},
  booktitle={Joint European Conference on Machine Learning and Knowledge Discovery in Databases},
  pages={401--417},
  year={2024},
  organization={Springer}
}

@inproceedings{feng2023fedmultimodal,
  title={Fedmultimodal: A benchmark for multimodal federated learning},
  author={Feng, Tiantian and Bose, Digbalay and Zhang, Tuo and Hebbar, Rajat and Ramakrishna, Anil and Gupta, Rahul and Zhang, Mi and Avestimehr, Salman and Narayanan, Shrikanth},
  booktitle={Proceedings of the 29th ACM SIGKDD conference on knowledge discovery and data mining},
  pages={4035--4045},
  year={2023}
}

@inproceedings{liu2025fedmobile,
  title={FedMobile: Enabling Knowledge Contribution-aware Multi-modal Federated Learning with Incomplete Modalities},
  author={Liu, Yi and Wang, Cong and Yuan, Xingliang},
  booktitle={Proceedings of the ACM on Web Conference 2025},
  pages={2775--2786},
  year={2025}
}

@inproceedings{ouyang2023harmony,
  title={Harmony: Heterogeneous multi-modal federated learning through disentangled model training},
  author={Ouyang, Xiaomin and Xie, Zhiyuan and Fu, Heming and Cheng, Sitong and Pan, Li and Ling, Neiwen and Xing, Guoliang and Zhou, Jiayu and Huang, Jianwei},
  booktitle={Proceedings of the 21st Annual International Conference on Mobile Systems, Applications and Services},
  pages={530--543},
  year={2023}
}

@inproceedings{zhao2024fedta,
  title={FedTA: Unsupervised Federated Prototype Learning with Temperature Adaptation},
  author={Zhao, Juan and Yi, Xiaoquan and Li, Ruixuan and Li, Yuhua and Wang, Haozhao and Li, Yichen and Deng, Zhiying and Xu, Zijun},
  booktitle={2024 IEEE International Conference on High Performance Computing and Communications (HPCC)},
  pages={390--397},
  year={2024},
  organization={IEEE}
}

@inproceedings{fedcross,
  title={FedCross: Towards accurate federated learning via multi-model cross-aggregation},
  author={Hu, Ming and Zhou, Peiheng and Yue, Zhihao and Ling, Zhiwei and Huang, Yihao and Li, Anran and Liu, Yang and Lian, Xiang and Chen, Mingsong},
  booktitle={Proceedings of IEEE International Conference on Data Engineering (ICDE)},
  pages={2137--2150},
  year={2024},
  organization={IEEE}
}

@inproceedings{Zhang2025htfllib,
  author={Zhang, Jianqing and Wu, Xinghao and Zhou, Yanbing and Sun, Xiaoting and Cai, Qiqi and Liu, Yang and Hua, Yang and Zheng, Zhenzhe and Cao, Jian and Yang, Qiang},
  title = {HtFLlib: A Comprehensive Heterogeneous Federated Learning Library and Benchmark},
  year = {2025},
  booktitle = {Proceedings of the 31st ACM SIGKDD Conference on Knowledge Discovery and Data Mining}
}

@article{pan2024survey,
  title={A survey of multimodal federated learning: background, applications, and perspectives},
  author={Pan, Hao and Zhao, Xiaoli and He, Lipeng and Shi, Yicong and Lin, Xiaogang},
  journal={Multimedia Systems},
  volume={30},
  number={4},
  pages={222},
  year={2024},
  publisher={Springer}
}

@ARTICLE{fu2025federated,
  author={Fu, Lele and Huang, Sheng and Lai, Yanyi and Zhang, Chuanfu and Dai, Hong-Ning and Zheng, Zibin and Chen, Chuan},
  journal={IEEE Transactions on Mobile Computing}, 
  title={Federated Domain-Independent Prototype Learning With Alignments of Representation and Parameter Spaces for Feature Shift}, 
  year={2025},
  volume={24},
  number={9},
  pages={9004-9019}
}

@inproceedings{huang2023rethinking,
  title={Rethinking federated learning with domain shift: A prototype view},
  author={Huang, Wenke and Ye, Mang and Shi, Zekun and Li, He and Du, Bo},
  booktitle={2023 IEEE/CVF Conference on Computer Vision and Pattern Recognition},
  pages={16312--16322},
  year={2023},
  organization={IEEE}
}

@inproceedings{zhang2024fedtgp,
  title={Fedtgp: Trainable global prototypes with adaptive-margin-enhanced contrastive learning for data and model heterogeneity in federated learning},
  author={Zhang, Jianqing and Liu, Yang and Hua, Yang and Cao, Jian},
  booktitle={Proceedings of the AAAI conference on artificial intelligence},
  volume={38},
  number={15},
  pages={16768--16776},
  year={2024}
}

@inproceedings{cao2026two,
  title={Two Heads Are Better Than One: Generalized Cross-Domain Federated Learning via Dual-Prototype},
  author={Cao, Mingsheng and Chen, Tianci and Hu, Ming and Qi, Zhuang and Cui, Yangguang and Zhou, Junlong and Xie, Xiaofei},
  booktitle={Proceedings of the 32nd ACM SIGKDD Conference on Knowledge Discovery and Data Mining},
  pages={59--70},
  year={2026}
}

@inproceedings{tan2022fedproto,
  title={Fedproto: Federated prototype learning across heterogeneous clients},
  author={Tan, Yue and Long, Guodong and Liu, Lu and Zhou, Tianyi and Lu, Qinghua and Jiang, Jing and Zhang, Chengqi},
  booktitle={Proceedings of the AAAI conference on artificial intelligence},
  volume={36},
  number={8},
  pages={8432--8440},
  year={2022}
}

@article{DBLP:journals/tpds/LiXQWLG24,
  author       = {Yichen Li and
                  Wenchao Xu and
                  Yining Qi and
                  Haozhao Wang and
                  Ruixuan Li and
                  Song Guo},
  title        = {{SR-FDIL:} Synergistic Replay for Federated Domain-Incremental Learning},
  journal      = {{IEEE} Trans. Parallel Distributed Syst.},
  volume       = {35},
  number       = {11},
  pages        = {1879--1890},
  year         = {2024}
}

@inproceedings{yi2023fedgh,
  title={Fedgh: Heterogeneous federated learning with generalized global header},
  author={Yi, Liping and Wang, Gang and Liu, Xiaoguang and Shi, Zhuan and Yu, Han},
  booktitle={Proceedings of the 31st ACM international conference on multimedia},
  pages={8686--8696},
  year={2023}
}
\bibliographystyle{icml2026}

%%%%%%%%%%%%%%%%%%%%%%%%%%%%%%%%%%%%%%%%%%%%%%%%%%%%%%%%%%%%%%%%%%%%%%%%%%%%%%%
%%%%%%%%%%%%%%%%%%%%%%%%%%%%%%%%%%%%%%%%%%%%%%%%%%%%%%%%%%%%%%%%%%%%%%%%%%%%%%%
% APPENDIX
%%%%%%%%%%%%%%%%%%%%%%%%%%%%%%%%%%%%%%%%%%%%%%%%%%%%%%%%%%%%%%%%%%%%%%%%%%%%%%%
%%%%%%%%%%%%%%%%%%%%%%%%%%%%%%%%%%%%%%%%%%%%%%%%%%%%%%%%%%%%%%%%%%%%%%%%%%%%%%%
\newpage
\appendix
\onecolumn
\section*{Summary of Appendix}
In the appendix, the following contents are included:

%\ref{AS:Theore} Theoretical Analysis

\ref{AS:Algorithm} Algorithm

\quad\ref{AS:Algorithm Framework} Algorithm Framework

\quad\ref{AS:Privacy Analysis} Privacy Analysis

\ref{AS:ConvA} Convergence Analysis

\quad\ref{A.ass} Assumption

\quad\ref{AS:LemmasTheorems} Lemmas \& Theorems

\quad\ref{Proof:l1} Proof of Lemma \ref{lemma:LocalTraining}

\quad\ref{Proof:l2} Proof of Lemma \ref{lemma:AfterAggregation}

\quad\ref{Proof:t1} Proof of Theorem \ref{theorem:One-round}

\quad\ref{Proof:t2} Proof of Theorem \ref{theorem:non-convex}

%\quad\ref{Proof:s} Server-side Convergence

\ref{AS:Experimental Setup} Details of the Experimental Setup

\quad\ref{AS:Dataset Description} Dataset Description

\quad\ref{AS:Heterogeneous Model} Heterogeneous Model

\quad\ref{AS:Baseline Methods} Baseline Methods

\quad\ref{AS:Parameter Setting Details} Parameter Setting Details

\ref{AS:Experimental Result} Details of the Experimental Result

%\quad\ref{AS:Test Accuracy} Test Accuracy \& Performance curve

%\ref{AS:Theore} Theoretical Analysis
%\begin{itemize}
%	\item \ref{AS:Algorithm} Algorithm Framework
%	\item \ref{AS:Algorithm} Algorithm Framework
%	\item \ref{AS:Algorithm} Algorithm Framework
%\end{itemize}	

%\section{Theoretical Analysis}
%\label{AS:Theore}
\section{Algorithm}
\label{AS:Algorithm}
\subsection{Algorithm Framework}
\label{AS:Algorithm Framework}
We summarize the main steps of the proposed   in Algorithm \ref{alg:algorithm}.
\begin{algorithm}[h]
	\caption{MFedPBA}
	\label{alg:algorithm}
	\textbf{Input}: total rounds $T$, local epochs $E$, server epochs $S$, total number of clients $K$, sampled number of clients $K_C$, local learning rate $\eta_c$, server learning rate $\eta_s$, hyper-parameter for loss $\lambda_1$ and $\lambda_2$.\\
	% \textbf{Parameter}: Optional list of parameters\\
	% \textbf{Output}: Your algorithm's output\\
	\textbf{Server executes}:
	\begin{algorithmic}[1] %[1] enables line numbers
		\STATE Initialize modality-specific encoders $\Phi_m(\cdot)$ and shared decoder $\psi(\cdot)$
		\FOR{each round $t=1 \cdots T $} 
		\STATE Server samples subset $K_C$ of clients
		\FOR {each client $k \in K_C$ in parallel}
		% \STATE Server sends $\theta_t$ and $R$ to clients
		\STATE $\{ \{E_{k,c}^{m}\}_{m=1}^{|\mathcal{M}_k|}, I_{k,c} \}$ $\leftarrow$ \textbf{Client updates}($\{\mathbf{P}_{G,c}^{m}\}$, $\mathbf{I}_{G,c}$)
		\ENDFOR
		\STATE Aggregate logit prototype by Eq. (\ref{eq:I_Gc})
		\FOR{each server epoch $s=1 \cdots S $}
		\STATE Project client prototype to a unified space by Eq. (\ref{eq:projection})
		\STATE Aggregate modal feature prototype by Eq. (\ref{eq:P_Gc})
		\STATE Calculate server loss update         by Eq. (\ref{eq:server_loss})
		\ENDFOR
		\ENDFOR
	\end{algorithmic}
	\textbf{Clients updates}:
	\begin{algorithmic}[1] %[1] enables line numbers
		% \STATE Let $t=0$.
		% \STATE Initialize local model $\theta_c^t = \theta^{t}$
		\FOR{each local epoch $e=1 \cdots E $}
		\STATE Sample mini-batch in $\mathcal{B}$:
		\STATE Calculate sample feature-based prototype $E_{k,c}^{m}$ by Eq. (\ref{eq:P_E}) and logit-based prototype $I_{k,c}$ by Eq. (\ref{eq:P_I})
		\STATE Calculate local loss by Eq. (\ref{eq:client_loss}) 
		\STATE Update local model:
		$\theta_k^{t+1} \leftarrow \theta_k^{t}-\eta_c \nabla \mathcal{L}_k\left(\theta_k^{t} ; \mathcal{B}_k\right)$
		\ENDFOR
		\STATE \textbf{return} $\{E_{k,c}^{m}\}_{m=1}^{|\mathcal{M}_k|}$ and $I_{k,c}$
	\end{algorithmic}
\end{algorithm}

\subsection{Privacy Analysis}
\label{AS:Privacy Analysis}
Compared with federated learning paradigms that necessitate gradient or parameter sharing \cite{ouyang2023harmony,che2024leveraging,liu2025fedmobile}, the proposed MFedPBA framework affords enhanced privacy preservation by constraining communication exclusively to feature and logit prototypes. 

Feature prototypes are synthesized via client-specific non-linear encoders whose parameters remain strictly local. Consequently, the inverse reconstruction of raw inputs from embedding vectors presents a mathematically ill-posed problem, thereby significantly impeding data reconstruction attacks. Concurrently, logit prototypes abstract model outputs into class-level decision statistics, revealing minimal information regarding individual data instances. This abstraction renders it arduous for adversaries to distinguish whether a prototype is dominated by a solitary sensitive sample or derived from a diverse aggregate, effectively attenuating the efficacy of inference attacks. 

Furthermore, the generation of both prototype modalities via class-wise averaging eliminates sample-level granularity, mitigating the risk of single-instance inference. 

Finally, the server-side prototype optimization introduces an additional transformation layer, ensuring that the global prototypes distributed to clients are not direct replicas of any single client's representations, thus further reducing the probability of cross-client information leakage.

\section{Convergence Analysis}
\label{AS:ConvA}
To analyze the convergence of MFedPBA, we follow \cite{tan2022fedproto} and make the following assumptions \ref{A.ass}.

As for the iteration notation system, we define $t$ as the current communication round, $e \in \{0,1,\cdots,E\}$ as the number of local iterations, where $E$ denotes the maximum number of local iterations. Thus, $(tE+e)$ represents the $e$-th iteration in the $(t+1)$-th communication round. The $(t+0)$ denotes that at the beginning of the $(t+1)$-th round. Note that ${(tE+E)}$ corresponds to the last iteration in round ${(t+1)}$. Moreover, $tE$ represents the time step before prototype aggregation, and $tE + 0$ represents the time step between prototype learning and the first iteration of the current round.
Therefore, we can decompose one round of communication into two stages:\\
(1) Local update phase: $[tE+0\rightarrow (t+1)E]$ indicates that the client has completed the local update.\\
(2) Server learning phase: $[(t+1)E\rightarrow (t+1)E+0]$ indicates that the server updates the prototype and distributes it.

Here, we provide detailed mathematical expressions to better represent the process of updating local models.
We split each client $k$’s model $\theta_k$ into a feature extractor $f_k^m$ parameterized by $\varphi_k$ and a classifier $h_k$ parameterized by $w_k$. Each client is equipped with $|\mathcal{M}_k|$ modality specific feature extractors, denoted as $f_k^m: \mathcal{X}_{\mathcal{M}_k} \rightarrow \mathbb{R}^{d_D}$. The client possesses a single classifier, denoted as $h_k: \mathbb{R}^{d_D} \rightarrow \mathbb{R}^{d_C}$. So the client loss function can be written as $\mathcal{F}_k= \{f_k^m(\varphi_k^m)\}_m \circ h_k(w_k) $, and sometimes we use $\theta_k$ to represent $(\{\varphi_k^m\}_m, w_k)$ for short.
Therefore, the local loss function of client $k$ can be written as:
\begin{equation}
\label{eq:local}
\begin{aligned}
\mathcal{L}(\theta_k;\boldsymbol{x},y)=\mathcal{L}_{ce}(\mathcal{F}_k(\theta_k;\boldsymbol{x}),y) + \lambda_1\sum_{m}||f_k^m(\varphi_k^m;\boldsymbol{x}^m)-\mathbf{P}_{G,c}^{m}||_2^2+\lambda_2||h_k(w_k;f_k^m{\boldsymbol{x}^m})-\mathbf{I}_{G,c}||_2^2.
\end{aligned}
\end{equation}
It is worth noting that $\lambda_2$ serves as the weighting coefficient for the KL divergence term. Since the KL divergence can be approximated by the quadratic Mean Squared Error (MSE) loss via second-order Taylor expansion near the convergence point, we formulate the objective in Eq. (\ref{eq:local}) using the MSE form to facilitate the theoretical convergence analysis.

Moreover, the global feature prototype $\mathbf{P}_{G,c}^{m}$ is obtained by the server via $S$ rounds of training with the loss function $\mathcal{L}_{\text{server}}$, while the global logit prototype $\mathbf{I}_{G,c}$ is aggregated at the server.
Therefore, the server optimizes the global loss for $S$ rounds can be written as:
\begin{equation}
	\label{eq:server}
	\mathcal{L}_{\text{server}}(\mathbf{P})= \mathcal{L}_{\text{rec}}+\mathcal{L}_{\text{con}}+\mathcal{L}_{\text{align}}= \frac{1}{K} \sum_{k,m,c} \|P_{k,c}^m-E_{k,c}^m\|^2+\mathcal{L}_{\text{con}}(\mathbf{P})+\mathcal{L}_{\text{align}}(\mathbf{P},\mathbf{I}).
\end{equation}
For ease of analysis, we will represent both loss $\mathcal{L}_{\text{con}}$ and $\mathcal{L}_{\text{align}}$ together as $\mathcal{L}_{\text{m}}$, i.e., $\mathcal{L}_{\text{m}} = \mathcal{L}_{\text{con}} + \mathcal{L}_{\text{align}}$.
\subsection{Assumption}
\label{A.ass}
\begin{assumption}[Lipschitz Smoothness]
	\label{assumption1}
	The $k$-th client's local model loss function $\mathcal{L}$ is Lipschitz continuous with Lipschitz constant $L$, and $L>0$ with $\mathcal{L}(0) = 0$, i.e.,
	\begin{equation}
		%		\small
		\label{as1}
		\begin{aligned}
			\|\nabla \mathcal{L}_{t_1}-\nabla \mathcal{L}_{t_2}\|_2 \leq L\| \theta_{k,t_1}  -\theta_{k,t_2} \|_2, \quad \forall t_1, t_2 >0, \quad  k \in \{1,2,\dots, K\}.
		\end{aligned}
	\end{equation}
\end{assumption}
which implies the following quadratic bound,
	\begin{equation}
	%		\small
	\label{as1.1}
	\begin{aligned}
		\mathcal{L}_{t_1}-\mathcal{L}_{t_2} \leq \langle \nabla \mathcal{L}_{t_2}, (\theta_{k,t1} -\theta_{k,t2})\rangle + \frac{L}{2}||\theta_{k,t1} -\theta_{k,t2}||_2^2, \quad \forall t_1, t_2 >0, \quad k \in \{1,2,\dots, K\}.
	\end{aligned}
\end{equation}
\begin{assumption} [Unbiased Gradient and Bounded Variance]
	\label{assump:Unbiased}
	The random gradient $g_{k,t}=\nabla \mathcal{L}_t\left(\theta_{k,t}; \mathcal{B}_{k,t}\right)$ of each client's local model is unbiased, where $\mathcal{B}$ is a batch of local data, i.e.,
	\begin{equation}
		\mathbb{E}_{\mathcal{B}_{k,t} \subseteq N_k}\left[g_{k,t}\right]=\nabla \mathcal{L}(\theta_{k,t}) = \nabla \mathcal{L}_{t}, \quad \forall k \in \{1,2,\dots, K\},
	\end{equation}
	and the variance of random gradient $g_{k,t}$ is bounded by:
	\begin{equation}
		\mathbb{E}_{\mathcal{B}_{k,t} \subseteq N_k}\left[\left\|\nabla \mathcal{L}_t\left(\theta_{k,t} ; \mathcal{B}_{k,t}\right)-\nabla \mathcal{L}_t\left(\theta_{k,t}\right)\right\|_2^2\right] \leq \sigma^2, \quad \forall k \in \{1,2,\dots, K\}.
	\end{equation}    
\end{assumption} 

\begin{assumption} [Bounded Gradients]
\label{assump:bounded}
The expectation of the stochastic gradient is bounded by $G_c$:
%The expectation of the client and server stochastic gradients are bounded by $G_c$ and $G_s$, respectively:
\begin{equation} 
\begin{aligned}
\mathbb{E}\left[ ||\nabla\mathcal{L}_k||^2 \right] \leq G_c^2, \quad \forall k \in \{1,2,\dots, K\}.
\end{aligned}    
\end{equation}
\end{assumption}
\begin{assumption} [Lipschitz Continuity]
\label{assump:LContinuity}
For client $k$, feature extraction $f_k^m(\varphi_k^m)$ and classifier $h_k(w_k)$ are Lipschitz continuous with Lipschitz constant $L_r$, and $L_r>0$:
\begin{equation} 
	\begin{aligned}
	\|f_k^m(\varphi_{k,t_1}^{m})-f_k^m(\varphi_{k,t_2}^{m})\|_2 &\leq L_r\| \varphi_{k,t_1}^{m}  -\varphi_{k,t_2}^{m} \|_2,  \\
	\|h_k(w_{k,t_1})-\nabla h_k(w_{k,t_2})\|_2 &\leq L_r\| w_{k,t_1}  -w_{k,t_2} \|_2.
	\end{aligned}  
\end{equation}
\end{assumption}
\begin{assumption} [Bounded Server Optimization Drift]
\label{assump:Server}
The server objective function $\mathcal{L}_{\text{server}}$ is $L_s$-smooth, and the expectation of the stochastic gradient is bounded, i.e.,
%	\begin{equation} 
%	\mathbb{E}\left[\nabla ||\mathcal{L}_{server}||^2 \right] \leq \delta_S^2.    
%	\end{equation}

\textit{server Lipschitz Smoothness}: $\|\nabla \mathcal{L}_{\text{server},t_1}-\nabla \mathcal{L}_{\text{server},t_2}\|_2 \leq L_s\| \theta_{k,t_1}  -\theta_{k,t_2} \|_2$,

\textit{gradient bounded}: $\mathbb{E}\left[ ||\nabla \mathcal{L}_{\text{m}}||^2 \right] \leq G_s^2$, and  $\mathbb{E}\left[ ||\nabla \mathcal{L}_{\text{server}}||^2 \right] \leq \delta_s^2$.    
\end{assumption}
\subsection{Lemmas \& Theorems}
\label{AS:LemmasTheorems}
Based on the above assumptions, MFedPBA uses a prototype-based approach for local training updates, Lemma \ref{lemma:c1} derived by Tan \textit{et al.,} \cite{tan2022fedproto} still holds.
\begin{lemma}
\label{lemma:c1}
Let Assumption \ref{assumption1} and \ref{assump:Unbiased} hold. From the beginning of communication round $t + 1$ to the last local update step, the loss function of an arbitrary client can be bounded as:
\begin{equation}
	\begin{split}
		{\Bbb E} {[}\mathcal{L}_{(t+1)E}{]} \leq \mathcal{L}_{tE+0} -(\eta_c-\frac{L \eta^2_c}{2}) \sum_{e=0}^{E} \|\nabla \mathcal{L}_{tE+e} \|_2^2 + \frac{L E \eta_c^2}{2} \sigma^2.
	\end{split}
\end{equation}
%The proof can be found in Section \ref{Proof:l1}.
\end{lemma}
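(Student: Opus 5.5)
The argument is the standard one-round descent bound for SGD on an $L$-smooth objective, as in FedProto. The key point is that during local iterations $tE+0,\dots,(t+1)E$ the global prototypes $\mathbf{P}_{G,c}^{m}$ and $\mathbf{I}_{G,c}$ are fixed. Hence the local loss in Eq.~(\ref{eq:local}) is a fixed function of $\theta_k$ throughout the round, and Assumption \ref{assumption1} applies to it directly.

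\textbf{Step 1 (one step).} Apply the quadratic bound (\ref{as1.1}) to consecutive iterates, using the update $\theta_{k,tE+e+1}=\theta_{k,tE+e}-\eta_c g_{k,tE+e}$:
\begin{equation*}
\mathcal{L}_{tE+e+1}\le \mathcal{L}_{tE+e}-\eta_c\langle \nabla\mathcal{L}_{tE+e}, g_{k,tE+e}\rangle+\frac{L\eta_c^2}{2}\|g_{k,tE+e}\|_2^2 .
\end{equation*}

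\textbf{Step 2 (expectation).} Take the expectation over the mini-batch $\mathcal{B}_{k,tE+e}$, conditional on the past.
\begin{itemize}
\item By unbiasedness (Assumption \ref{assump:Unbiased}), the inner product becomes $\|\nabla\mathcal{L}_{tE+e}\|_2^2$.
\item For the second moment, use the decomposition $\mathbb{E}\|g\|^2=\|\nabla\mathcal{L}\|^2+\mathbb{E}\|g-\nabla\mathcal{L}\|^2\le \|\nabla\mathcal{L}_{tE+e}\|_2^2+\sigma^2$.
\end{itemize}
Together these give
\begin{equation*}
\mathbb{E}[\mathcal{L}_{tE+e+1}]\le \mathcal{L}_{tE+e}-\Big(\eta_c-\frac{L\eta_c^2}{2}\Big)\|\nabla\mathcal{L}_{tE+e}\|_2^2+\frac{L\eta_c^2}{2}\sigma^2 .
\end{equation*}

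\textbf{Step 3 (telescope).} Sum this inequality over the local steps of the round and take total expectation via the tower property. The intermediate losses cancel, leaving $\mathcal{L}_{tE+0}$ on the right and $\mathbb{E}[\mathcal{L}_{(t+1)E}]$ on the left. The variance terms accumulate to $\frac{LE\eta_c^2}{2}\sigma^2$, since the round contains $E$ gradient steps. The gradient terms sum to $\sum_{e}\|\nabla\mathcal{L}_{tE+e}\|_2^2$.

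\textbf{Step 4 (index bookkeeping; the main obstacle).} The statement writes the sum as $e=0,\dots,E$, i.e.\ $E+1$ terms, while the variance term carries a factor of $E$. I would handle this by indexing the $E$ steps so the variance count is exactly $E$. For the gradient sum, the extra $e=E$ term is absorbed by an inequality of the right sign only if $\eta_c<2/L$: then $\eta_c-\frac{L\eta_c^2}{2}>0$, so subtracting one more nonnegative term only weakens the upper bound. I would state this step-size condition explicitly, note that it is also the one used in Theorem \ref{theorem:non-convex}, and follow the paper in leaving the expectation inside the gradient norms implicit, as FedProto does.
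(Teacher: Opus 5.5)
Your Steps 1--3 are exactly the paper's argument: the quadratic upper bound from smoothness, unbiasedness for the cross term, $\mathbb{E}\|g\|_2^2\le\|\nabla\mathcal{L}\|_2^2+\sigma^2$, and telescoping over the $E$ local steps. Your remark that the global prototypes are frozen within a round is a justification the paper leaves implicit. The problem is Step 4, where the direction of the inequality is reversed. Telescoping the $E$ steps $e=0,\dots,E-1$ gives
\[
\mathbb{E}[\mathcal{L}_{(t+1)E}]\le \mathcal{L}_{tE+0}-c\sum_{e=0}^{E-1}\|\nabla\mathcal{L}_{tE+e}\|_2^2+\frac{LE\eta_c^2}{2}\sigma^2,\qquad c=\eta_c-\frac{L\eta_c^2}{2}.
\]
Extending the sum to $e=E$ subtracts the extra quantity $c\|\nabla\mathcal{L}_{(t+1)E}\|_2^2$ from the right-hand side. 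When $\eta_c<2/L$ this quantity is nonnegative, so the right-hand side gets \emph{smaller} and the new inequality is \emph{stronger}, not weaker. It therefore does not follow from the telescoped one. The step would be valid only for $c\le 0$, where the lemma gives no descent.

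No other bookkeeping can rescue the upper limit $E$, because the inequality as literally stated is not implied by Assumptions \ref{assumption1} and \ref{assump:Unbiased}. Take $\mathcal{L}(\theta)=\frac{L}{2}\theta^2$ on $\mathbb{R}$ with exact gradients, $E=1$, and $a=L\eta_c\in(0,2)$ with $a\neq 1$. Then $\theta_{tE+1}=(1-a)\theta_{tE+0}$. The left-hand side minus the right-hand side of the claimed bound equals $\frac{L}{2}\theta_{tE+0}^2\,a(2-a)(1-a)^2-\frac{L\eta_c^2}{2}\sigma^2$, which is positive once $|\theta_{tE+0}|$ is large. The paper's own proof has the same off-by-one: it says ``by telescoping of $E$ steps'' and then writes $\sum_{e=0}^{E}$ without comment. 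What your Steps 1--3 (and the paper's) actually prove is the bound with $\sum_{e=0}^{E-1}$: one gradient term per SGD step, matching the $E$ copies of the variance term. You should state the lemma in that form rather than try to absorb the extra term, and carry the same index change into the later one-round and convergence bounds.
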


\begin{lemma}
	\label{lemma:c2}
	Let Assumption \ref{assump:bounded}, \ref{assump:LContinuity} and \ref{assump:Server} hold. After feature prototype learning and logit prototype aggregation are completed on the server, the loss function of any client can be constrained as follows:
	\begin{equation}
		\begin{split}
			{\Bbb E} {[}\mathcal{L}_{(t+1)E+0}{]} \leq \mathcal{L}_{(t+1)E}+(M\lambda_{1}+\lambda_{2})L_rE\eta_cG_c+2\lambda_{1}(\delta_{s}+MG_s).
		\end{split}
	\end{equation}
\end{lemma}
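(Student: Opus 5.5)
The plan is to write the jump $\mathcal{L}_{(t+1)E+0}-\mathcal{L}_{(t+1)E}$ explicitly. Between these two time steps the client parameters $\theta_k$ stay fixed; only the global prototypes $\mathbf{P}_{G,c}^{m}$ and $\mathbf{I}_{G,c}$ in the surrogate local loss of Eq.~(\ref{eq:local}) change. The cross-entropy term is therefore unchanged and cancels, and
\begin{equation*}
\mathcal{L}_{(t+1)E+0}-\mathcal{L}_{(t+1)E}=\lambda_1\sum_{m}\Big(\|f_k^m-\mathbf{P}_{G,c}^{m,t+1}\|_2^2-\|f_k^m-\mathbf{P}_{G,c}^{m,t}\|_2^2\Big)+\lambda_2\Big(\|h_k-\mathbf{I}_{G,c}^{t+1}\|_2^2-\|h_k-\mathbf{I}_{G,c}^{t}\|_2^2\Big).
\end{equation*}
Each bracket has the form $\|a-b'\|^2-\|a-b\|^2=\langle b-b',\,2a-b-b'\rangle$. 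I would bound it by a product of a prototype-displacement factor and a bounded factor, and then control each of the two blocks separately.

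\textbf{Logit block.} The new global logit prototype is the entropy-weighted convex combination (Eq.~(\ref{eq:I_Gc})) of local logit prototypes $I_{k,c}$ computed from the models at the end of round $t+1$. The old one was built from the models at the end of round $t$. The weights $q_{k,c}$ sum to one, so the displacement is at most the maximal change of $h_k\circ f_k^m$ over the round. By Assumption~\ref{assump:LContinuity} this is at most $L_r\|\theta_{k,(t+1)E}-\theta_{k,tE}\|$. The parameters move through $E$ SGD steps of size $\eta_c$, and Assumption~\ref{assump:bounded} bounds the expected gradient norm by $G_c$, so the movement is at most $E\eta_c G_c$. The logit contribution is therefore $\lambda_2 L_r E\eta_c G_c$, treating the remaining factor as normalized or absorbed as in the FedProto analysis.

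\textbf{Feature block.} I would split the displacement of $\mathbf{P}_{G,c}^{m}$ into two parts. The first is the input shift: the uploaded $E_{k,c}^m$ moved because $f_k^m$ moved. The same Lipschitz-plus-bounded-gradient argument gives $L_rE\eta_cG_c$ per modality, which summed over $M$ modalities yields $M\lambda_1L_rE\eta_cG_c$. The second is the server optimization drift caused by the $S$ server epochs on $\mathcal{L}_{\text{server}}$. Here I would use the decomposition $\mathcal{L}_{\text{server}}=\mathcal{L}_{\text{rec}}+\mathcal{L}_{\text{m}}$ together with Assumption~\ref{assump:Server}. The change in the learned prototypes is controlled by the server gradient norm $\delta_s$ through the reconstruction part, and by $G_s$ for each modality's $\mathcal{L}_{\text{m}}$ component. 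Expanding the square with the factor $2$ from $2a-b-b'$ produces $2\lambda_1(\delta_s+MG_s)$.

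The main obstacle is this server-drift step. One has to justify that the prototype displacement induced by the autoencoder training is bounded by gradient norms, with no explicit dependence on $S$ or on the server learning rate $\eta_s$. The cleanest route is to treat the net server update as a single effective step of unit scale and apply the bounded-gradient assumptions to it. Taking expectations and adding the two blocks gives the stated inequality.
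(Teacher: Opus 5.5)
Your logit block is the paper's argument (fixed convex weights, Lipschitz classifier, $E$ SGD steps with $\mathbb{E}\|g\|\le G_c$). The feature block, however, has a genuine gap exactly at the step you flagged.

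Splitting the prototype displacement into ``input shift'' plus ``server optimization drift'' does not close under the stated assumptions. The shift of $E^m_{k,c}$ would have to be pushed through $\psi\circ\Phi_m$, whose Lipschitz constant is never assumed. The movement of the autoencoder over $S$ epochs depends on $S$ and $\eta_s$, so treating it as ``a single effective step of unit scale'' is unjustified. In any case, $\mathbf{P}^m_{G,t+2}$ is not a gradient step away from $\mathbf{P}^m_{G,t+1}$, since it is recomputed from new inputs.

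The missing idea is to anchor both global prototypes to the means of the raw uploaded features:
\[
\|\mathbf{P}_{G,t+2}-\mathbf{P}_{G,t+1}\|\le\|\mathbf{P}_{G,t+2}-\mathbf{E}_{t+2}\|+\|\mathbf{E}_{t+2}-\mathbf{E}_{t+1}\|+\|\mathbf{E}_{t+1}-\mathbf{P}_{G,t+1}\|.
\]
The middle term is bounded by $L_rE\eta_cG_c$ exactly as in the logit block, with no autoencoder involved. For the outer terms, use that $\mathcal{L}_{\text{rec}}$ is quadratic: $\nabla\mathcal{L}_{\text{server}}=M(\mathbf{P}_G-\mathbf{E}_{avg})+M\nabla\mathcal{L}_{\text{m}}$. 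Hence $\mathbf{P}_G-\mathbf{E}_{avg}=\frac{1}{M}\nabla\mathcal{L}_{\text{server}}-\nabla\mathcal{L}_{\text{m}}$, and Assumption~\ref{assump:Server} gives $\mathbb{E}\|\mathbf{P}_G-\mathbf{E}_{avg}\|\le\delta_s/M+G_s$. This bounds where the prototype sits relative to the client mean rather than how far the server moved it, which is why neither $S$ nor $\eta_s$ appears. Applying it at both endpoints and summing over the $M$ modalities gives $2M(\delta_s/M+G_s)=2(\delta_s+MG_s)$.

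So the factor $2$ comes from the two endpoint terms, not from expanding a square, and your attribution is inconsistent. The identity $\|a-b'\|^2-\|a-b\|^2=\langle b-b',2a-b-b'\rangle$ multiplies \emph{every} displacement by $\|2a-b-b'\|$, including the $L_rE\eta_cG_c$ terms. That factor involves $f_k^m$, $h_k$ and the prototypes themselves, and none of Assumptions~\ref{assump:bounded}--\ref{assump:Server} bounds it. It therefore cannot be ``absorbed'' to give coefficient $1$ on the drift terms and $2$ on the server terms at once.

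The paper sidesteps this by writing the regularizers with unsquared norms at this step and applying the reverse triangle inequality $\|a-b\|-\|a-c\|\le\|b-c\|$, so each displacement enters with coefficient exactly $1$. If you keep the squared norms of Eq.~(\ref{eq:local}), you need an extra boundedness assumption on features, logits and prototypes, and the constants in the lemma change.
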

%The proof can be found in Section \ref{Proof:l2}.

\begin{theorem} [One-round deviation]
	\label{theorem:one-round}
	Based on the above assumptions, using $\Gamma_{\mathrm{d}}=(M\lambda_{1}+\lambda_{2})L_rG_c$ and $\Gamma_{\mathrm{s}}=2\lambda_{1}(\delta_{s}+MG_s)$ represent the drift constant and server bias constant.
	The expectation of the loss of an arbitrary client's local model before the start of a round of local iteration satisfies:
	\begin{equation}
		\label{eq:Theorem1}
		\begin{split}
			\mathbb{E}[\mathcal{L}_{(t+1) E+0}]  &\leq \mathcal{L}_{tE+0} -(\eta_c-\frac{L \eta_c^2}{2}) \sum_{e=0}^{E} \|\nabla \mathcal{L}_{tE+e} \|_2^2 + \frac{L E \eta_c^2}{2} \sigma^2 + \Gamma_{d}E\eta_c+\Gamma_{s}.
		\end{split}
	\end{equation}
\end{theorem}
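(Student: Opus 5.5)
The plan is to chain Lemma~\ref{lemma:c1} and Lemma~\ref{lemma:c2} across the two phases of one communication round. By construction, the round splits into the local update phase $[tE+0\rightarrow (t+1)E]$ and the server learning phase $[(t+1)E\rightarrow (t+1)E+0]$. Theorem~\ref{theorem:one-round} should follow by bounding each phase and composing the two bounds through the tower property of expectation.

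\textbf{Step 1: server phase.} Start from Lemma~\ref{lemma:c2}, which controls the jump in the client loss caused by replacing the global prototypes $\mathbf{P}_{G,c}^m,\mathbf{I}_{G,c}$:
\begin{equation*}
\mathbb{E}[\mathcal{L}_{(t+1)E+0}] \le \mathbb{E}[\mathcal{L}_{(t+1)E}] + (M\lambda_1+\lambda_2)L_rE\eta_cG_c + 2\lambda_1(\delta_s+MG_s).
\end{equation*}
Using the definitions $\Gamma_{\mathrm d}=(M\lambda_1+\lambda_2)L_rG_c$ and $\Gamma_{\mathrm s}=2\lambda_1(\delta_s+MG_s)$, the additive terms are exactly $\Gamma_{\mathrm d}E\eta_c+\Gamma_{\mathrm s}$. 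These constants are deterministic, so taking a further outer expectation leaves them unchanged.

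\textbf{Step 2: local phase.} Next, substitute the bound from Lemma~\ref{lemma:c1} for $\mathbb{E}[\mathcal{L}_{(t+1)E}]$, conditioned on the state at $tE+0$:
\begin{equation*}
\mathbb{E}[\mathcal{L}_{(t+1)E}] \le \mathcal{L}_{tE+0} - \Big(\eta_c-\tfrac{L\eta_c^2}{2}\Big)\sum_{e=0}^{E}\|\nabla\mathcal{L}_{tE+e}\|_2^2 + \tfrac{LE\eta_c^2}{2}\sigma^2.
\end{equation*}
Summing the two inequalities then gives \eqref{eq:Theorem1} directly.

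\textbf{Main obstacle: the conditioning.} The only delicate point is making the expectations consistent. Lemma~\ref{lemma:c2}'s expectation is over the server's stochastic optimization, given the end-of-round model. Lemma~\ref{lemma:c1}'s expectation is over the minibatch sampling within the round. I would write $\mathbb{E}[\mathcal{L}_{(t+1)E+0}] = \mathbb{E}\big[\mathbb{E}[\mathcal{L}_{(t+1)E+0}\mid \theta_{k,(t+1)E}]\big]$ and apply Lemma~\ref{lemma:c2} inside. Then apply Lemma~\ref{lemma:c1} conditionally on $\theta_{k,tE+0}$. The gradient-norm terms are then interpreted in expectation, consistent with how the paper states Lemma~\ref{lemma:c1}. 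No further smoothness arguments are needed beyond those already absorbed into the two lemmas.
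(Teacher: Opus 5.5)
Your proposal is correct and is the paper's proof: split the round into the local phase and the server phase, substitute Lemma~\ref{lemma:c1} into the right-hand side of Lemma~\ref{lemma:c2}, and rename the additive constants as $\Gamma_{\mathrm{d}}E\eta_c+\Gamma_{\mathrm{s}}$. Your conditioning remark is more careful than the paper, but it misreads Lemma~\ref{lemma:c2}: its $\Gamma_{\mathrm{d}}E\eta_c$ term comes from averaging over the round's local stochastic gradients $g_{k,tE+e}$ (via $G_c$), so the lemma bounds the unconditional difference $\mathbb{E}[\mathcal{L}_{(t+1)E+0}]-\mathbb{E}[\mathcal{L}_{(t+1)E}]$ and does not hold conditionally on $\theta_{k,(t+1)E}$ --- that unconditional form is all your chaining needs anyway.
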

%The proof can be found in Section \ref{Proof:t1}.

\begin{theorem} [Non-convex convergence rate of MFedPBA]
	\label{theorem:convergence}
	Based on the above assumptions, for an arbitrary client and any $\epsilon> 0$, the following inequality holds:
	\begin{equation} 
		\label{eq:Non-convex}
		\begin{aligned}
			\frac{1}{T}\sum_{t=0}^{T-1} \sum_{e=0}^{E} \mathbb{E}\left[\left\|\mathcal{L}_{t E+e}\right\|_2^2\right]  
			&\leq\frac{2\left(\mathcal{L}_{t=1}-\mathcal{L}^*\right)}{T \eta_c\left(2-L \eta_c\right)}+\frac{LE\eta_c^2\sigma^2+2(\Gamma_{d}E\eta_c+\Gamma_{s})}{2 \eta_c-L \eta_c^2},
			\\& \leq \epsilon.
			\\ s.t. \quad \eta_c &< \min \left\{ \frac{2}{L}, \frac{(\epsilon - \Gamma_{\mathrm{d}} E) + \sqrt{(\epsilon - \Gamma_{\mathrm{d}} E)^2 - 2L(\epsilon + E\sigma^2)\Gamma_{\mathrm{s}}}}{L(\epsilon + E\sigma^2)} \right\}.
		\end{aligned}
	\end{equation}
\end{theorem}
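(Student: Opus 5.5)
We will telescope the one-round bound of Theorem~\ref{theorem:one-round} over $t=0,\dots,T-1$ and then rearrange so that the accumulated squared gradient norms sit on the left-hand side. Concretely, we first rewrite \eqref{eq:Theorem1} as
\begin{equation*}
\Big(\eta_c-\frac{L\eta_c^2}{2}\Big)\sum_{e=0}^{E}\mathbb{E}\|\nabla\mathcal{L}_{tE+e}\|_2^2 \le \mathbb{E}[\mathcal{L}_{tE+0}]-\mathbb{E}[\mathcal{L}_{(t+1)E+0}]+\frac{LE\eta_c^2}{2}\sigma^2+\Gamma_d E\eta_c+\Gamma_s .
\end{equation*}
Here we take total expectation over all randomness in earlier rounds; this is legitimate because the bound holds conditionally on the state at $tE+0$.

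Summing over $t=0,\dots,T-1$, the loss differences telescope to $\mathcal{L}_{t=1}-\mathbb{E}[\mathcal{L}_{TE+0}]$. We bound this by $\mathcal{L}_{t=1}-\mathcal{L}^*$, where $\mathcal{L}^*$ is a lower bound of the loss. Such a lower bound exists since the cross-entropy, squared-alignment and KL/MSE terms are all nonnegative. Next we divide by $T(\eta_c-L\eta_c^2/2)=T\eta_c(2-L\eta_c)/2$, which requires $\eta_c<2/L$ to keep the coefficient positive; this is the source of the first constraint. The result is exactly
\begin{equation*}
\frac{1}{T}\sum_{t=0}^{T-1}\sum_{e=0}^{E}\mathbb{E}\|\nabla\mathcal{L}_{tE+e}\|_2^2\le\frac{2(\mathcal{L}_{t=1}-\mathcal{L}^*)}{T\eta_c(2-L\eta_c)}+\frac{LE\eta_c^2\sigma^2+2(\Gamma_dE\eta_c+\Gamma_s)}{2\eta_c-L\eta_c^2}.
\end{equation*}
The quantity in the statement should be read as the gradient norm.

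For the "$\le\epsilon$" part, we argue as follows. As $T\to\infty$, the first term vanishes at rate $\mathcal{O}(1/T)$. It therefore suffices to make the second term strictly less than $\epsilon$, so that the first term can be absorbed for $T$ large. Multiplying through by $2\eta_c-L\eta_c^2>0$ gives the condition
\begin{equation*}
L(\epsilon+E\sigma^2)\eta_c^2-2(\epsilon-\Gamma_dE)\eta_c+2\Gamma_s<0 .
\end{equation*}
This quadratic inequality in $\eta_c$ holds between its two roots $\big[(\epsilon-\Gamma_dE)\pm\sqrt{(\epsilon-\Gamma_dE)^2-2L(\epsilon+E\sigma^2)\Gamma_s}\big]/(L(\epsilon+E\sigma^2))$. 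Real roots require $\epsilon>\Gamma_dE$ and a nonnegative discriminant.

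The main obstacle is this step. The admissible range for $\eta_c$ is an interval bounded below by the smaller root whenever $\Gamma_s>0$, whereas the statement only imposes an upper bound. We would handle this by noting that, in the main-text regime $\Gamma_s\to0$, the smaller root tends to $0$ and the larger root tends to $2(\epsilon-\Gamma_dE)/(L(\epsilon+E\sigma^2))$. This recovers the condition of Theorem~\ref{theorem:non-convex}. For $\Gamma_s>0$, we would state the lower-root requirement explicitly as well, i.e., require $\eta_c$ to exceed the smaller root. Finally, we take $T$ large enough that the first term is at most the remaining slack, which yields the claimed $\mathcal{O}(1/T)$ rate.
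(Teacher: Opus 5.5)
Your proposal follows the paper's proof essentially step for step: telescope Theorem~\ref{theorem:one-round} over $t=0,\dots,T-1$, bound the initial gap by $\mathcal{L}_{t=1}-\mathcal{L}^*$, divide by $\eta_c(2-L\eta_c)/2$, and reduce ``$\le\epsilon$'' to $L(\epsilon+E\sigma^2)\eta_c^2-2(\epsilon-\Gamma_dE)\eta_c+2\Gamma_s<0$, which is the paper's condition Eq.~(\ref{eq:T}) that the denominator of its lower bound for $T$ be positive (once the stray factor $\eta_c$ on $\Gamma_d$ there is dropped). Your point about the smaller root is correct, and the paper's proof skips it by keeping only the upper root: for $\Gamma_s>0$ the term $2\Gamma_s/(2\eta_c-L\eta_c^2)$ blows up as $\eta_c\to0$, so $\eta_c$ must lie strictly between the two roots (which needs $\epsilon>\Gamma_dE$ and a strictly positive discriminant), and the stated one-sided condition suffices only in the limit $\Gamma_s\to0$.
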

%The proof can be found in Section \ref{Proof:t2}.

\subsection{Proof of Lemma \ref{lemma:LocalTraining}}
\label{Proof:l1}
\textbf{Lemma \ref{lemma:LocalTraining}}.
Let Assumption \ref{assumption1} and \ref{assump:Unbiased} hold. From the beginning of communication round $t + 1$ to the last local update step, the loss function of an arbitrary client can be bounded as:
\begin{equation*}
	\begin{split}
		{\Bbb E} {[}\mathcal{L}_{(t+1)E}{]} \leq \mathcal{L}_{tE+0} -(\eta_c-\frac{L \eta^2_c}{2}) \sum_{e=0}^{E} \|\nabla \mathcal{L}_{tE+e} \|_2^2 + \frac{L E \eta_c^2}{2} \sigma^2.
	\end{split}
\end{equation*}

\begin{proof}
For arbitrary clients, we have $\theta_{t+1} = \theta_{t} - \eta_c g_{t}$, then
\begin{equation}
	\begin{split}
		\mathcal{L}_{tE+1} & \leq \mathcal{L}_{tE+0}+
		\langle \nabla \mathcal{L}_{tE+0}, (\theta_{tE+1}  -\theta_{tE+0})\rangle  + \frac{L}{2} \|  \theta_{tE+1}  -\theta_{tE+0} \|_2^2\\
		&= \mathcal{L}_{tE+0}-\eta_c \langle \nabla \mathcal{L}_{tE+0}, g_{tE+0}\rangle + \frac{L}{2} \|  \eta_c g_{tE+0} \|_2^2.
	\end{split}
\end{equation}
Taking expectation of both sides of the above equation on the random variable $\mathcal{B}$, we have
\begin{equation}
	\begin{aligned}
		{\Bbb E} {[}\mathcal{L}_{tE+1}{]} & \leq \mathcal{L}_{tE+0}-\eta_c {\Bbb E}{[}\langle \nabla \mathcal{L}_{tE+0},  g_{tE+0}\rangle{]} + \frac{L \eta_c^2}{2} {\Bbb E}{[}\|g_{tE+0}\|_2^2{]}\\
		&  = \mathcal{L}_{tE+0}-\eta_c \|\nabla \mathcal{L}_{tE+0} \|_2^2 + \frac{L \eta_c^2}{2} {\Bbb E}{[}\|g_{k,tE+0}\|_2^2{]}\\
		&  {\leq} \mathcal{L}_{tE+0}-\eta_c \|\nabla \mathcal{L}_{tE+0} \|_2^2 + \frac{L \eta_c^2}{2} (\|\nabla \mathcal{L}_{tE+0} \|_2^2 + \mathrm{Var}(g_{k,tE+0}))\\
		&= \mathcal{L}_{tE+0} -(\eta_c-\frac{L \eta_c^2}{2}) \|\nabla \mathcal{L}_{tE+0} \|_2^2 + \frac{L \eta_c^2}{2} \mathrm{Var}(g_{k,tE+0})\\
		& {\leq} \mathcal{L}_{tE+0} -(\eta_c-\frac{L \eta_c^2}{2}) \|\nabla \mathcal{L}_{tE+0} \|_2^2 + \frac{L \eta_c^2}{2} \sigma^2, \label{eq:3}
	\end{aligned}
\end{equation}
where $\mathrm{Var}(x)={\Bbb E}{[}x^2{]}-({\Bbb E {[}x{]}})^2$. Take expectation of $\theta$ on both sides. Then, by telescoping of $E$ steps, we have,
\begin{equation}
	\begin{split}
		{\Bbb E} {[}\mathcal{L}_{(t+1)E}{]} \leq \mathcal{L}_{tE+0} -(\eta_c-\frac{L \eta_c^2}{2}) \sum_{e=0}^{E} \|\nabla \mathcal{L}_{tE+e} \|_2^2 + \frac{L E \eta_c^2}{2} \sigma^2,
	\end{split}
\end{equation}
which completes the proof.
\end{proof}

\subsection{Proof of Lemma \ref{lemma:AfterAggregation}}
\label{Proof:l2}
\textbf{Lemma \ref{lemma:AfterAggregation}}.
Let Assumption \ref{assump:bounded}, \ref{assump:LContinuity} and \ref{assump:Server} hold. After feature prototype learning and logit prototype aggregation are completed on the server, the loss function of any client can be constrained as follows:
\begin{equation*}
	\begin{split}
		{\Bbb E} {[}\mathcal{L}_{(t+1)E+0}{]} \leq \mathcal{L}_{(t+1)E}+(M\lambda_{1}+\lambda_{2})L_rE\eta_cG_c+2\lambda_{1}(\delta_{s}+MG_s).
	\end{split}
\end{equation*}
\begin{proof}
We define the temporal states of a client. Time $(t+1)E$ denotes the moment when the client has just completed $E$ local training epochs in the $t$-th round but has not yet received the new prototypes from the server. Time $(t+1)E+0$ denotes the moment when the client receives the updated prototypes dispatched by the server.

First, we have: $\mathcal{L}_{(t+1) E+0} =\mathcal{L}_{(t+1) E}+\mathcal{L}_{(t+1) E+0}-\mathcal{L}_{(t+1) E}$. Therefore, our main goal is to derive an upper bound for the difference in loss before and after aggregation:
\begin{equation}
\begin{split}
	\Delta \mathcal{L} = {\Bbb E}[\mathcal{L}_{(t+1) E+0}-\mathcal{L}_{(t+1) E}].
\end{split}
\end{equation}
Since the model parameter $\theta$ does not change during aggregation, the cross-entropy loss $\mathcal{L}_{\text{ce}}$ cancels out. The difference only comes from the changes in the global prototypes in the regularization term. Thus we have:
\begin{equation}
\begin{split}
\mathcal{L}_{(t+1) E+0}-\mathcal{L}_{(t+1) E} =& \lambda_1\sum_{m}\left(||f_k^m(\phi^m_{k,(t+1)E})-\mathbf{P}^m_{G,{t+2}}||-||f_k^m(\phi^m_{k,(t+1)E})-\mathbf{P}^m_{G,{t+1}}||\right)
\\&+\lambda_2\left(||h_k(w_{k,(t+1)E})-\mathbf{I}_{G,{t+2}}||-||h_k(w_{k,(t+1)E})-\mathbf{I}_{G,{t+1}}||\right).
\end{split}
\end{equation}
Using the reverse triangle inequality $\|a-b\|_2-\|a-c\|_2 \leq \|b-c\|_2$, we can bound the above expression as follows:
\begin{equation}
	\label{eq:lemmaAB}
	\begin{split}
		\mathcal{L}_{(t+1) E+0}-\mathcal{L}_{(t+1) E} =\underbrace{ \lambda_1\sum_{m}\|\mathbf{P}^m_{G,{t+2}}-\mathbf{P}^m_{G,{t+1}}\|}_{A} +\underbrace{\lambda_2\|\mathbf{I}_{G,{t+2}}-\mathbf{I}_{G,{t+1}}\|}_{B}.
	\end{split}
\end{equation}
Next, we define the changes in Part A and Part B separately.

Since the logical value prototype of Part B uses weighted aggregation, we will begin our analysis with Part B.
From Eq. (\ref{eq:I_Gc}), we can see that:
\begin{equation}
	\begin{split}
		\mathbf{I}_{G,{t+2}} = \sum_{k=1}^K q_{k} {I}_{k,(t+1)E}, 
		\quad \mathbf{I}_{G,{t+1}} = \sum_{k=1}^K q_{k} {I}_{k,{tE}}.
	\end{split}
\end{equation}
Therefore, we have:
\begin{equation}
\begin{split}
\|\mathbf{I}_{G,{t+2}}-\mathbf{I}_{G,{t+1}}\|&=\left\|\sum_{k=1}^K q_{k} {I}_{k,(t+1)E}-\sum_{k=1}^K q_{k} {I}_{k,tE}\right\|_2=\left\|\sum_{k=1}^K q_{k} ({I}_{k,(t+1)E}- {I}_{k,tE})\right\|_2
\\& \leq \left\|\sum_{k=1}^K q_{k} \frac{1}{N_k}\sum_{i=1}^{N_k} (h_k(w_{k,(t+1)E};\boldsymbol{x}_{k,i})- h_k(w_{k,tE};\boldsymbol{x}_{k,i}))\right\|_2
\\& \leq \sum_{k=1}^K q_{k} \frac{1}{N_k}\sum_{i=1}^{N_k} \left\|h_k(w_{k,(t+1)E};\boldsymbol{x}_{k,i})- h_k(w_{k,t+E};\boldsymbol{x}_{k,i})\right\|_2
\\& \leq L_r \sum_{k=1}^K q_{k}\left\|w_{k,(t+1)E}-w_{k,tE}\right\|_2
\\& \leq L_r \sum_{k=1}^K q_{k}\left\|\theta_{k,(t+1)E}-\theta_{k,tE}\right\|_2 = L_r \eta_c \sum_{k=1}^K q_{k}\left\|\sum_{e=0}^{E-1}g_{k,tE+e}\right\|_2
\\& \leq L_r \eta_c \sum_{k=1}^K q_{k}\sum_{e=0}^{E-1}\left\|g_{k,tE+e}\right\|_2.
\end{split}
\end{equation}
Take expectations on both sides, since $\sum{q_k}=1$, then:
\begin{equation}
\label{eq:logitI_all}
\begin{split}
\lambda_{2}\|\mathbf{I}_{G,{t+2}}-\mathbf{I}_{G,{t+1}}\| 
\leq \lambda_{2}L_r \eta_c \sum_{k=1}^K q_{k}\sum_{e=0}^{E-1}\left\|g_{k,tE+e}\right\|_2
\leq \lambda_{2}L_rE\eta_cG_c.
\end{split}
\end{equation}
The derivation for Part A is complete.

Since the global feature prototype is aggregated using non-convex SGD, the server prototype $\mathbf{P}^m_{G}$ is the result of the client uploading and optimizing ${E}_k^m$ for $S$ rounds. Based on Assumption \ref{assump:Server}, we introduce the aggregated mean as an intermediate variable. 

Let $\mathbf{E}_{t+2}=\sum q_k E_{k,{t+2}}$ be the geometric center of the currently uploaded feature prototypes, and $\mathbf{E}_{t+1}=\sum q_k E_{k,{t+1}}$ be the geometric center of the features uploaded in the previous round. Here, we omit the modality superscript $m$ for ease of analysis. Therefore, for part A, we have:
\begin{equation}
\label{eq:feaP}
\begin{aligned}
\|\mathbf{P}_{G,{t+2}}-\mathbf{P}_{G,{t+1}}\| 
&= \|\mathbf{P}_{G,{t+2}} - \mathbf{E}_{t+2} + \mathbf{E}_{t+2}-\mathbf{E}_{t+1}+ \mathbf{E}_{t+1} -\mathbf{P}_{G,{t+1}}\|
\\& \leq \underbrace{\|\mathbf{P}_{G,{t+2}} - \mathbf{E}_{t+2}\|}_{A1} + \underbrace{\|\mathbf{E}_{t+2}-\mathbf{E}_{t+1}\|}_{A2}+ \underbrace{\|\mathbf{E}_{t+1} -\mathbf{P}_{G,{t+1}}\|}_{A3}.
\end{aligned}
\end{equation}
We found that Part A2 is completely consistent with the derivation of the logical value prototype, and the change in the mean is limited by the drift of the client's local parameters. Therefore, we have:
\begin{equation}
\|\mathbf{E}_{t+2}-\mathbf{E}_{t+1}\| \leq L_rE\eta_cG_c.
\end{equation}
By solving for the gradient of Eq. (\ref{eq:server}), we obtain:
\begin{equation}
\begin{aligned}
\nabla\mathcal{L}_{server}(\mathbf{P}^m_G)&=\frac{1}{2K}\sum^{K}_{k=1}\sum^{M}_{m=1}2({P}^m_k -E^m_k ) + \sum^{M}_{m=1}\nabla\mathcal{L}_{m}(\mathbf{P}^m_G)
\\ &=M(\mathbf{P}_{G} - \mathbf{E}_{avg})+ M\nabla\mathcal{L}_{m}(\mathbf{P}^m_G).
\end{aligned}
\end{equation}
Through algebraic manipulation, we obtain:
\begin{equation}
\begin{aligned}
\mathbf{P}_{G} - \mathbf{E}_{avg}= \frac{1}{M}\nabla\mathcal{L}_{\text{server}}(\mathbf{P}^m_G)- \nabla\mathcal{L}_{m}(\mathbf{P}^m_G).
\end{aligned}
\end{equation}
By applying the L2-norm and the triangle inequality to the above expression, we obtain:
\begin{equation}
\begin{aligned}
	\|\mathbf{P}_{G} - \mathbf{E}_{avg}\|_2 \leq \|\frac{1}{M}\nabla\mathcal{L}_{\text{server}}(\mathbf{P}^m_G)\|_2+ \|\nabla\mathcal{L}_{m}(\mathbf{P}^m_G)\|_2.
\end{aligned}
\end{equation}
According to assumption \ref{assump:Server}, therefore we have:
\begin{equation}
	\begin{aligned}
		{\Bbb E}[\|\mathbf{P}_{G} - \mathbf{E}_{avg}\|_2] &\leq 	{\Bbb E}[\|\frac{1}{M}\nabla\mathcal{L}_{\text{server}}(\mathbf{P}^m_G)\|_2]+ 	{\Bbb E}[\|\nabla\mathcal{L}_{m}(\mathbf{P}^m_G)\|_2]
		\\& \leq \frac{\delta_s}{M} +G_s.
	\end{aligned}
\end{equation}
%Accordingly, the derivation process for A1 and A3 is the same as described above. 
Accordingly, the derivation process for A1 and A3 is the same as described above. Therefore, considering that there are M modes in total, we can integrate the above process into Eq. (\ref{eq:feaP}) to obtain:
%Therefore, by integrating the above process into Eq. (\ref{eq:feaP}), we obtain:
\begin{equation}
	\label{eq:feaP_all}
	\begin{aligned}
		\|\mathbf{P}_{G,{t+2}}-\mathbf{P}_{G,{t+1}}\| 
		& \leq \underbrace{\|\mathbf{P}_{G,{t+2}} - \mathbf{E}_{t+2}\|}_{A1} + \underbrace{\|\mathbf{E}_{t+2}-\mathbf{E}_{t+1}\|}_{A2}+ \underbrace{\|\mathbf{E}_{t+1} -\mathbf{P}_{G,{t+1}}\|}_{A3}.
		\\& \leq 2M(\frac{\delta_s}{M} +G_s)+ML_rE\eta_cG_c=2\delta_s+2MG_s+ML_rE\eta_cG_c.
	\end{aligned}
\end{equation}
Based on the derivation results from Eq. (\ref{eq:feaP_all}) and Eq. (\ref{eq:logitI_all}), substituting them into Eq. (\ref{eq:lemmaAB}), and taking expectation of both sides:
\begin{equation}
	\label{eq:lemma2}
	\begin{split}
		\mathbb{E}[\mathcal{L}_{(t+1) E+0}]-\mathbb{E}[\mathcal{L}_{(t+1) E}] &\leq \lambda_1(2\delta_s+2MG_s+ML_rE\eta_cG_c) +\lambda_{2}L_rE\eta_cG_c
		\\& =(M\lambda_{1}+\lambda_{2})L_rE\eta_cG_c+2\lambda_{1}(\delta_{s}+MG_s),
	\end{split}
\end{equation}
which completes the proof.
\end{proof}

\subsection{Proof of Theorem \ref{theorem:One-round}}
\label{Proof:t1}
\textbf{Theorem \ref{theorem:One-round}}
Based on the above assumptions, using $\Gamma_{\mathrm{d}}=(M\lambda_{1}+\lambda_{2})L_rG_c$ and $\Gamma_{\mathrm{s}}=2\lambda_{1}(\delta_{s}+MG_s)$ represent the drift constant and server bias constant.
The expectation of the loss of an arbitrary client's local model before the start of a round of local iteration satisfies:
\begin{equation*}
\begin{split}
		\mathbb{E}[\mathcal{L}_{(t+1) E+0}]  &\leq \mathcal{L}_{tE+0} -(\eta_c-\frac{L \eta_c^2}{2}) \sum_{e=0}^{E} \|\nabla \mathcal{L}_{tE+e} \|_2^2 + \frac{L E \eta_c^2}{2} \sigma^2
		\\ & \quad + \Gamma_{d}E\eta_c+\Gamma_{s}.
	\end{split}
\end{equation*}
\begin{proof}
According to the expectation equation, the change in one communication round consists of two phases: the server-side phase and the client-side phase. The total change can be expressed as:
\begin{equation}
	\label{eq:Theorem1.0}
	\begin{split}
		\mathbb{E}[\mathcal{L}_{(t+1) E+0}]- \mathcal{L}_{tE+0} \leq \underbrace{\left({\Bbb E} [\mathcal{L}_{(t+1)E+0}]-{\Bbb E} [\mathcal{L}_{(t+1)E}]\right)}_{\text{server}} + \underbrace{\left({\Bbb E} [\mathcal{L}_{(t+1)E}] - {\Bbb E} [\mathcal{L}_{tE+0}] \right)}_{\text{client}}.
	\end{split}
\end{equation}
Substituting Lemma \ref{lemma:c1} into the second term on the right-hand side of Lemma \ref{lemma:c2}, we have:
\begin{equation}
	\label{eq:Theorem1.1}
	\begin{split}
		\mathbb{E}[\mathcal{L}_{(t+1) E+0}]  &\leq \mathbb{E}[\mathcal{L}_{(t+1) E}]+(M\lambda_{1}+\lambda_{2})L_rE\eta_cG_c+2\lambda_{1}(\delta_{s}+MG_s)
		\\ &\leq \mathcal{L}_{tE+0} -(\eta_c-\frac{L \eta_c^2}{2}) \sum_{e=0}^{E} \|\nabla \mathcal{L}_{tE+e} \|_2^2 + \frac{L E \eta_c^2}{2} \sigma^2
		\\ & \quad + (M\lambda_{1}+\lambda_{2})L_rE\eta_cG_c+2\lambda_{1}(\delta_{s}+MG_s).
	\end{split}
\end{equation}
To simplify the notation, we let $\Gamma_{d}=(M\lambda_{1}+\lambda_{2})L_rG_c$ represent the drift constant, and $\Gamma_{s}=2\lambda_{1}(\delta_{s}+MG_s)$ represent the server bias constant, then we have
\begin{equation}
\label{eq:Theorem1.2}
\begin{split}
\mathbb{E}[\mathcal{L}_{(t+1) E+0}]  &\leq \mathcal{L}_{tE+0} -(\eta_c-\frac{L \eta_c^2}{2}) \sum_{e=0}^{E} \|\nabla \mathcal{L}_{tE+e} \|_2^2 + \frac{L E \eta_c^2}{2} \sigma^2
\\ & \quad + \Gamma_{d}E\eta_c+\Gamma_{s},
\end{split}
\end{equation}
which completes the proof.
\end{proof}

\subsection{Proof of Theorem \ref{theorem:non-convex}}
\label{Proof:t2}
\textbf{Theorem \ref{theorem:non-convex}}
Based on the above assumptions, for an arbitrary client and any $\epsilon> 0$, the following inequality holds:
\begin{equation*} 
\begin{aligned}
	\frac{1}{T}\sum_{t=0}^{T-1} \sum_{e=0}^{E} \mathbb{E}\left[\left\|\mathcal{L}_{t E+e}\right\|_2^2\right]  
	&\leq\frac{2\left(\mathcal{L}_{t=1}-\mathcal{L}^*\right)}{T \eta_c\left(2-L \eta_c\right)}+\frac{LE\eta_c^2\sigma^2+2(\Gamma_{d}E\eta_c+\Gamma_{s})}{2 \eta_c-L \eta_c^2}
	\\& \leq \epsilon.
	\\ s.t. \quad \eta_c &< \min \left\{ \frac{2}{L}, \frac{(\epsilon - \Gamma_{\mathrm{d}} E) + \sqrt{(\epsilon - \Gamma_{\mathrm{d}} E)^2 - 2L(\epsilon + E\sigma^2)\Gamma_{\mathrm{s}}}}{L(\epsilon + E\sigma^2)} \right\}.
\end{aligned}
\end{equation*}

\begin{proof}
Transform the form of Theorem \ref{theorem:one-round} into
\begin{equation}
	\label{eq:Theorem2.1}
	\sum_{e=0}^E\left\|\mathcal{L}_{t E+e}\right\|_2^2 \leq \frac{\mathcal{L}_{t E+0}-\mathbb{E}\left[\mathcal{L}_{(t+1) E+0}\right]+\frac{L E \eta_c^2}{2} \sigma^2 + \Gamma_{d}E\eta_c+\Gamma_{s}}{\eta_c-\frac{L  \eta_c^2}{2}}.
\end{equation}
Take expectations of model $\theta$ on both sides, we have:
\begin{equation}
	\label{eq:Theorem2.2}
	\sum_{e=0}^E \mathbb{E}\left[\left\|\mathcal{L}_{t E+e}\right\|_2^2\right] \leq \frac{\mathbb{E}\left[\mathcal{L}_{t E+0}\right]-\mathbb{E}\left[\mathcal{L}_{(t+1) E+0}\right]+\frac{L E \eta_c^2}{2} \sigma^2 + \Gamma_{d}E\eta_c+\Gamma_{s} }{\eta_c-\frac{L  \eta_c^2}{2}}.
\end{equation}
Summing both sides of Eq. (\ref{eq:Theorem2.2}) over $T$ rounds, since $\sum_{t=1}^{T}\left(\mathbb{E}\left[\mathcal{L}_{t E+0}\right]-\mathbb{E}\left[\mathcal{L}_{(t+1) E+0}\right]\right) \leq \mathcal{L}_{t=0}-\mathcal{L}^*$, for each round:
\begin{equation} \label{eq:T-rounds}
	\begin{aligned}
		\frac{1}{T}\sum_{t=0}^{T-1} \sum_{e=0}^{E} \mathbb{E}\left[\left\|\mathcal{L}_{t E+e}\right\|_2^2\right]  
		&\leq \frac{\frac{1}{T} \sum_{t=0}^{T-1}\left(\mathbb{E}\left[\mathcal{L}_{t E+0}\right]-\mathbb{E}\left[\mathcal{L}_{(t+1) E+0}\right]\right)+ \frac{L E \eta_c^2}{2} \sigma^2 + \Gamma_{d}E\eta_c+\Gamma_{s} }{\eta_c-\frac{K \eta_c^2}{2}}\\
		&\leq \frac{\frac{1}{T}\left(\mathcal{L}_{t=0}-\mathcal{L}^*\right)+\frac{L E \eta_c^2}{2} \sigma^2 + \Gamma_{d}E\eta_c+\Gamma_{s}}{\eta_c-\frac{L \eta_c^2}{2}} \\
		&=\frac{2\left(\mathcal{L}_{t=0}-\mathcal{L}^*\right)+TLE\eta_c^2\sigma^2+2T(\Gamma_{d}E\eta_c+\Gamma_{s})}{T\left(2 \eta_c-L \eta_c^2\right)}  \\
		& =\frac{2\left(\mathcal{L}_{t=0}-\mathcal{L}^*\right)}{T \eta_c\left(2-L \eta_c\right)}+\frac{LE\eta_c^2\sigma^2+2(\Gamma_{d}E\eta_c+\Gamma_{s})}{2 \eta_c-L \eta_c^2}.
	\end{aligned}
\end{equation}
Given any $\epsilon>0$  the above equation satisfies
\begin{equation}
	\frac{2\left(\mathcal{L}_{t=0}-\mathcal{L}^*\right)}{T \eta_c\left(2-L \eta_c\right)}+\frac{LE\eta_c^2\sigma^2+2(\Gamma_{d}E\eta_c+\Gamma_{s})}{2 \eta_c-L \eta_c^2} \leq \epsilon.
\end{equation}
Then, we can obtain:
\begin{equation}
	T \geq \frac{2\left(\mathcal{L}_{t=0}-\mathcal{L}^*\right)}{\eta_c \epsilon\left(2-L \eta_c\right)-\eta_cE(L\eta_c \sigma^2 + 2 \Gamma_{d}\eta_c)-2\Gamma_{s}}.
\end{equation}
Since $T>0, \mathcal{L}_{t=0}-\mathcal{L}^*>0$, we can further derive:
\begin{equation}
	\label{eq:T}
	\eta_c \epsilon\left(2-L \eta_c\right)-\eta_cE(L\eta_c \sigma^2 + 2 \Gamma_{d}\eta_c)-2\Gamma_{s} > 0,    
\end{equation}
Since Eq. {(\ref{eq:T})} is a quadratic function that opens upwards, we solve it to obtain:
\begin{equation}
\label{eq:eta}
\eta_c < \min \left\{ \frac{2}{L}, \frac{(\epsilon - \Gamma_{\mathrm{d}} E) + \sqrt{(\epsilon - \Gamma_{\mathrm{d}} E)^2 - 2L(\epsilon + E\sigma^2)\Gamma_{\mathrm{s}}}}{L(\epsilon + E\sigma^2)} \right\},
\end{equation}
When $\Gamma_{\mathrm{s}}$ approaches 0, Eq. (\ref{eq:eta}) can be simplified to:
\begin{equation}
	\label{eq:eta2}
	\eta_c < \min \left\{ \frac{2}{L}, \frac{2(\epsilon - \Gamma_{\mathrm{d}} E)}{L(\epsilon + E\sigma^2)} \right\},
\end{equation}
which completes the proof.
\end{proof}

\section{Details of the Experimental Setup}
\label{AS:Experimental Setup}
\subsection{Dataset Description}
\label{AS:Dataset Description}
This paper focuses on multimodal federated learning under model heterogeneity and imbalanced modality distributions. To evaluate performance in modality-imbalanced scenarios, we select multiple datasets with diverse modalities and classes and construct distributed federated settings, denoted as datasets Caltech101 \footnote{https://data.caltech.edu/records/mzrjq-6wc02.},
Reuters \footnote{https://www.kaggle.com/datasets/nltkdata/reuters.},
NUS-WIDE \footnote{https://www.kaggle.com/datasets/xinleili/nuswide.}, and Youtube \footnote{http://archive.ics.uci.edu/ml/datasets.}. Detailed descriptions of each dataset are provided below.

The \textbf{Caltech101 dataset} is a widely used benchmark dataset introduced by the California Institute of Technology in 2003 and has been extensively adopted in computer vision, pattern recognition, and machine learning research. The dataset is designed to provide a well-annotated and low-noise image collection for object recognition and image classification tasks. It consists of 102 categories, including 101 object classes and one background category (BACKGROUND\_Google). 
To fully exploit the visual information in Caltech101, we adopt a multimodal feature extraction strategy that characterizes images from complementary perspectives. Specifically, we consider the following three modalities:
%	I1 (Gabor Features): Features extracted using Gabor filters at multiple orientations and frequencies, characterizing texture and spatial frequency information.
%	I2 (Wavelet Moments): Features combining moment invariants and wavelet transforms, providing robustness to rotation, scaling, and translation.
%	I3 (Census Transform Histogram): Features based on local intensity comparisons that describe global scene structure while remaining insensitive to illumination changes.
\begin{itemize}
	\item I1 (Histogram of Oriented Gradients): A 1984-dimensional Histogram of Oriented Gradients feature, which captures local structural and edge information. Images are partitioned into connected regions, within which gradient orientations are computed and aggregated into histograms. The regional histograms are then concatenated and block-normalized to improve robustness to illumination variations and local deformations.
	\item I2 (Gist Descriptor): A 512-dimensional Gabor feature designed to model texture and frequency responses. A bank of multi-scale and multi-orientation Gabor filters is applied to each image, followed by average pooling over a 4×4 spatial grid. The pooled responses from all filters and grid cells are concatenated to form the final representation.
	\item I3 (Local Binary Pattern): A 928-dimensional Local Binary Pattern feature that encodes local texture patterns. For each pixel, binary codes are generated by comparing the center pixel with its neighbors and converted into decimal values. The global image representation is obtained by computing the histogram of LBP codes over the entire image.
\end{itemize}	
%	  The \textbf{Reuters dataset}[21531,24892,34251,15506,11547] consists of 18,758 news articles collected by the Reuters news agency in 1987. The collection covers six thematic categories. The original Reuters dataset contains only English text as a single modality. In the multimodal version of the Reuters dataset, additional modalities have been introduced by providing translated versions of each news article in several European languages. Specifically, the English news documents have been translated into French, German, Spanish, and Italian, thereby forming a multilingual multimodal corpus that enables cross-lingual and cross-modal analysis.

The \textbf{Reuters dataset} is a widely adopted benchmark in multi-view learning research. It comprises 18,758 news articles collected by Reuters in 1987, which are categorized into six distinct topics. While the original corpus consists solely of English text, the multi-view variant augments these documents with translations into four additional European languages: French, German, Italian, and Spanish. Consequently, each sample is represented by five distinct views corresponding to these languages.
Regarding feature representation, the documents are encoded as Bag-of-Words (BoW) vectors weighted by TF-IDF. The specific feature dimensionality for each view is as follows: 21,531 for English, 24,892 for French, 34,251 for German, 15,506 for Italian, and 11,547 for Spanish.

The \textbf{NUS-WIDE dataset}, released by the National University of Singapore, is a large-scale multi-label image dataset comprising 8 distinct categories. It stands as a milestone benchmark in the fields of web image retrieval, multimodal learning, and semantic annotation research. The dataset provides both visual content and associated textual metadata, including user-contributed tags, titles, and descriptions from Flickr, enabling comprehensive cross-modal analysis. In this work, we extract four complementary feature modalities from NUS-WIDE, each capturing distinct aspects of visual and structural information:
\begin{itemize}
	\item I1 (Cube Color Histogram): A 64-dimensional color histogram computed in a perceptually robust color space. Images are first transformed from the RGB color space to a color space less sensitive to illumination variations, and color distributions are then quantified to obtain global color representations.
	\item I2 (Color Correlation Map): A 144-dimensional color correlogram that captures both color statistics and their spatial correlations. This representation models not only the occurrence probability of individual colors but also the co-occurrence of identical or different colors at predefined spatial distances.
	\item I3 (Block-wise Color Moments, BOC): A 225-dimensional block-wise color moments feature that preserves spatial layout information. Each image is uniformly partitioned into a	5×5 grid, and low-order color moments are computed within each block across multiple color channels. The resulting block-level features are concatenated to form the final representation.
	\item L (Bag-of-Visual-Words): A 500-dimensional bag-of-visual-words representation that treats an image as a document composed of visual words. Local keypoints are first detected and described using local feature descriptors, which are then clustered to construct a visual vocabulary. Each image is subsequently encoded as a histogram over the learned visual words.
\end{itemize}	

The\textbf{ YouTube dataset} comprises a large collection of short video clips sourced from the YouTube platform, organized around specific events or actions and spanning 10 distinct categories. As a naturally multimodal data source, it provides synchronized visual, auditory, and temporal information. In this study, we extract six complementary feature modalities to capture diverse aspects of the video content:
\begin{itemize}
	\item I1 (Spatiotemporal Cuboids): A 2,000-dimensional spatiotemporal interest point descriptor that captures local regions exhibiting significant variations across both spatial and temporal dimensions. This representation models local motion patterns and appearance changes using three-dimensional spatiotemporal cuboids, enabling effective characterization of dynamic visual structures.
	\item I2 (Optical Flow Histogram): A 1,024-dimensional temporally pooled histogram of oriented gradients representation that encodes global shape and edge information. HOG features are extracted frame-wise and aggregated via temporal pooling operations to obtain a compact global descriptor while preserving spatial structural cues.
	\item T (Frame-level HOG): A 64-dimensional global optical flow histogram that summarizes pixel-level motion statistics. Optical flow fields are computed between consecutive frames, and the distributions of flow magnitudes and orientations are statistically modeled to form a concise representation of overall motion dynamics.
	\item A1 (Mel-Frequency Features): A 512-dimensional Mel-frequency cepstral coefficient representation that characterizes the spectral properties of audio signals. The audio stream is segmented into frames and processed through windowing, short-time Fourier transform, Mel-scale filtering, and discrete cosine transform, yielding perceptually robust acoustic features.
	\item A2 (Volume Streams): A 64-dimensional energy-based descriptor that captures variations in audio intensity over time. Frame-level energy statistics are computed using squared amplitude or root mean square measures and aggregated to reflect global loudness dynamics.
	\item A3 (Spectrogram-based Features): A 647-dimensional spectrogram-based statistical representation that models the joint time–frequency distribution of the audio signal. Spectrograms are generated via short-time Fourier transform and partitioned along both temporal and frequency axes, with energy statistics aggregated within each region to provide a hierarchical description of audio structure.
\end{itemize}	

To accurately simulate complex data distribution scenarios in heterogeneous multimodal federated learning, this study designs three client modality distribution configurations. The “M2” setup refers to each client randomly selecting two modalities from the complete modality set to form its local dataset; the “M1+” setup requires each client to possess at least one modality, while allowing variations in modality composition across different clients; and the “M1” setup restricts each client to holding only a single modality. The number of clients is equal to the number of modalities. In Figures \ref{apfig:M2}, \ref{apfig:M1+}, and \ref{apfig:M1}, we visually present the client modality distribution across four datasets. In these visualizations, each circle represents a client, with the circle’s area proportional to the sample size of the client’s data for the corresponding modality, intuitively illustrating the diversity of modality distribution and the imbalance in data volume under heterogeneous settings.

To evaluate the performance of the algorithm in larger-scale client environments, we scale the number of clients for the four datasets as follows: dataset Caltech101 to 50 clients, dataset Reuters to 100 clients, and datasets NUS-WIDE and YouTube to 20 clients each. The experiments adopt the "M1+" modality configuration, with a Dirichlet distribution (concentration parameter $\alpha$ = 0.5) used to partition the data among clients in a non-IID manner. The resulting client data distributions for the scaled scenarios are visualized in Figure \ref{apfig: Large_K_M1+}.
\begin{figure*}[h]
	\centering
	\begin{subfigure}{0.23\textwidth}
		\includegraphics[width=\linewidth]{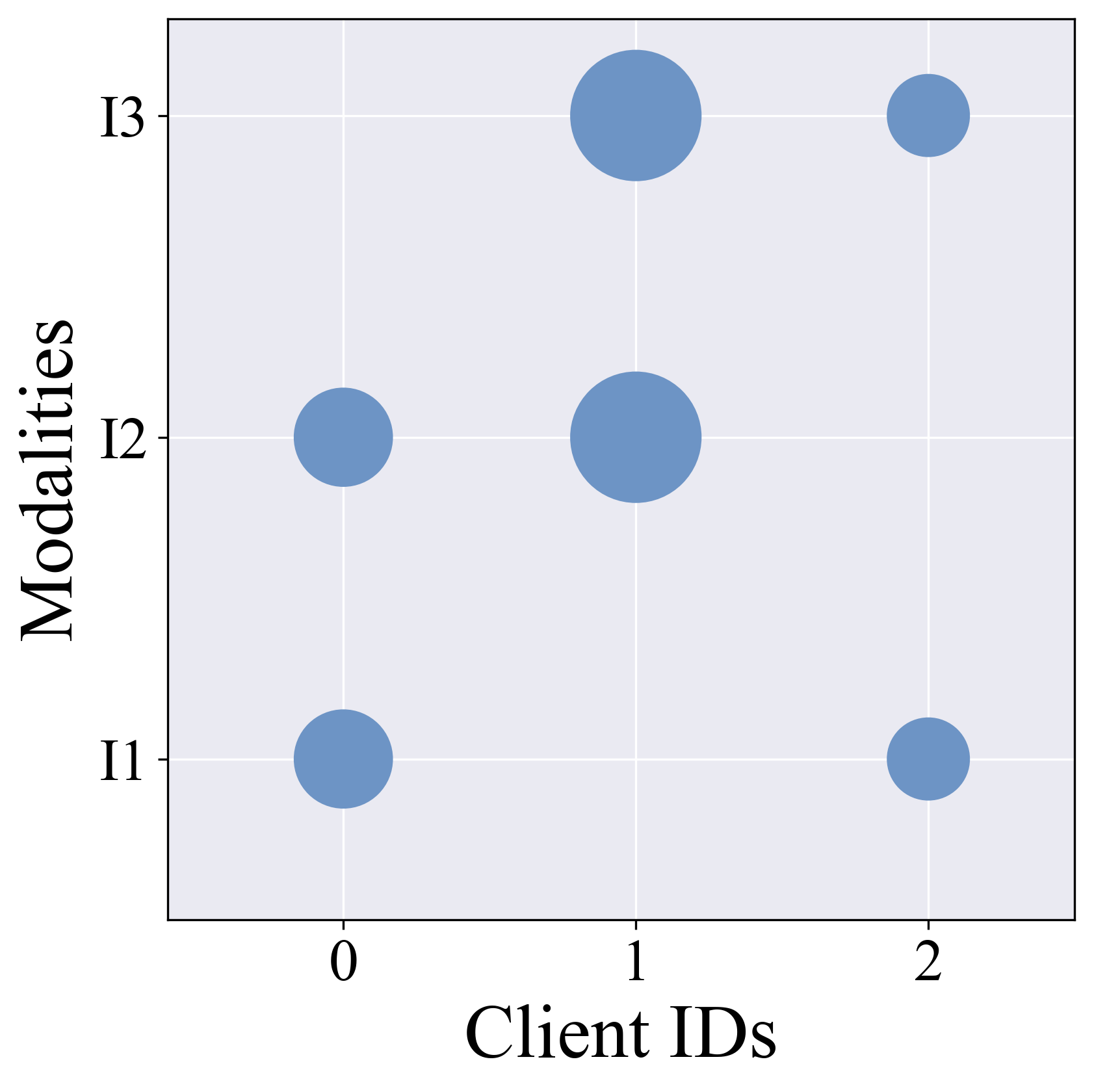}
		\caption{Caltech101}
	\end{subfigure}
	\hfill
	\begin{subfigure}{0.23\textwidth}
		\includegraphics[width=\linewidth]{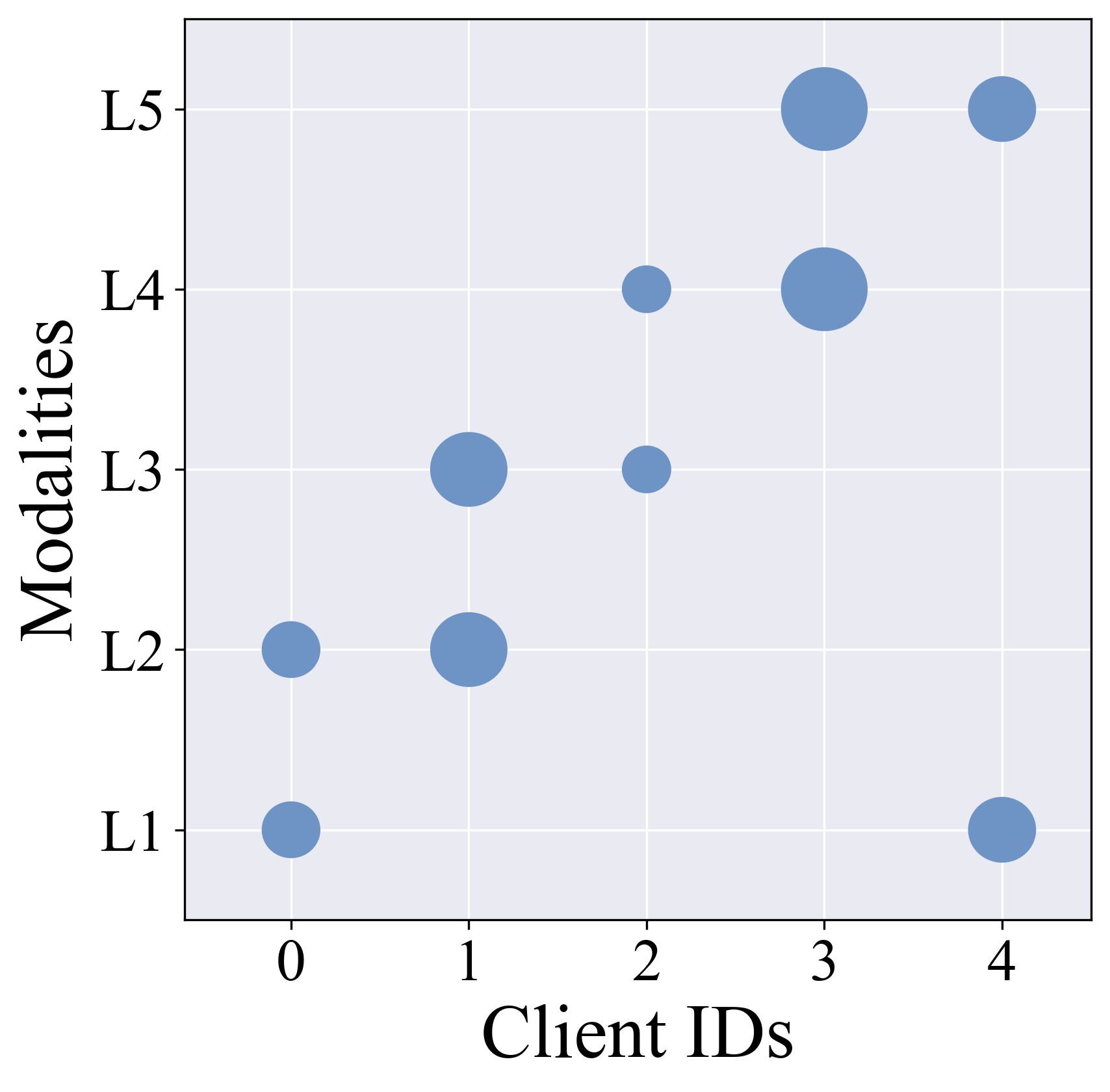}
		\caption{Reuters}
	\end{subfigure}
	\hfill
	\begin{subfigure}{0.23\textwidth}
		\includegraphics[width=\linewidth]{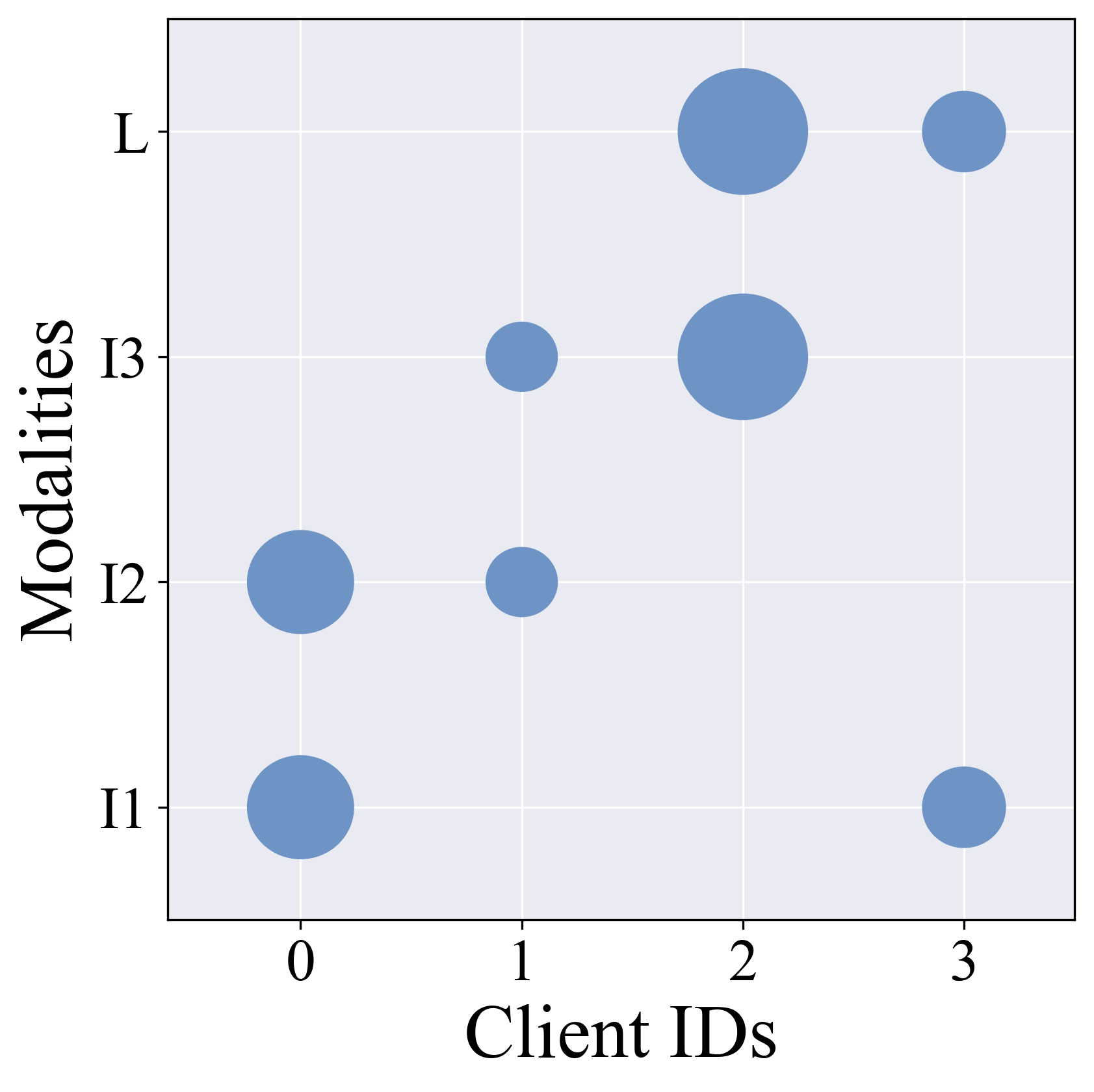}
		\caption{NUS-WIDE}
	\end{subfigure}
	\hfill
	\begin{subfigure}{0.23\textwidth}
		\includegraphics[width=\linewidth]{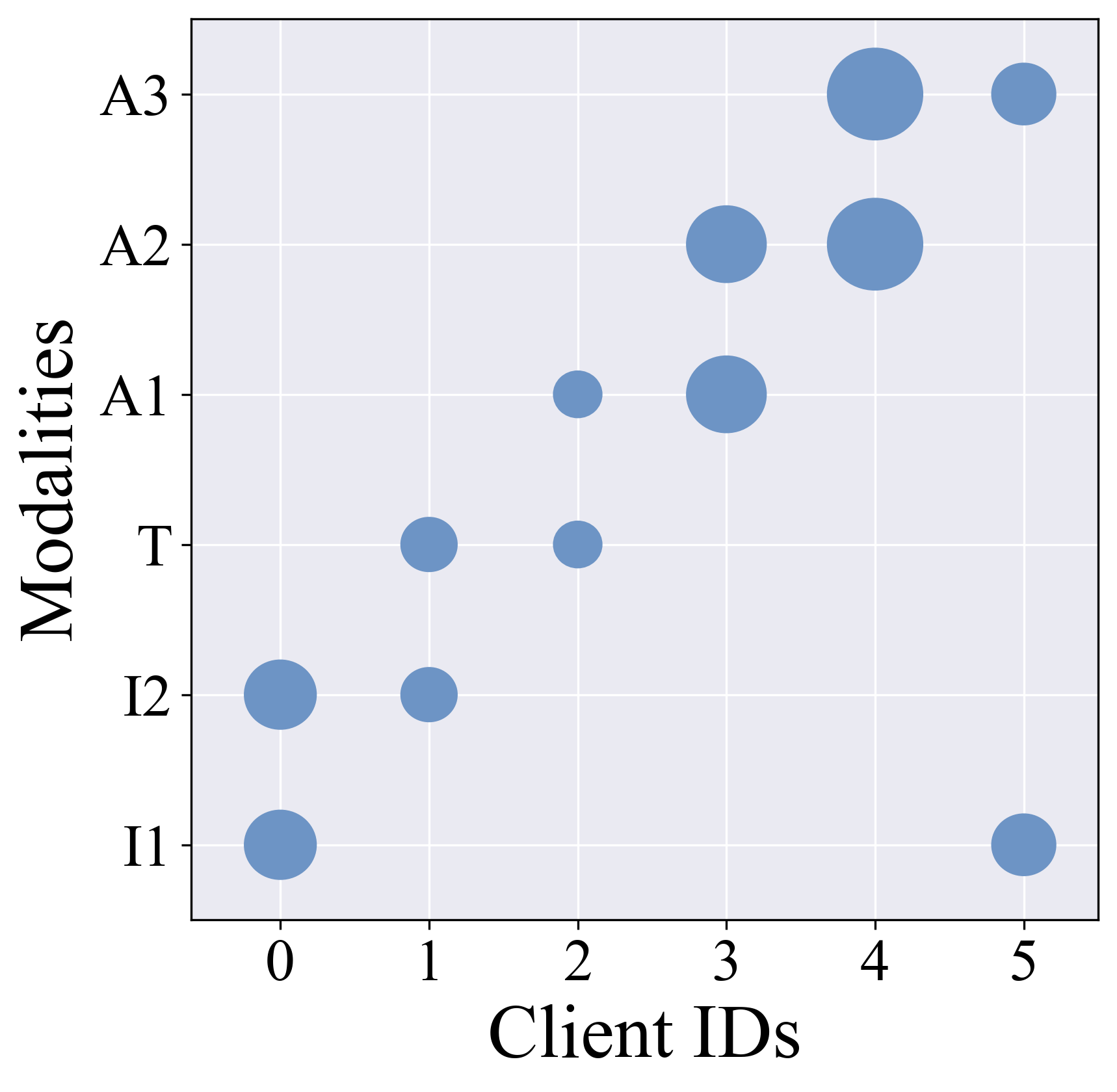}
		\caption{Youtube}
	\end{subfigure}
	\caption{The distribution of these datasets is set in the M2 scenario. A larger circle means a larger sample size.}
	\label{apfig:M2}
\end{figure*}
\begin{figure*}[h]
	\centering
	\begin{subfigure}{0.23\textwidth}
		\includegraphics[width=\linewidth]{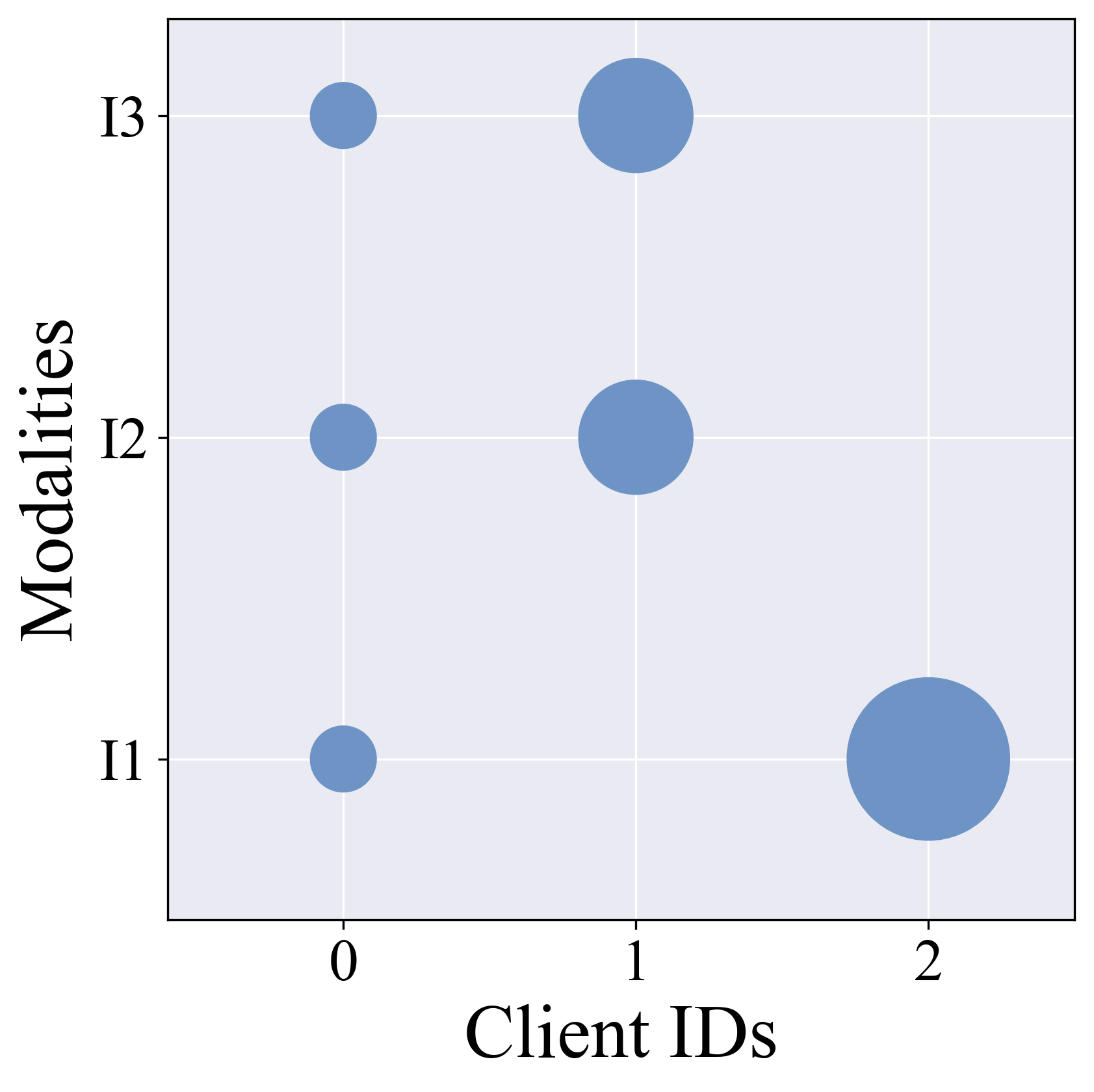}
		\caption{Caltech101}
	\end{subfigure}
	\hfill
	\begin{subfigure}{0.23\textwidth}
		\includegraphics[width=\linewidth]{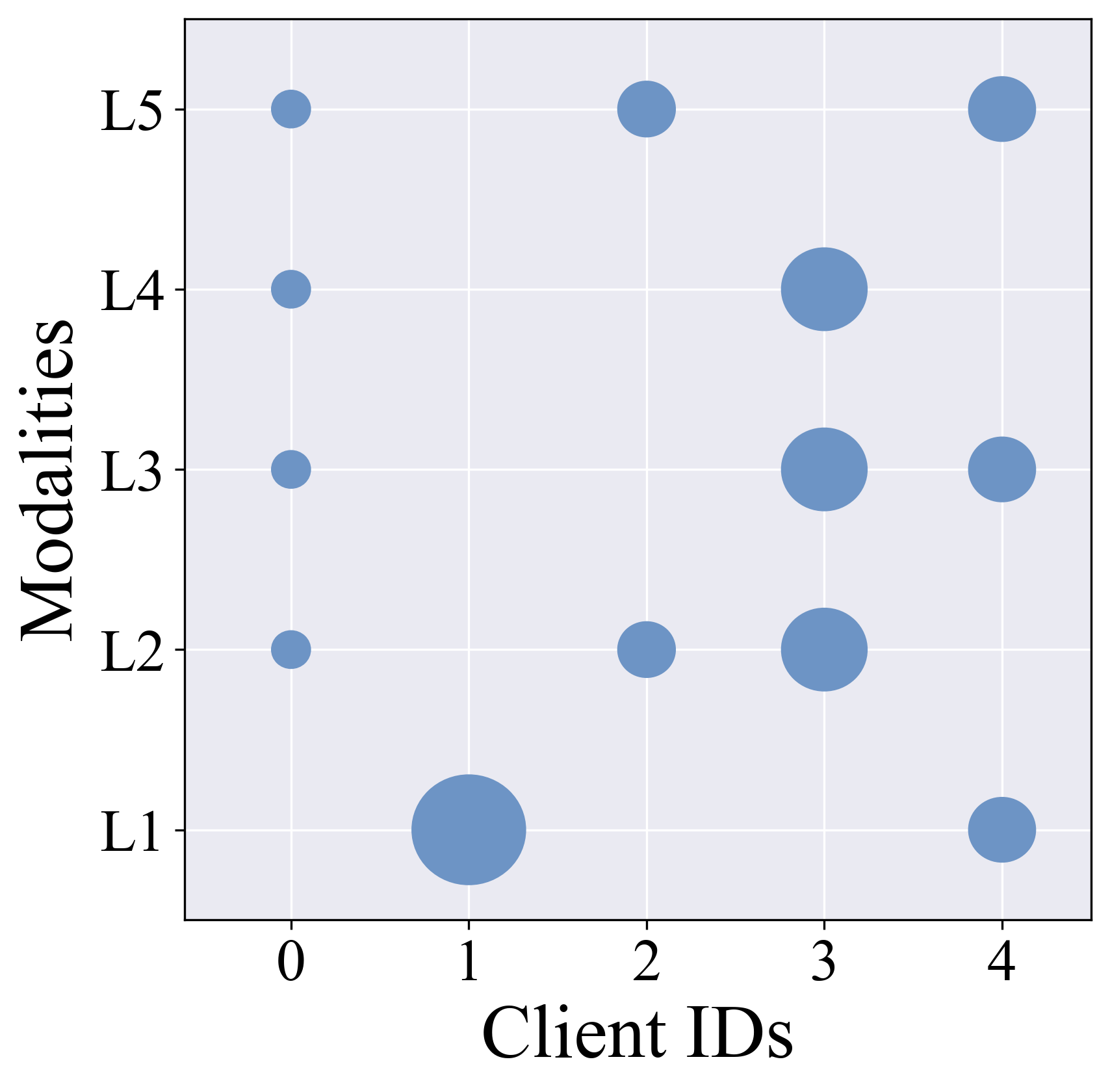}
		\caption{Reuters}
	\end{subfigure}
	\hfill
	\begin{subfigure}{0.23\textwidth}
		\includegraphics[width=\linewidth]{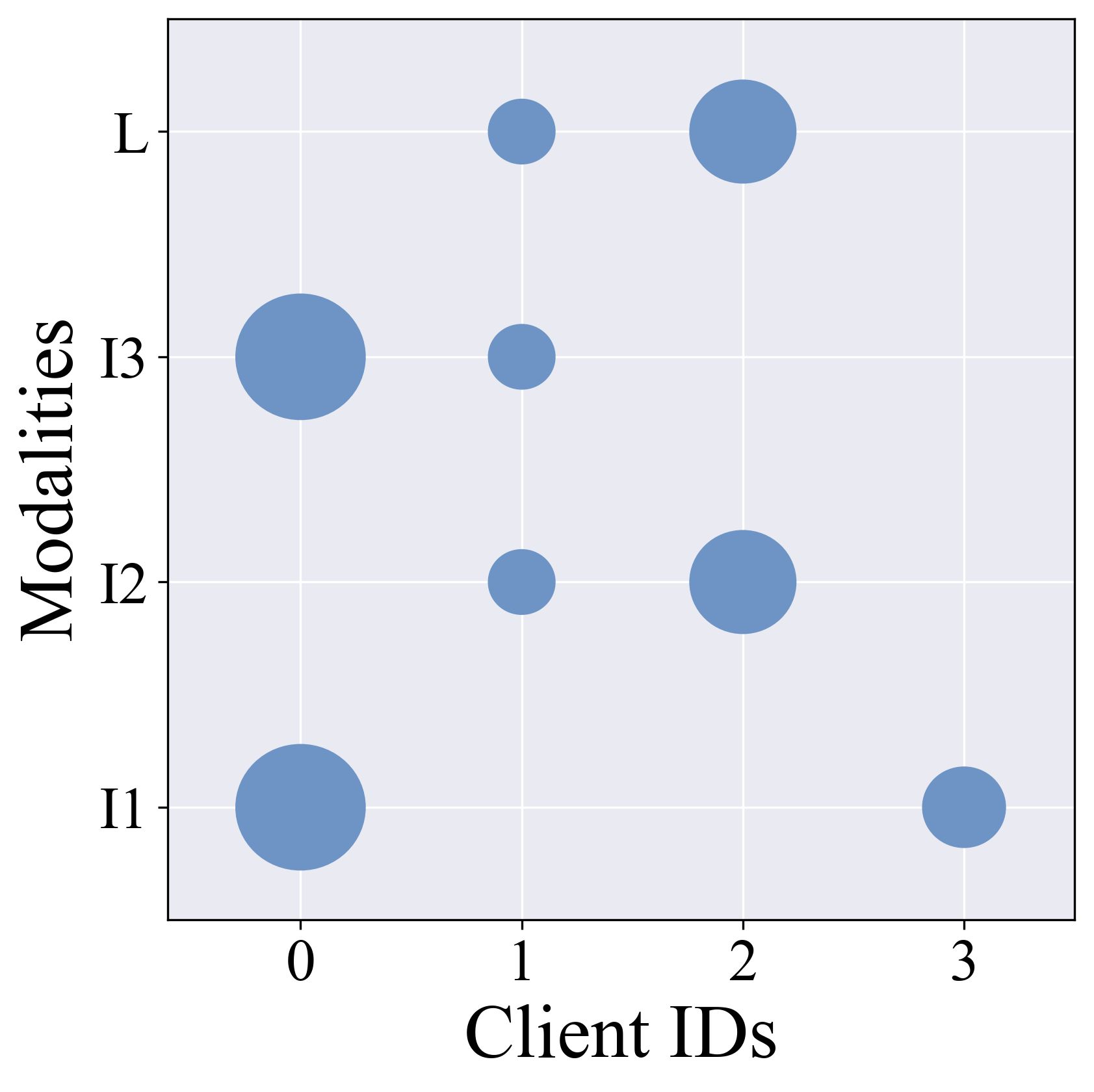}
		\caption{NUS-WIDE}
	\end{subfigure}
	\hfill
	\begin{subfigure}{0.23\textwidth}
		\includegraphics[width=\linewidth]{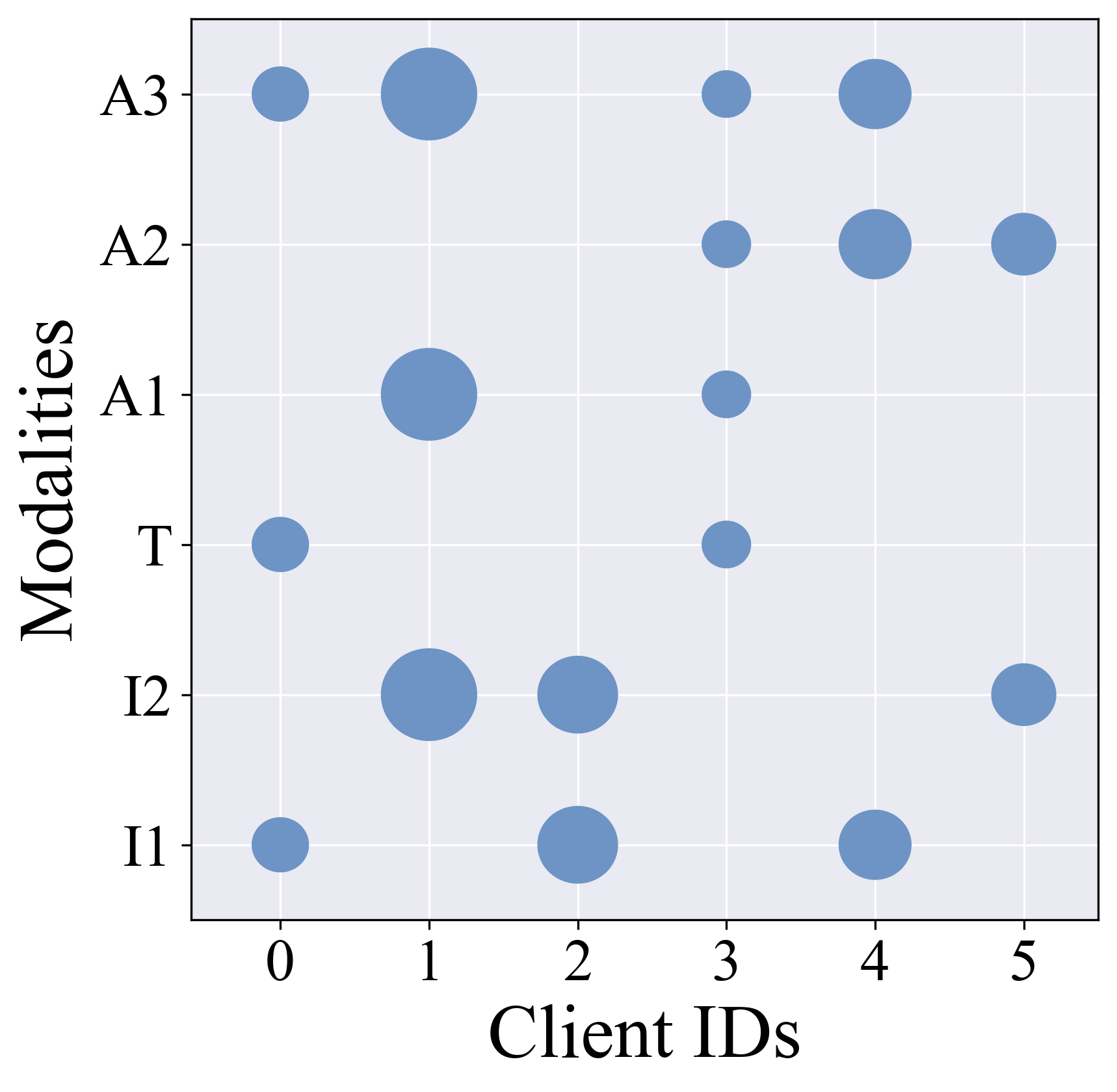}
		\caption{Youtube}
	\end{subfigure}
	\caption{The distribution of these datasets is set in the M1+ scenario. A larger circle means a larger sample size.}
	\label{apfig:M1+}
\end{figure*}
\begin{figure*}[h]
	\centering
	\begin{subfigure}{0.23\textwidth}
		\includegraphics[width=\linewidth]{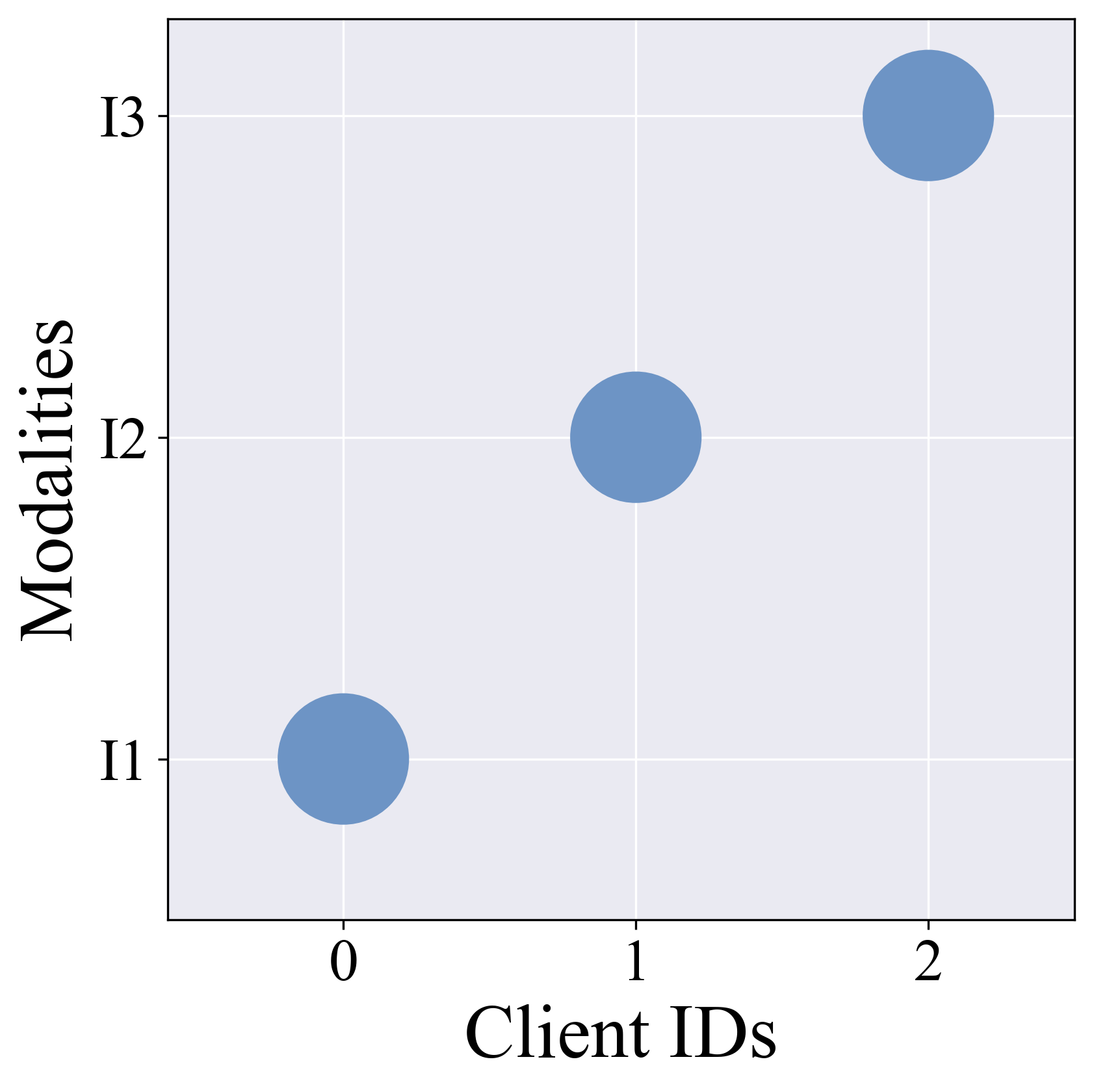}
		\caption{Caltech101}
	\end{subfigure}
	\hfill
	\begin{subfigure}{0.23\textwidth}
		\includegraphics[width=\linewidth]{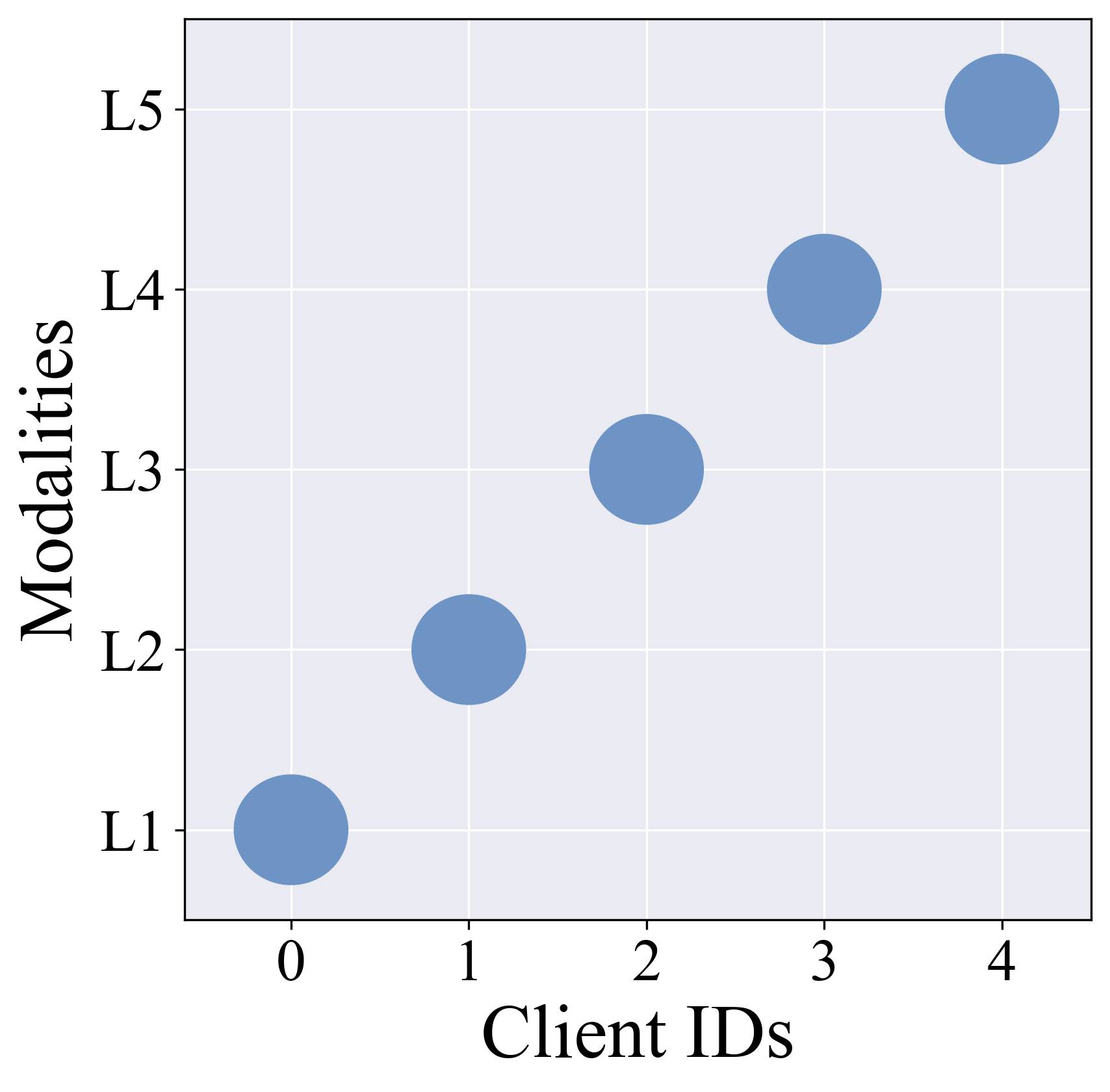}
		\caption{Reuters}
	\end{subfigure}
	\hfill
	\begin{subfigure}{0.23\textwidth}
		\includegraphics[width=\linewidth]{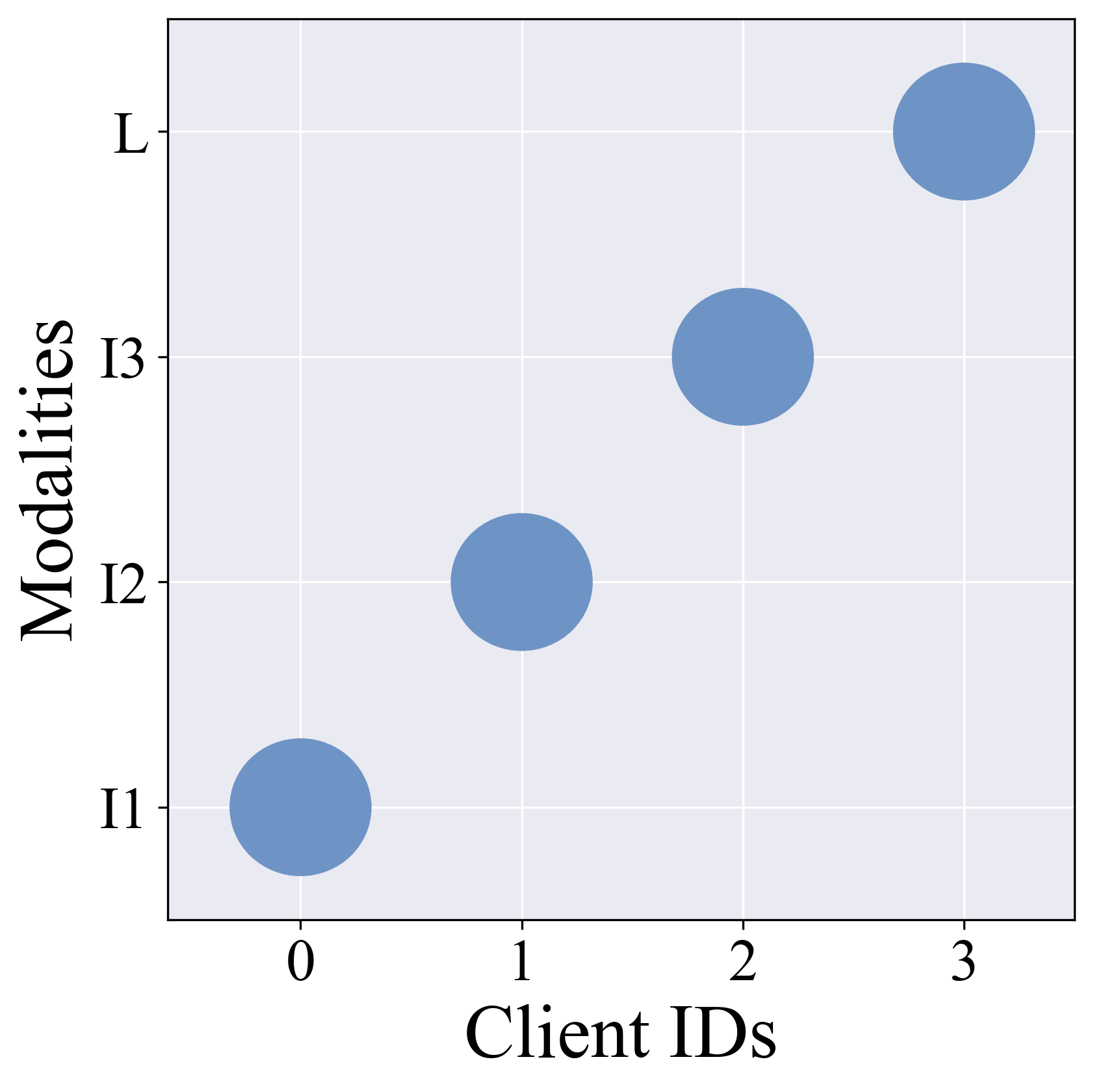}
		\caption{NUS-WIDE}
	\end{subfigure}
	\hfill
	\begin{subfigure}{0.23\textwidth}
		\includegraphics[width=\linewidth]{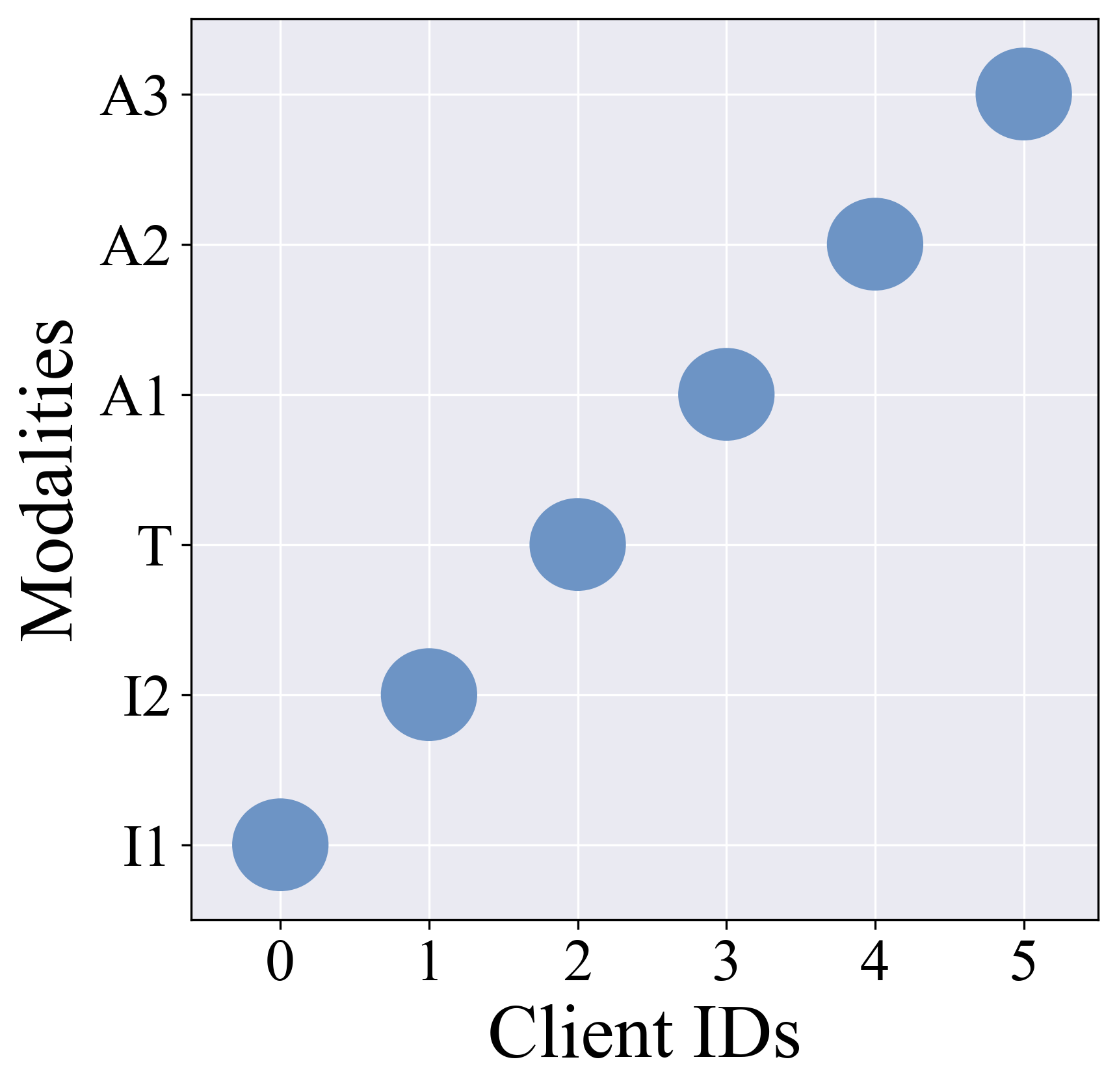}
		\caption{Youtube}
	\end{subfigure}
	\caption{The distribution of these datasets is set in the M1 scenario. A larger circle means a larger sample size.}
	\label{apfig:M1}
\end{figure*}
\begin{figure*}[h]
	\centering 
	\includegraphics[width=1\textwidth]{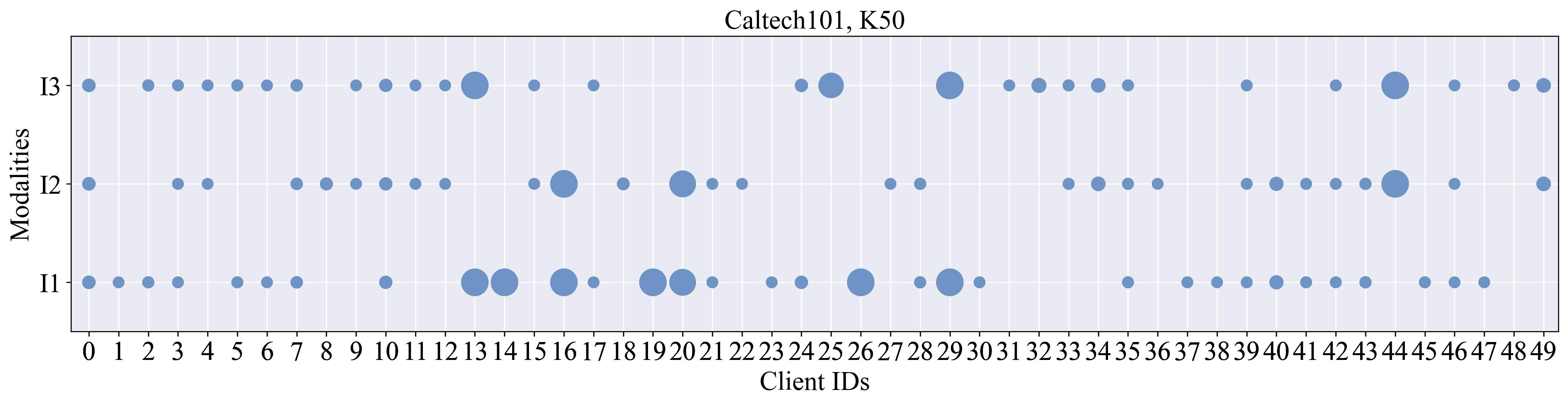}
	\includegraphics[width=1\textwidth]{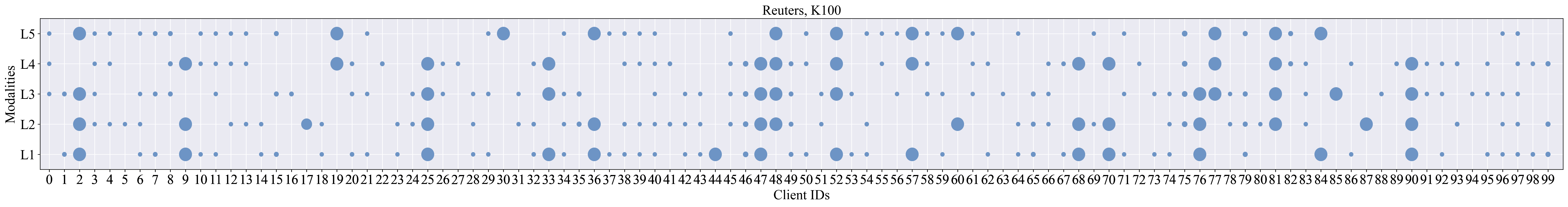}
	\includegraphics[width=0.48\textwidth]{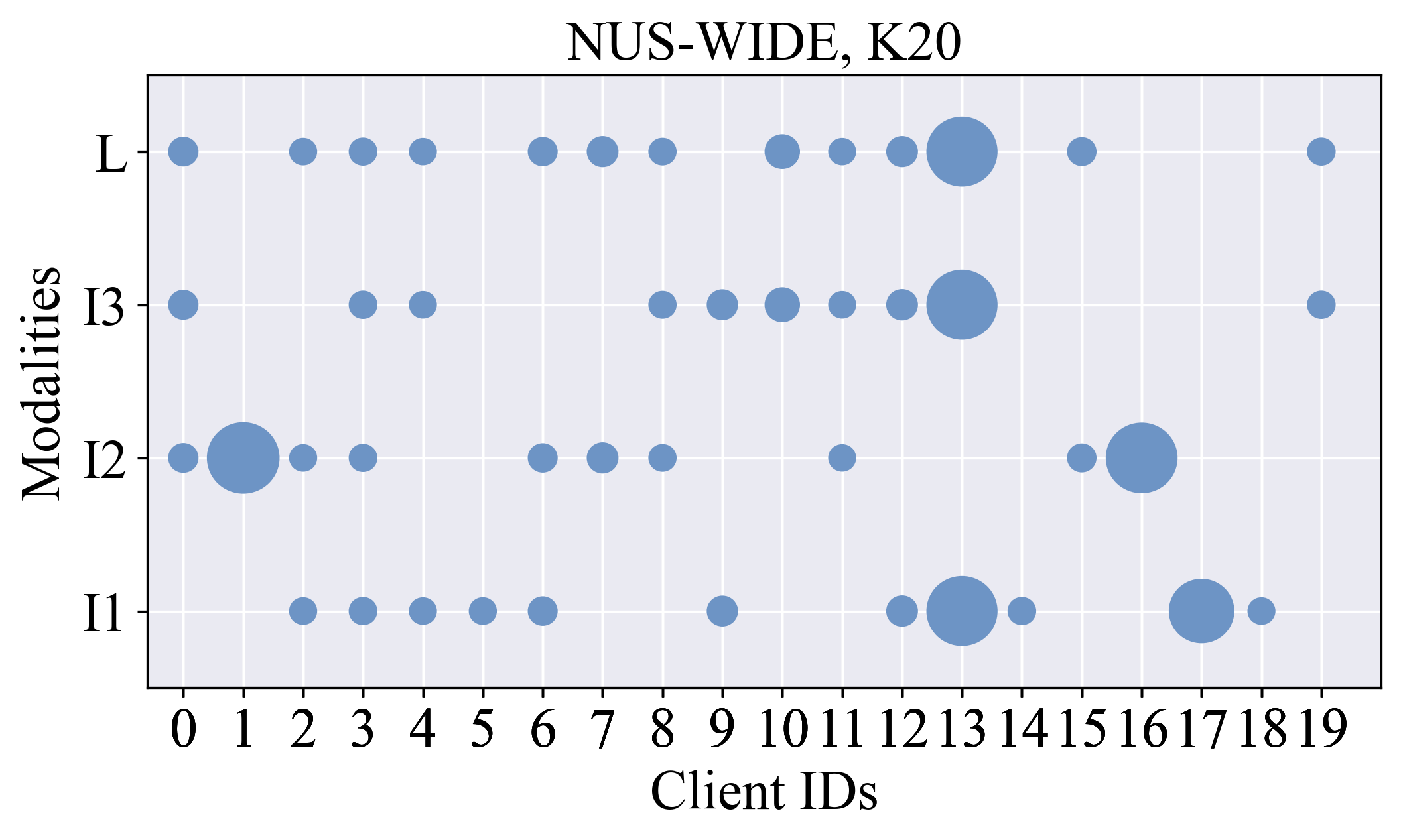}
	\includegraphics[width=0.48\textwidth]{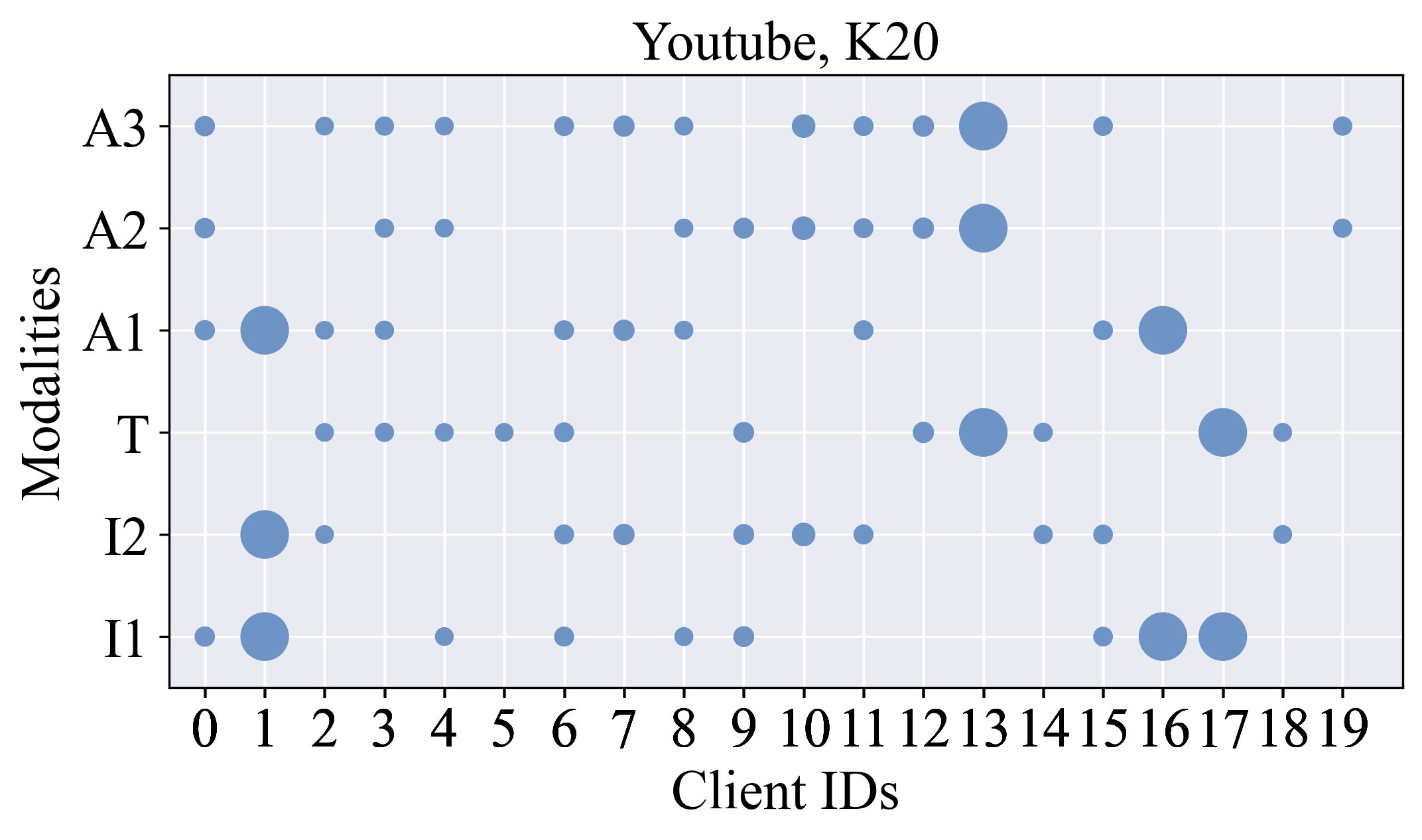}
	\caption{In scenarios involving a larger number of clients, the data distribution across these four datasets within the M1+ scenario.}
	\label{apfig: Large_K_M1+}
\end{figure*}

\subsection{Heterogeneous Model.}
\label{AS:Heterogeneous Model}
After the modality-specific feature preprocessing described in Section \ref{AS:Dataset Description}, we further assign heterogeneous feature extractors to different modalities in order to obtain modality-specific embeddings.
Specifically, the model architecture for each modality is stochastically sampled from a candidate pool, as listed in Table \ref{tab:HM}, thereby simulating realistic model heterogeneity in federated learning.
To further enforce architectural diversity, the depth (number of repeating blocks) and width (hidden dimensions) of the networks are also randomized.

Table \ref{tab:HM} details the sequential structure of these extractors. The symbol “*” denotes the number of repeated modules, which takes integer values in the range of $[2,5]$ in our experiments. The hidden dimension of each model is set as a multiple of 32 and is kept constant or gradually reduced as the number of modules increases, balancing representational capacity and model complexity.
This randomized model assignment strategy enables a controlled yet realistic evaluation of multimodal federated learning under heterogeneous architectures.
\begin{table*}[h]
	\centering
	\caption{Heterogeneous model architectures for modality-specific feature extraction.}
	\label{tab:HM}
	\resizebox{\linewidth}{!}{
		\begin{tabular}{c|c|cc}
			\toprule[1pt]
			Model&Sequentially Connected Feature Extractors & Applicable Modality Types \\
			\midrule
			DNN&[input\_dim, *(hidden\_dim, Relu), Classifier] &Image, Language, Audio, Time-series\\
			CNN1D&[*(Conv1d, Relu, AdaptiveMaxPool1d), *(hidden\_dim, Relu), Classifier]& Language, Audio, Time-series\\
			CNN2D& [*(Conv2d, Relu), *(hidden\_dim, Relu), Classifier] &Image\\
			TextCNN&[*(Conv1d, Relu, AdaptiveMaxPool1d), Concat, *(hidden\_dim, Relu), Classifier] &Language\\
			Resmodel&[input\_dim, *ResBlock(hidden\_dim,BatchNorm1d, Relu), Classifier] &Audio, Time-series\\
			
			\bottomrule[1pt]
		\end{tabular}
	}
\end{table*}
%\begin{table}[t]
%	\centering
%	\caption{Name of this table}
%	\begin{tabular}{ | c | l | l | }
	%		\hline
	%		Items & Advantages & Disadvantages \\ \hline
	%		\begin{minipage}[b]{0.3\columnwidth}
		%			\centering
		%			\raisebox{-.5\height}{\includegraphics[width=\linewidth]{data_distru/Caltech101_data_3modal.png}}
		%		\end{minipage}
	%		& blabla
	%		& blabla
	%		\\ \hline
	%	\end{tabular}
%\end{table}

\subsection{Baseline Methods}
\label{AS:Baseline Methods}
To ensure fair comparison, we re-implemented all baseline methods within a unified HtFLlib framework \cite{Zhang2025htfllib} and evaluated them under identical experimental settings. Specifically, we include representative prototype-based federated learning methods FedProto \cite{tan2022fedproto}, FedTGP \cite{zhang2024fedtgp}, and FedPall \cite{zhang2025fedpall}, as well as state-of-the-art multimodal federated learning approaches Harmony \cite{ouyang2023harmony}, FedMVP \cite{che2024leveraging}, and FedMobile \cite{liu2025fedmobile}, with local training (Local) serving as a reference baseline. 

Prior to the main experiments, we conducted a preliminary investigation into local multimodal fusion strategies, comparing feature summation (“sum”) and feature concatenation (“concat”). Results indicated that feature summation (“sum”) yields better performance. Therefore, during local training, for Local, the prototype-based methods, and our proposed MFedPBA, when a client possesses multimodal data, the features extracted by heterogeneous encoders are summed before being sent to the server. In contrast, the multimodal federated learning methods Harmony, FedMVP, and FedMobile follow their original local learning strategies as described in their respective papers.

Furthermore, we note that no publicly available model-heterogeneous multimodal federated learning methods are directly applicable for comparative evaluation. To establish a reasonable heterogeneous experimental environment, we selected state-of-the-art homogeneous multimodal federated learning methods as baselines and adapted their communication mechanisms by modifying the uploaded parameters from full local models to classifier parameters only, thereby accommodating model-heterogeneous federated learning scenarios.

\subsection{Parameter Setting Details}
\label{AS:Parameter Setting Details}
The parameters of our method are detailed in Table \ref{tab:para_setting}.
We ran these experiments on 2 NVIDIA GeForce RTX 4090 GPUs with AMD Ryzen 9 9950X 16-Core Processor (32 threads) for all methods.
\begin{table*}[]
	\centering
	\caption{List of Hyperparameters.}
	\label{tab:para_setting}
	%	\resizebox{\linewidth}{!}{
		\begin{tabular}{cccccccccccccc}
			\toprule[1pt]
			&Descriptions & Caltech101 &Reuters &NUS-WIDE &Youtube \\
			\midrule
			&Total rounds $T$ &400 / 500 &200 &400 &400 \\
			\midrule
			\multirow{3}{*}{\centering{Server}}
			&Server optimizers &\multicolumn{4}{c}{SGD}
			\\
			&Server learning rate $\eta_s$ &\multicolumn{4}{c}{0.005$\sim$0.01} \\
			&Server training epoch $S$ &\multicolumn{4}{c}{10} \\
			\midrule
			\multirow{6}{*}{\centering{Client}}
			&Local training epoch $E$ & \multicolumn{4}{c}{2}\\
			&Training batch size $\mathcal{B}$& \multicolumn{4}{c}{12}\\
			&Local optimizers &\multicolumn{4}{c}{SGD}
			\\
			&Weighting parameter $\lambda$ &\multicolumn{4}{c}{$\lambda_1 \in \{0.01,0.01, 0.1\}$, $\lambda_2 \in \{0.5,1, 5\}$}
			\\
			&Local learning rate $\eta_c$&0.005$\sim$0.01 &0.001$\sim$0.005 &0.01 &0.005$\sim$0.01 \\ 
			&Feature embedding dimension $d_D$&64 &24 &32 &48\\
			
			\bottomrule[1pt]
		\end{tabular}
		%	}
\end{table*}

\section{Details of the Experimental Result}
\label{AS:Experimental Result}
%\subsection{Test Accuracy \& Communication Efficiency}
%\label{AS:Test Accuracy}
In the main text, we report the test accuracy of each method across four multimodal datasets under three data partitioning schemes: “M2”, “M1+”, and “M1”. In the appendix, we provide detailed client-level experimental results and performance curves for each data partitioning scheme. 

Regarding the evaluation metrics, the accuracy for an individual client $k$ is defined as the ratio of correctly classified samples ($N_{k,correct}$) to the total number of samples on that client i.e., $acc_k = \frac{N_{k,correct}}{N_k^{test}}$. To account for the influence of data volume, the aggregated accuracy (Avg) reported in the tables is calculated as the arithmetic mean of client accuracies, expressed as: $Avg =\frac{\sum_{k=1}^{K}N_{k,correct}}{\sum_{k=1}^{K}N_k^{test}} $.

The organization of the experimental results corresponding to each setting is as follows:
Under the M2 partitioning scheme, the data partition is illustrated in Figure \ref{apfig:M2}. The corresponding test results and performance curves are presented in Table \ref{tab:M2_Detailed} and Figure \ref{apfig:M2_conv}, respectively.
Similarly, results for the M1+ scheme are presented in Figure \ref{apfig:M1+}, Table \ref{tab:M1+_Detailed}, and Figure \ref{apfig:M1+_conv}, while those for the M1 scheme are provided in Figure \ref{apfig:M1}, Table \ref{tab:M1_Detailed}, and Figure \ref{apfig:M1_conv}, respectively.

Furthermore, we extend the evaluation to a larger client pool to assess the scalability of the proposed framework. Based on the dataset size, Dataset Caltech101 is distributed across 50 clients, Dataset Reuters is scaled to 100 clients, and Datasets NUS-WIDE and Youtube are partitioned into 20 clients each, following the M1+ partitioning scheme (as illustrated in Figure \ref{apfig: Large_K_M1+}). The models are trained until stable convergence is achieved. The detailed experimental configurations are summarized in the row labeled “K\#” in Table \ref{tab:main}, and the resulting performance curves are depicted in Figure \ref{apfig:K_conv}. The experimental results substantiate that MFedPBA consistently maintains superior performance and resilience, even within large-scale federated learning scenarios.

\begin{table*}[t]
	\centering
	\caption{Performance comparison (\%) of all compared methods on Caltech101, Reuters, NUS-WIDE, and Youtube using M2 data partitioning, where the number of clients $K$ is equal to the number of modalities $M$. The $k1$ represents the client with ID 1.}
	\label{tab:M2_Detailed}
	\renewcommand{\arraystretch}{1.1}
	\resizebox{\linewidth}{!}{
		\begin{tabular}{c|c||c|ccc|ccc||c}
			\toprule
			\multicolumn{2}{c||}{Dataset} & Local & FedProto & FedTGP & FedPall & Harmony & FedMVP & FedMobile & \textbf{MFedPBA} \\
			\midrule
			\multirow{4}{*}{\centering{Caltech101}} 
			& k1 &  40.48$_{\pm 0.86}$ & 37.96$_{\pm 0.75}$ & 39.71$_{\pm 1.14}$ & 39.81$_{\pm 0.96}$ & 42.14$_{\pm 0.17}$ & 35.01$_{\pm 0.96}$ & 41.89$_{\pm 1.07}$ & 47.35$_{\pm 0.19}$\\
			& k2 &  43.28$_{\pm 0.42}$ & 38.56$_{\pm 0.21}$ & 39.86$_{\pm 0.94}$ & 44.18$_{\pm 0.16}$ & 46.44$_{\pm 0.15}$ & 41.28$_{\pm 0.42}$ & 44.72$_{\pm 0.28}$ & 48.19$_{\pm 0.18}$\\
			& k3 &  44.24$_{\pm 0.46}$ & 39.89$_{\pm 0.15}$ & 43.52$_{\pm 1.65}$ & 43.3$_{\pm 0.15}$  & 47.23$_{\pm 0.23}$ & 44.8$_{\pm 0.49}$  & 43.52$_{\pm 0.14}$ & 52.44$_{\pm 0.27}$\\
			\cmidrule{2-10}
			& \centering Avg &  42.65$_{\pm 1.47}$ & 38.72$_{\pm 0.42}$  & 40.78$_{\pm 1.31}$ & 42.29$_{\pm 0.27}$ & 45.29$_{\pm 0.38}$ & 40.23$_{\pm 0.65}$ & 43.51$_{\pm 0.54}$ & 49.04$_{\pm 0.21}$\\
			\midrule\midrule
			\multirow{6}{*}{\centering {Reuters}} 
			& k1 &  77.22$_{\pm 1.58}$ & 82.06$_{\pm 0.32}$ & 78.63$_{\pm 0.78}$ & 76.92$_{\pm 0.62}$ & 76.94$_{\pm 0.91}$ & 82.53$_{\pm 1.67}$ & 80.8$_{\pm 1.07}$  & 82.55$_{\pm 0.79}$\\
			& k2 &  66.82$_{\pm 1.42}$ & 64.49$_{\pm 0.65}$ & 62.23$_{\pm 1.46}$ & 66.88$_{\pm 0.61}$ & 68.12$_{\pm 1.05}$ & 65.62$_{\pm 1.11}$ & 68.11$_{\pm 0.38}$ & 67.83$_{\pm 0.78}$\\
			& k3 &  76.69$_{\pm 0.98}$ & 79.83$_{\pm 1.32}$ & 73.33$_{\pm 1.47}$ & 79.87$_{\pm 0.23}$ & 79.43$_{\pm 2.11}$ & 82.44$_{\pm 0.61}$ & 79.46$_{\pm 1.48}$ & 79.66$_{\pm 1.21}$\\
			& k4 &  69.86$_{\pm 1.09}$ & 73.71$_{\pm 0.51}$ & 70.78$_{\pm 0.82}$ & 73.77$_{\pm 0.39}$ & 69.14$_{\pm 1.06}$ & 71.52$_{\pm 1.67}$ & 70.87$_{\pm 1.17}$ & 75.71$_{\pm 0.72}$\\
			& k5 &  65.84$_{\pm 1.05}$ & 71.12$_{\pm 0.46}$ & 66.76$_{\pm 1.34}$ & 71.11$_{\pm 0.33}$ & 71.48$_{\pm 1.21}$ & 73.78$_{\pm 0.58}$ & 70.97$_{\pm 1.66}$ & 73.29$_{\pm 0.87}$\\
			\cmidrule{2-10}
			& \centering Avg &  70.6$_{\pm 0.28}$  & 73.39$_{\pm 0.57}$ & 69.73$_{\pm 0.88}$ & 73.08$_{\pm 1.04}$ & 72.18$_{\pm 0.41}$ & 74.07$_{\pm 1.08}$ & 73.19$_{\pm 0.26}$ & 75.16$_{\pm 0.45}$\\
			\midrule\midrule
			\multirow{5}{*}{\rotatebox[origin=c]{0}{NUS-WIDE}} 
			& k1 &  35.28$_{\pm 0.53}$ & 32.36$_{\pm 0.59}$ & 35.28$_{\pm 0.17}$ & 38.97$_{\pm 1.77}$ & 38.43$_{\pm 0.95}$ & 37.57$_{\pm 0.98}$ & 39.26$_{\pm 1.03}$ & 38.91$_{\pm 0.78}$\\
			& k2 &  35.76$_{\pm 1.54}$ & 32.46$_{\pm 0.66}$ & 34.26$_{\pm 0.48}$ & 34.72$_{\pm 0.83}$ & 35.68$_{\pm 1.37}$ & 32.74$_{\pm 0.46}$ & 36.47$_{\pm 1.00}$ & 39.03$_{\pm 1.00}$\\
			& k3 &  29.52$_{\pm 0.83}$ & 26.86$_{\pm 0.12}$ & 28.42$_{\pm 1.02}$ & 29.9$_{\pm 0.18}$  & 28.9$_{\pm 0.84}$  & 30.08$_{\pm 0.57}$ & 30.16$_{\pm 0.43}$ & 32.66$_{\pm 0.42}$\\
			& k4 &  29.31$_{\pm 0.71}$ & 26.42$_{\pm 1.47}$ & 27.65$_{\pm 0.65}$ & 30.48$_{\pm 1.04}$ & 30.04$_{\pm 0.22}$ & 26.71$_{\pm 0.5}$ & 30.5$_{\pm 0.54}$  & 31.14$_{\pm 1.53}$ \\
			\cmidrule{2-10}
			& \centering Avg & 32.18$_{\pm 0.64}$ & 29.28$_{\pm 0.46}$ & 31.19$_{\pm 0.51}$ & 33.38$_{\pm 0.91}$ & 32.97$_{\pm 0.28}$ & 31.89$_{\pm 0.4}$ & 33.86$_{\pm 0.22}$ & 35.2$_{\pm 0.46}$  \\
			\midrule\midrule
			\multirow{7}{*}{\centering {Youtube}} 
			& k1 &26.82$_{\pm 0.33}$ & 31.99$_{\pm 1.85}$ & 36.97$_{\pm 1.33}$ & 33.14$_{\pm 0.88}$ & 31.61$_{\pm 1.15}$ & 35.44$_{\pm 0.33}$ & 34.67$_{\pm 0.33}$ & 39.46$_{\pm 1.45}$  \\
			& k2 &23.77$_{\pm 1.12}$ & 22.3$_{\pm 4.05}$  & 22.55$_{\pm 0.42}$ & 23.04$_{\pm 0.42}$ & 21.81$_{\pm 1.12}$ & 23.53$_{\pm 0.74}$ & 22.06$_{\pm 1.27}$ & 21.12$_{\pm 0.74}$  \\
			& k3 &42.24$_{\pm 1.49}$ & 35.63$_{\pm 1.32}$ & 36.49$_{\pm 1.00}$ & 39.94$_{\pm 0.5}$ & 43.1$_{\pm 2.28}$  & 38.22$_{\pm 0.5}$ & 43.68$_{\pm 3.03}$ & 46.84$_{\pm 1.32}$  \\
			& k4 &50.86$_{\pm 1.57}$ & 51.55$_{\pm 0.89}$ & 55.5$_{\pm 3.31}$  & 52.23$_{\pm 1.81}$ & 55.15$_{\pm 1.79}$ & 58.08$_{\pm 1.66}$ & 52.06$_{\pm 0.89}$ & 55.5$_{\pm 2.84}$  \\
			& k5 &45.83$_{\pm 1.74}$ & 44.83$_{\pm 1.55}$ & 51.01$_{\pm 2.53}$ & 53.02$_{\pm 1.88}$ & 50.14$_{\pm 1.51}$ & 52.3$_{\pm 0.25}$  & 50.57$_{\pm 1.99}$ & 54.18$_{\pm 0.5}$  \\
			& k6 &56.28$_{\pm 1.5}$ & 53.46$_{\pm 0.99}$ & 58.44$_{\pm 2.25}$ & 55.41$_{\pm 2.46}$ & 53.68$_{\pm 3.07}$ & 55.84$_{\pm 1.12}$ & 59.52$_{\pm 1.35}$ & 60.92$_{\pm 0.2}$  \\
			\cmidrule{2-10}
			& \centering Avg &41.72$_{\pm 0.66}$ & 41.12$_{\pm 0.68}$ & 45.16$_{\pm 1.02}$ & 44.23$_{\pm 0.3}$ & 43.8$_{\pm 0.76}$  & 45.53$_{\pm 0.52}$ & 44.83$_{\pm 0.53}$ & 47.92$_{\pm 0.93}$  \\
			\bottomrule
		\end{tabular}
	}
\end{table*}
\begin{figure*}[htbp]
	\centering
	\includegraphics[width=0.95\textwidth]{conv/legend.png}
	\begin{subfigure}{0.23\textwidth}
		\includegraphics[width=\linewidth]{conv/Caltech101_M2.png}
		\caption{Caltech101}
	\end{subfigure}
	\hfill
	\begin{subfigure}{0.23\textwidth}
		\includegraphics[width=\linewidth]{conv/Reuters_M2.png}
		\caption{Reuters}
	\end{subfigure}
	\hfill
	\begin{subfigure}{0.23\textwidth}
		\includegraphics[width=\linewidth]{conv/NUS_WIDE_M2.png}
		\caption{NUS-WIDE}
	\end{subfigure}
	\hfill
	\begin{subfigure}{0.23\textwidth}
		\includegraphics[width=\linewidth]{conv/Youtube_M2.png}
		\caption{Youtube}
	\end{subfigure}
	\caption{The test accuracy and convergence process of each method in M2 scenario.}
	\label{apfig:M2_conv}
\end{figure*}
\begin{table*}[t]
	\centering
	\caption{Performance comparison (\%) of all compared methods on Caltech101, Reuters, NUS-WIDE, and Youtube using M1+ data partitioning, where the number of clients $K$ is equal to the number of modalities $M$. The $k1$ represents the client with ID 1.}
	\label{tab:M1+_Detailed}
	\renewcommand{\arraystretch}{1.1}
	\resizebox{\linewidth}{!}{
		\begin{tabular}{c|c||c|ccc|ccc||c}
			\toprule
			\multicolumn{2}{c||}{Dataset} & Local & FedProto & FedTGP & FedPall & Harmony & FedMVP & FedMobile & \textbf{MFedPBA} \\
			\midrule
			\multirow{4}{*}{\centering{Caltech101}} 
			&  k1 & 34.51$_{\pm 0.67}$ & 30.74$_{\pm 1.09}$ & 34.7$_{\pm 1.53}$  & 35.17$_{\pm 0.79}$ & 39.34$_{\pm 0.65}$ & 29.58$_{\pm 1.00}$ & 36.58$_{\pm 0.61}$ & 41.18$_{\pm 0.24}$\\
			&  k2 & 40.63$_{\pm 0.87}$ & 35.02$_{\pm 0.39}$ & 38.57$_{\pm 0.77}$ & 41.68$_{\pm 0.83}$ & 41.04$_{\pm 1.03}$ & 33.69$_{\pm 0.54}$ & 41.05$_{\pm 0.58}$ & 45.46$_{\pm 0.47}$ \\
			&  k3 & 43.49$_{\pm 0.58}$ & 40.46$_{\pm 0.24}$ & 44.59$_{\pm 2.03}$ & 44.53$_{\pm 1.22}$ & 45.92$_{\pm 0.56}$ & 46.3$_{\pm 1.1}$ & 46.45$_{\pm 1.57}$ & 50.02$_{\pm 0.59}$ \\
			\cmidrule{2-10}
			& Avg & 40.83$_{\pm 1.12}$ & 36.81$_{\pm 0.35}$ & 40.71$_{\pm 1.23}$ & 41.8$_{\pm 0.77}$ & 43.04$_{\pm 0.64}$ & 38.91$_{\pm 1.25}$ & 42.29$_{\pm 0.6}$ & 46.82$_{\pm 0.54}$ \\
			\midrule\midrule
			\multirow{6}{*}{\centering {Reuters}} 
			& k1 & 72.28$_{\pm 1.92}$ & 75.00$_{\pm 2.54}$    & 74.61$_{\pm 2.58}$ & 74.8$_{\pm 1.76}$  & 74.71$_{\pm 0.85}$ & 61.12$_{\pm 0.88}$ & 75.12$_{\pm 1.25}$ & 80.82$_{\pm 1.05}$ \\
			& k2 & 82.11$_{\pm 0.77}$ & 81.2$_{\pm 0.81}$  & 80.04$_{\pm 1.56}$ & 81.9$_{\pm 1.17}$  & 80.76$_{\pm 0.29}$ & 81.63$_{\pm 0.23}$ & 83.32$_{\pm 0.18}$ & 83.72$_{\pm 0.64}$ \\
			& k3 & 60.57$_{\pm 0.85}$ & 53.4$_{\pm 1.25}$  & 57.83$_{\pm 1.87}$ & 60.12$_{\pm 1.18}$ & 61.7$_{\pm 0.67}$  & 60.05$_{\pm 0.34}$ & 61.31$_{\pm 0.14}$ & 62.21$_{\pm 0.7}$ \\
			& k4 & 65.57$_{\pm 0.53}$ & 58.42$_{\pm 1.32}$ & 61.35$_{\pm 1.94}$ & 63.04$_{\pm 2.47}$ & 67.64$_{\pm 0.25}$ & 64.27$_{\pm 0.67}$ & 67.06$_{\pm 0.45}$ & 68.34$_{\pm 0.46}$ \\
			& k5 & 73.95$_{\pm 2.65}$ & 77.74$_{\pm 1.01}$ & 77.39$_{\pm 1.57}$ & 78.4$_{\pm 1.14}$  & 77.61$_{\pm 0.46}$ & 77.26$_{\pm 0.27}$ & 78.95$_{\pm 1.55}$ & 81.82$_{\pm 0.21}$ \\
			\cmidrule{2-10}
			& \centering Avg & 70.85$_{\pm 0.59}$ & 68.54$_{\pm 1.46}$ & 69.8$_{\pm 0.59}$  & 71.09$_{\pm 1.56}$ & 72.28$_{\pm 0.47}$ & 68.09$_{\pm 0.82}$ & 72.79$_{\pm 0.44}$ & 75.01$_{\pm 0.22}$ \\
			\midrule\midrule
			\multirow{5}{*}{\rotatebox[origin=c]{0}{NUS-WIDE}} 
			& k1 &  28.63$_{\pm 0.75}$ & 30.21$_{\pm 0.43}$ & 27.48$_{\pm 1.11}$ & 30.77$_{\pm 1.41}$ & 29.82$_{\pm 0.65}$ & 28.16$_{\pm 0.9}$ & 32.82$_{\pm 0.53}$ & 33.02$_{\pm 1.31}$\\
			& k2 &  28.17$_{\pm 0.24}$ & 26.2$_{\pm 0.68}$  & 25.67$_{\pm 1.39}$ & 29.35$_{\pm 0.42}$ & 31.83$_{\pm 1.24}$ & 29.51$_{\pm 0.53}$ & 29.77$_{\pm 0.55}$ & 30.9$_{\pm 0.85}$\\
			& k3 &  28.7$_{\pm 1.26}$  & 25.94$_{\pm 0.81}$ & 26.37$_{\pm 2.01}$ & 31.01$_{\pm 0.81}$ & 32.42$_{\pm 0.59}$ & 30.92$_{\pm 0.51}$ & 29.61$_{\pm 0.73}$ & 31.93$_{\pm 0.33}$\\
			& k4 &  34.32$_{\pm 2.79}$ & 28.75$_{\pm 0.57}$ & 28.52$_{\pm 1.91}$ & 35.56$_{\pm 0.86}$ & 34.21$_{\pm 1.97}$  & 32.96$_{\pm 1.2}$ & 31.35$_{\pm 3.97}$ & 33.71$_{\pm 2.53}$\\
			\cmidrule{2-10}
			& \centering Avg & 28.72$_{\pm 0.71}$ & 27.79$_{\pm 0.55}$ & 27.13$_{\pm 0.18}$ & 30.99$_{\pm 0.63}$ & 31.56$_{\pm 0.61}$ & 29.82$_{\pm 0.59}$ & 30.95$_{\pm 0.57}$ & 32.15$_{\pm 0.36}$ \\
			\midrule\midrule
			\multirow{7}{*}{\centering {Youtube}} 
			& k1 &33.5$_{\pm 1.02}$  & 33.33$_{\pm 0.85}$ & 33.17$_{\pm 2.04}$ & 35.62$_{\pm 1.02}$ & 35.78$_{\pm 1.7}$ & 35.46$_{\pm 0.57}$ & 40.23$_{\pm 2.19}$ & 40.52$_{\pm 1.02}$  \\
			& k2 &57.18$_{\pm 2.46}$ & 52.87$_{\pm 0.5}$ & 58.05$_{\pm 2.24}$ & 56.7$_{\pm 1.63}$  & 57.95$_{\pm 1.68}$ & 59.58$_{\pm 0.66}$ & 58.14$_{\pm 0.66}$ & 60.25$_{\pm 0.44}$  \\
			& k3 &37.5$_{\pm 1.38}$  & 30.56$_{\pm 4.35}$ & 37.5$_{\pm 1.56}$  & 36.63$_{\pm 1.97}$ & 37.15$_{\pm 1.31}$ & 34.55$_{\pm 0.3}$ & 37.15$_{\pm 0.6}$ & 36.81$_{\pm 1.06}$  \\
			& k4 &40.37$_{\pm 1.79}$ & 37.93$_{\pm 4.48}$ & 44.54$_{\pm 1.00}$ & 42.82$_{\pm 1.79}$ & 43.39$_{\pm 3.06}$ & 42.24$_{\pm 0.86}$ & 44.4$_{\pm 0.75}$  & 45.4$_{\pm 0.25}$  \\
			& k5 &44.32$_{\pm 1.97}$ & 40.87$_{\pm 4.2}$ & 49.68$_{\pm 0.96}$ & 47.51$_{\pm 1.38}$ & 47.13$_{\pm 0.66}$ & 49.17$_{\pm 0.96}$ & 46.49$_{\pm 1.73}$ & 50.45$_{\pm 1.33}$  \\
			& k6 &22.08$_{\pm 1.12}$ & 22.51$_{\pm 5.32}$ & 29.06$_{\pm 2.15}$ & 22.73$_{\pm 2.83}$ & 22.73$_{\pm 2.34}$ & 27.92$_{\pm 2.34}$ & 27.27$_{\pm 2.34}$ & 29.22$_{\pm 2.34}$  \\
			\cmidrule{2-10}
			& \centering Avg &41.89$_{\pm 0.76}$ & 38.82$_{\pm 1.12}$ & 44.55$_{\pm 0.68}$ & 43.04$_{\pm 1.09}$ & 43.47$_{\pm 0.91}$ & 44.24$_{\pm 0.36}$ & 44.28$_{\pm 0.64}$ & 47.21$_{\pm 0.59}$  \\
			\bottomrule
		\end{tabular}
	}
\end{table*}
\begin{figure*}[h]
	\centering
	\includegraphics[width=0.95\textwidth]{conv/legend.png}
	\begin{subfigure}{0.23\textwidth}
		\includegraphics[width=\linewidth]{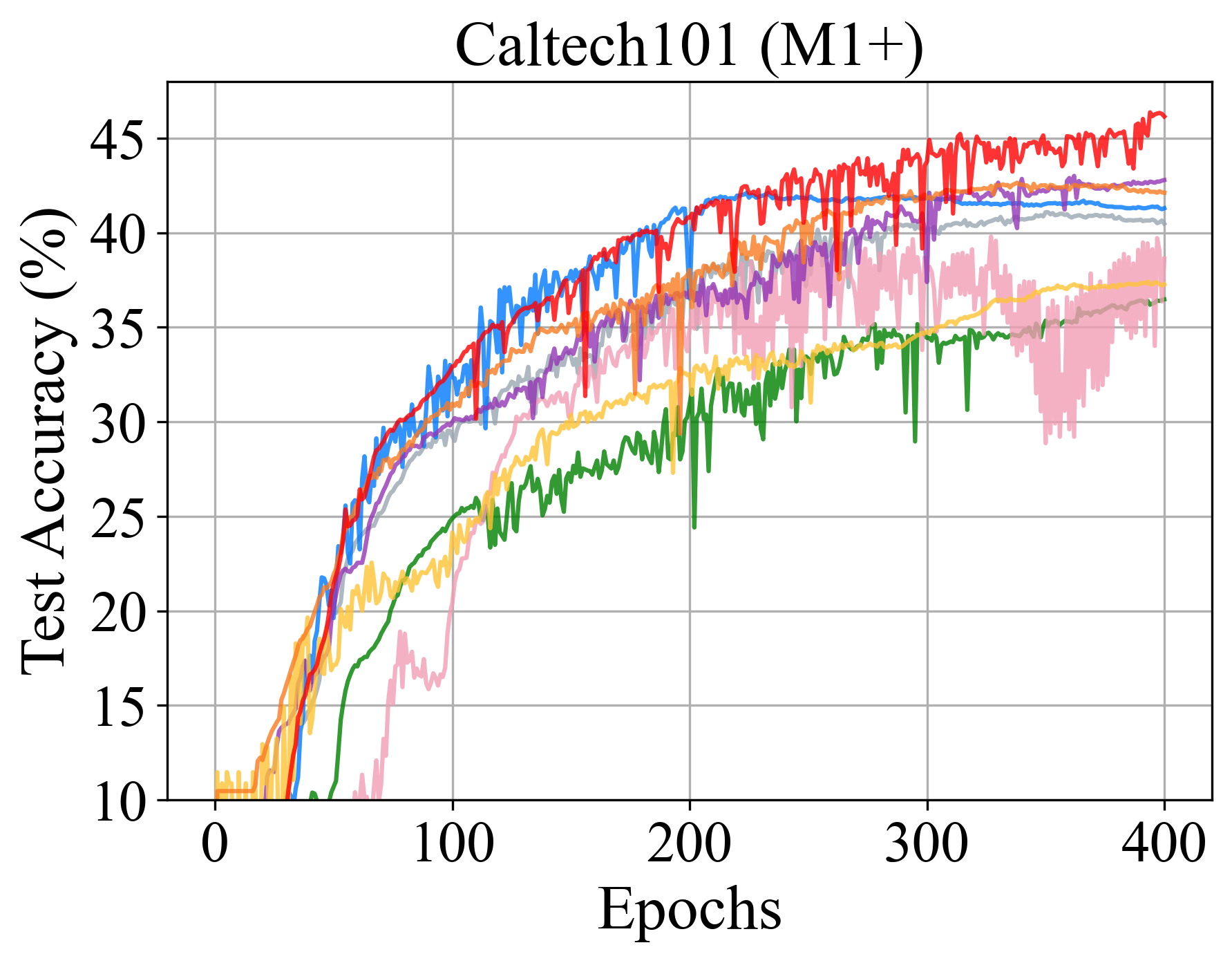}
		\caption{Caltech101}
	\end{subfigure}
	\hfill
	\begin{subfigure}{0.23\textwidth}
		\includegraphics[width=\linewidth]{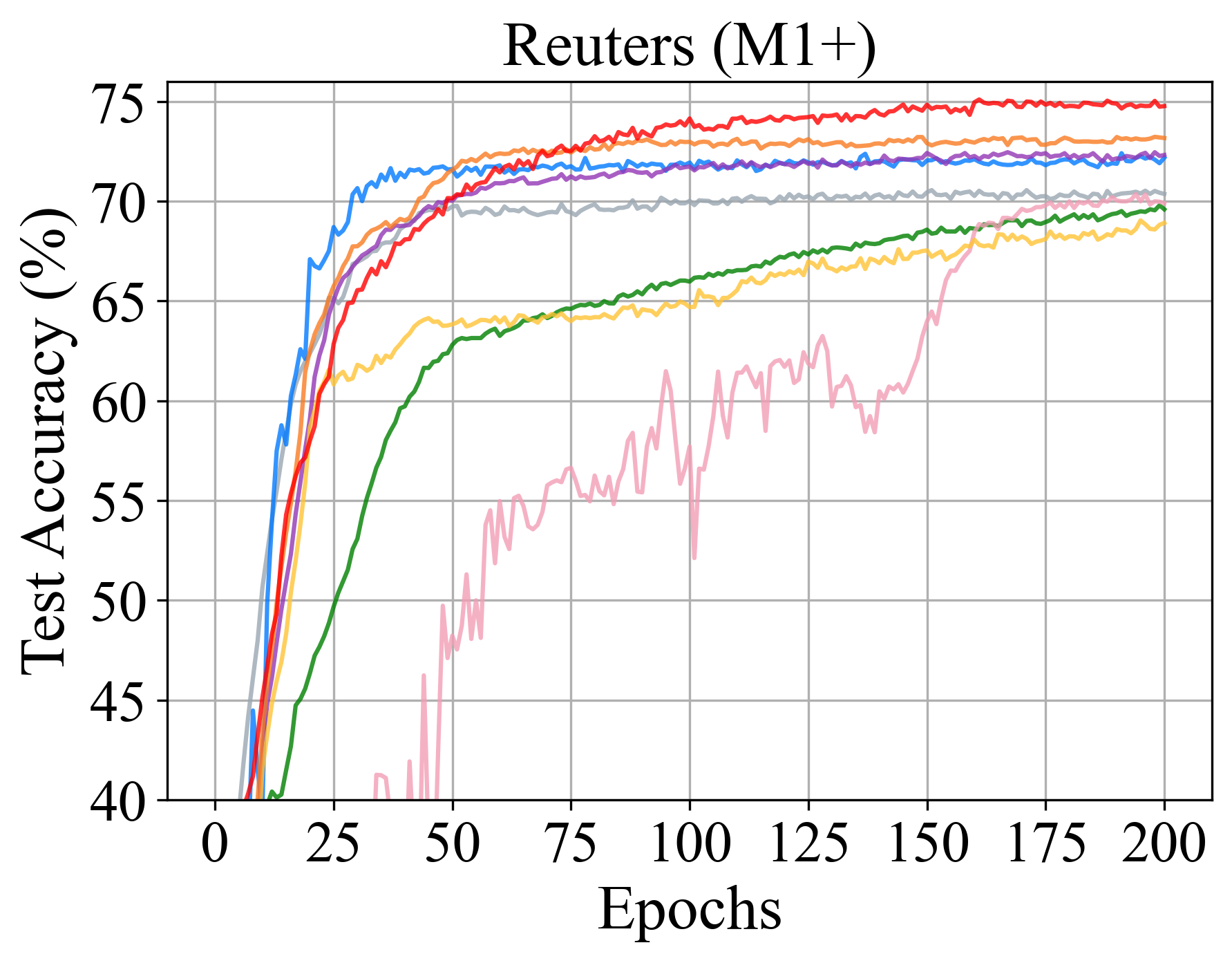}
		\caption{Reuters}
	\end{subfigure}
	\hfill
	\begin{subfigure}{0.23\textwidth}
		\includegraphics[width=\linewidth]{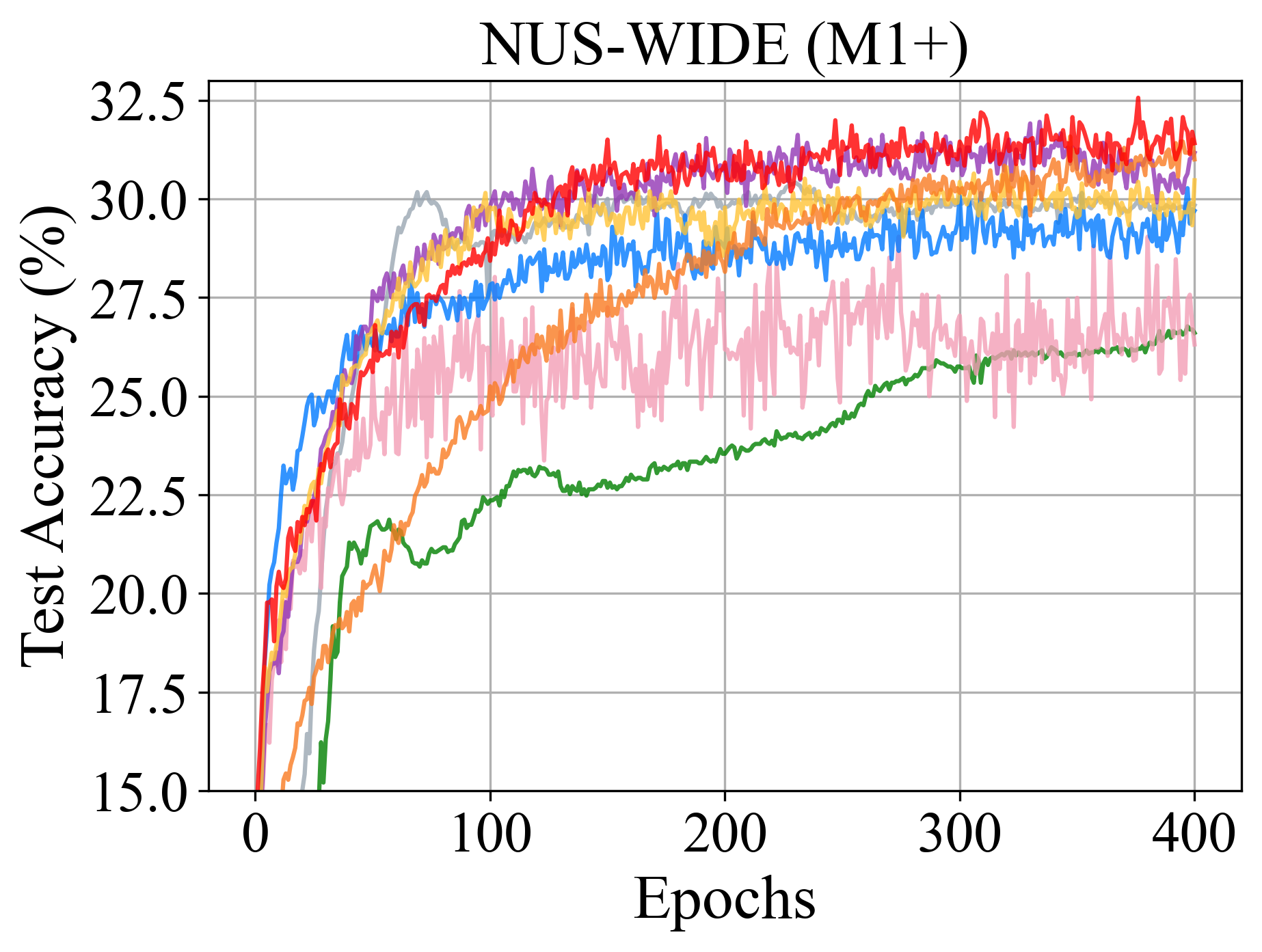}
		\caption{NUS-WIDE}
	\end{subfigure}
	\hfill
	\begin{subfigure}{0.23\textwidth}
		\includegraphics[width=\linewidth]{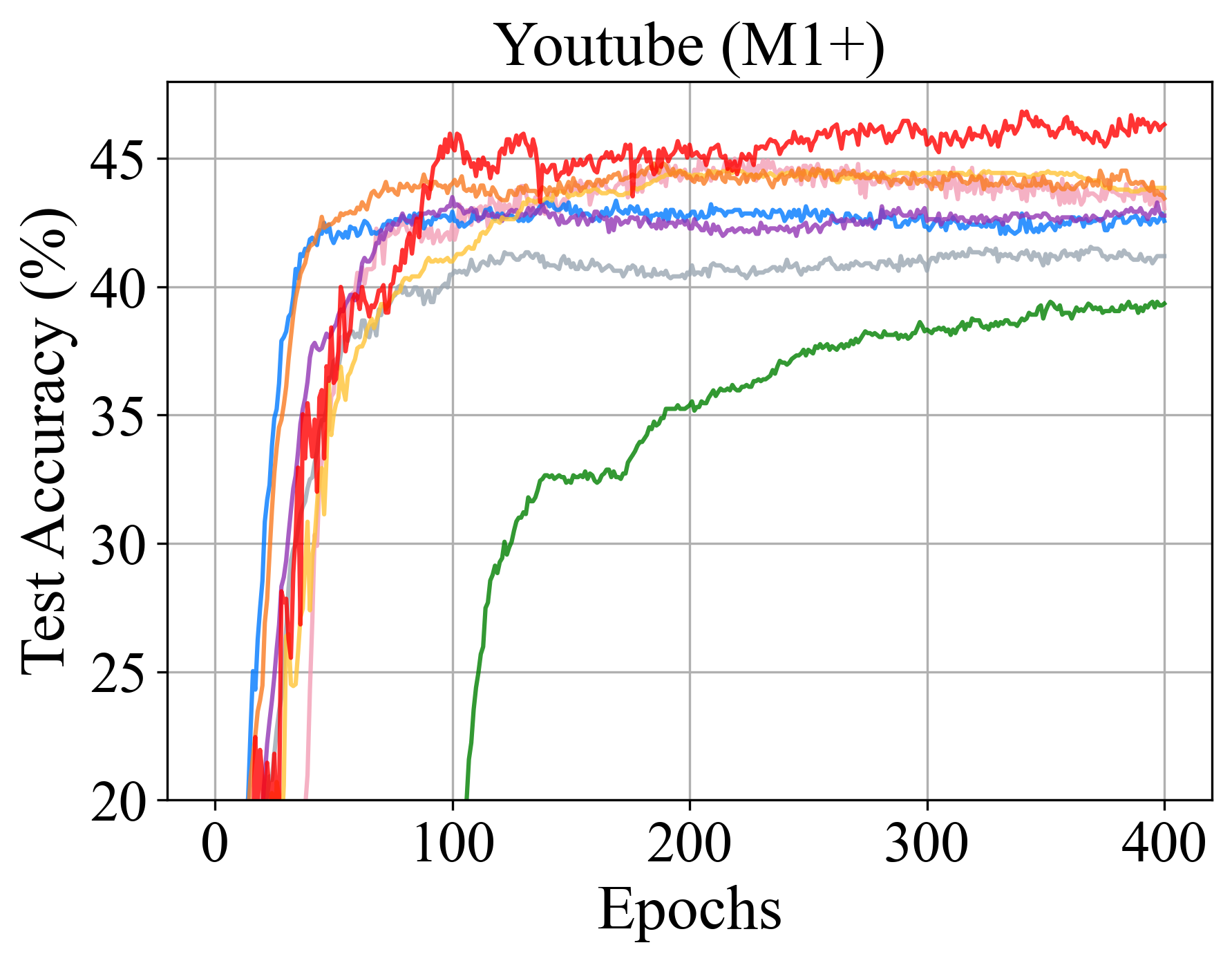}
		\caption{Youtube}
	\end{subfigure}
	\caption{The test accuracy and convergence process of each method in M1+ scenario.}
	\label{apfig:M1+_conv}
\end{figure*}

\begin{table*}[t]
	\centering
	\caption{Performance comparison (\%) of all compared methods on Caltech101, Reuters, NUS-WIDE, and Youtube using M1 data partitioning, where the number of clients $K$ is equal to the number of modalities $M$. The X1 represents a client that has the X1 modal enabled.}
	\label{tab:M1_Detailed}
	\renewcommand{\arraystretch}{1.1}
	\resizebox{\linewidth}{!}{
		\begin{tabular}{c|c||c|ccc|ccc||c}
			\toprule
			\multicolumn{2}{c||}{Dataset} & Local & FedProto & FedTGP & FedPall & Harmony & FedMVP & FedMobile & \textbf{MFedPBA} \\
			\midrule
			\multirow{4}{*}{\centering{Caltech101}} 
			& I1 &  39.47$_{\pm 0.13}$ & 38.27$_{\pm 1.4}$ & 41.16$_{\pm 0.67}$ & 42$_{\pm 0.71}$    & 39.84$_{\pm 0.62}$ & 38.62$_{\pm 0.5}$ & 41.87$_{\pm 0.13}$ & 48.73$_{\pm 0.9}$\\
			& I2 &  36.89$_{\pm 0.28}$ & 25.56$_{\pm 2.04}$ & 31.47$_{\pm 1.73}$ & 37.02$_{\pm 0.56}$ & 39.78$_{\pm 1.3}$ & 38.09$_{\pm 1.16}$ & 38.17$_{\pm 0.73}$ & 40.78$_{\pm 0.37}$\\
			& I3 &  39.07$_{\pm 1.22}$ & 30.89$_{\pm 1.34}$ & 35.91$_{\pm 0.66}$ & 43.29$_{\pm 0.2}$ & 40.69$_{\pm 0.83}$ & 40.04$_{\pm 1.65}$ & 40.22$_{\pm 0.28}$ & 41.4$_{\pm 1.05}$\\
			\cmidrule{2-10}
			& \centering Avg &  38.47$_{\pm 0.45}$ & 31.57$_{\pm 1.23}$ & 36.18$_{\pm 0.72}$ & 40.77$_{\pm 0.36}$ & 40.1$_{\pm 0.69}$  & 38.92$_{\pm 0.62}$ & 40.09$_{\pm 0.13}$ & 43.64$_{\pm 0.11}$\\
			\midrule\midrule
			\multirow{6}{*}{\centering {Reuters}} 
			& L1  & 64.47$_{\pm 1.07}$ & 58.87$_{\pm 0.61}$ & 60.39$_{\pm 0.93}$ & 65.85$_{\pm 0.55}$ & 64.39$_{\pm 0.55}$ & 63.85$_{\pm 1.59}$ & 64.93$_{\pm 0.43}$ & 67.2$_{\pm 1.33}$ \\
			& L2 & 69.81$_{\pm 1.4}$ & 79.52$_{\pm 0.67}$ & 77.58$_{\pm 0.88}$ & 74.31$_{\pm 2.03}$ & 76.44$_{\pm 0.18}$ & 76.41$_{\pm 0.25}$ & 75.37$_{\pm 0.38}$ & 77.87$_{\pm 0.85}$  \\
			& L3 & 63.15$_{\pm 0.96}$ & 66.7$_{\pm 0.71}$  & 48.43$_{\pm 0.99}$ & 54.55$_{\pm 3.67}$ & 67.38$_{\pm 0.43}$ & 68.61$_{\pm 1.21}$ & 67.31$_{\pm 0.44}$ & 68.26$_{\pm 0.33}$  \\
			& L4 & 72.39$_{\pm 1.59}$ & 78.29$_{\pm 0.16}$ & 76.61$_{\pm 1.64}$ & 73.24$_{\pm 1.29}$ & 76.19$_{\pm 0.12}$ & 75.01$_{\pm 1.12}$ & 75.2$_{\pm 0.62}$  & 79.83$_{\pm 1.65}$  \\
			& L5 & 65.44$_{\pm 1.55}$ & 60.38$_{\pm 0.65}$ & 66.04$_{\pm 0.77}$ & 66.84$_{\pm 0.98}$ & 66.42$_{\pm 0.7}$ & 67.04$_{\pm 0.13}$ & 66.81$_{\pm 0.33}$ & 67.31$_{\pm 0.33}$  \\
			\cmidrule{2-10}
			& \centering Avg & 67.05$_{\pm 1.04}$ & 68.75$_{\pm 0.63}$ & 65.81$_{\pm 0.33}$ & 66.96$_{\pm 0.22}$ & 70.16$_{\pm 1.14}$ & 70.18$_{\pm 0.57}$ & 69.92$_{\pm 0.95}$ & 72.09$_{\pm 0.79}$  \\
			\midrule\midrule
			\multirow{5}{*}{\rotatebox[origin=c]{0}{NUS-WIDE}} 
			& I1 &  34.5$_{\pm 0.85}$  & 23.15$_{\pm 1.55}$ & 35.04$_{\pm 0.18}$ & 32.27$_{\pm 3.24}$ & 34.4$_{\pm 1.12}$  & 33.33$_{\pm 1.21}$ & 33.65$_{\pm 1.34}$ & 34.29$_{\pm 0.8}$\\
			& I2 &  36.85$_{\pm 0.49}$ & 32.32$_{\pm 1.34}$ & 34.08$_{\pm 3.02}$ & 36.00$_{\pm 0.49}$ & 38.76$_{\pm 1.58}$ & 39.4$_{\pm 0.37}$  & 36.53$_{\pm 1.76}$ & 40.79$_{\pm 0.85}$\\
			& I3 &  32.37$_{\pm 0.8}$ & 22.17$_{\pm 1.63}$ & 30.35$_{\pm 0.86}$ & 35.46$_{\pm 1.15}$ & 33.01$_{\pm 1.29}$ & 34.61$_{\pm 1.21}$ & 33.55$_{\pm 1.66}$ & 35.46$_{\pm 0.68}$\\
			& L &  20.87$_{\pm 0.49}$ & 22.19$_{\pm 3.02}$ & 18.74$_{\pm 0.82}$ & 20.34$_{\pm 0.74}$ & 22.58$_{\pm 1.61}$ & 23.43$_{\pm 0.78}$ & 21.09$_{\pm 0.32}$ & 22.47$_{\pm 0.49}$\\
			\cmidrule{2-10}
			& \centering Avg & 31.15$_{\pm 0.14}$ & 24.96$_{\pm 0.52}$ & 29.55$_{\pm 0.73}$ & 31.02$_{\pm 0.96}$ & 32.19$_{\pm 0.35}$ & 32.7$_{\pm 0.51}$  & 31.2$_{\pm 0.36}$  & 33.25$_{\pm 0.24}$ \\
			\midrule\midrule
			\multirow{7}{*}{\centering {Youtube}} 
			& I1 &34.13$_{\pm 0.69}$ & 39.29$_{\pm 2.06}$ & 32.94$_{\pm 2.75}$ & 34.13$_{\pm 2.48}$ & 33.33$_{\pm 2.06}$ & 34.34$_{\pm 1.74}$ & 37.3$_{\pm 0.69}$  & 44.44$_{\pm 1.37}$  \\
			& I2 &34.13$_{\pm 1.37}$ & 26.9$_{\pm 2.89}$  & 27.78$_{\pm 1.37}$ & 33.33$_{\pm 1.19}$ & 41.27$_{\pm 3.64}$ & 38.89$_{\pm 1.82}$ & 34.52$_{\pm 3.15}$ & 30.56$_{\pm 1.37}$  \\
			& T &22.22$_{\pm 1.82}$ & 18.2$_{\pm 0.59}$  & 21.83$_{\pm 0.69}$ & 21.43$_{\pm 3.15}$ & 22.62$_{\pm 2.06}$ & 21.37$_{\pm 1.29}$ & 20.24$_{\pm 2.38}$ & 28.56$_{\pm 1.05}$  \\
			& A1 &67.06$_{\pm 1.37}$ & 60.71$_{\pm 1.19}$ & 57.54$_{\pm 3.44}$ & 57.14$_{\pm 2.38}$ & 64.68$_{\pm 2.48}$ & 60.32$_{\pm 1.81}$ & 56.4$_{\pm 0.77}$  & 71.37$_{\pm 1.2}$  \\
			& A2 &32.94$_{\pm 3.44}$ & 24.81$_{\pm 1.73}$ & 32.94$_{\pm 2.48}$ & 35.71$_{\pm 2.06}$ & 34.13$_{\pm 1.82}$ & 33.33$_{\pm 1.19}$ & 34.87$_{\pm 1.74}$ & 30.75$_{\pm 2.22}$  \\
			& A3 &49.6$_{\pm 1.37}$  & 49.61$_{\pm 1.39}$ & 58.73$_{\pm 0.69}$ & 53.17$_{\pm 2.48}$ & 50.4$_{\pm 0.69}$  & 51.98$_{\pm 2.48}$ & 53.57$_{\pm 3.15}$ & 61.44$_{\pm 0.64}$  \\
			\cmidrule{2-10}
			& \centering Avg &40.01$_{\pm 0.41}$ & 36.59$_{\pm 0.39}$ & 38.62$_{\pm 0.64}$ & 39.15$_{\pm 1.5}$ & 41.07$_{\pm 0.42}$ & 40.04$_{\pm 0.46}$ & 39.48$_{\pm 0.99}$ & 44.52$_{\pm 0.54}$  \\
			\bottomrule
		\end{tabular}
	}
\end{table*}
\begin{figure*}[htbp]
	\centering
	\includegraphics[width=0.95\textwidth]{conv/legend.png}
	\begin{subfigure}{0.23\textwidth}
		
		\includegraphics[width=\linewidth]{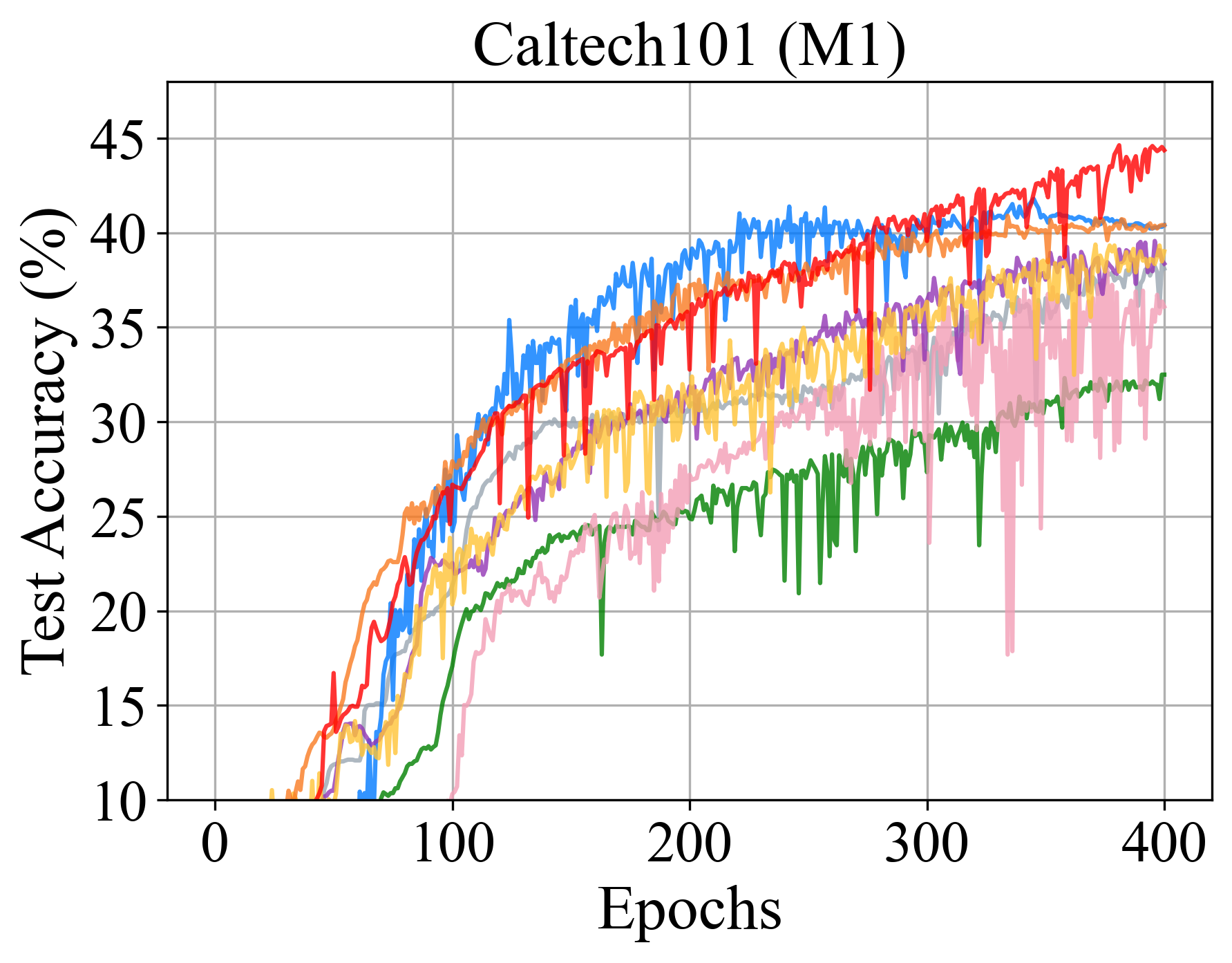}
		\caption{Caltech101}
	\end{subfigure}
	\hfill
	\begin{subfigure}{0.23\textwidth}
		\includegraphics[width=\linewidth]{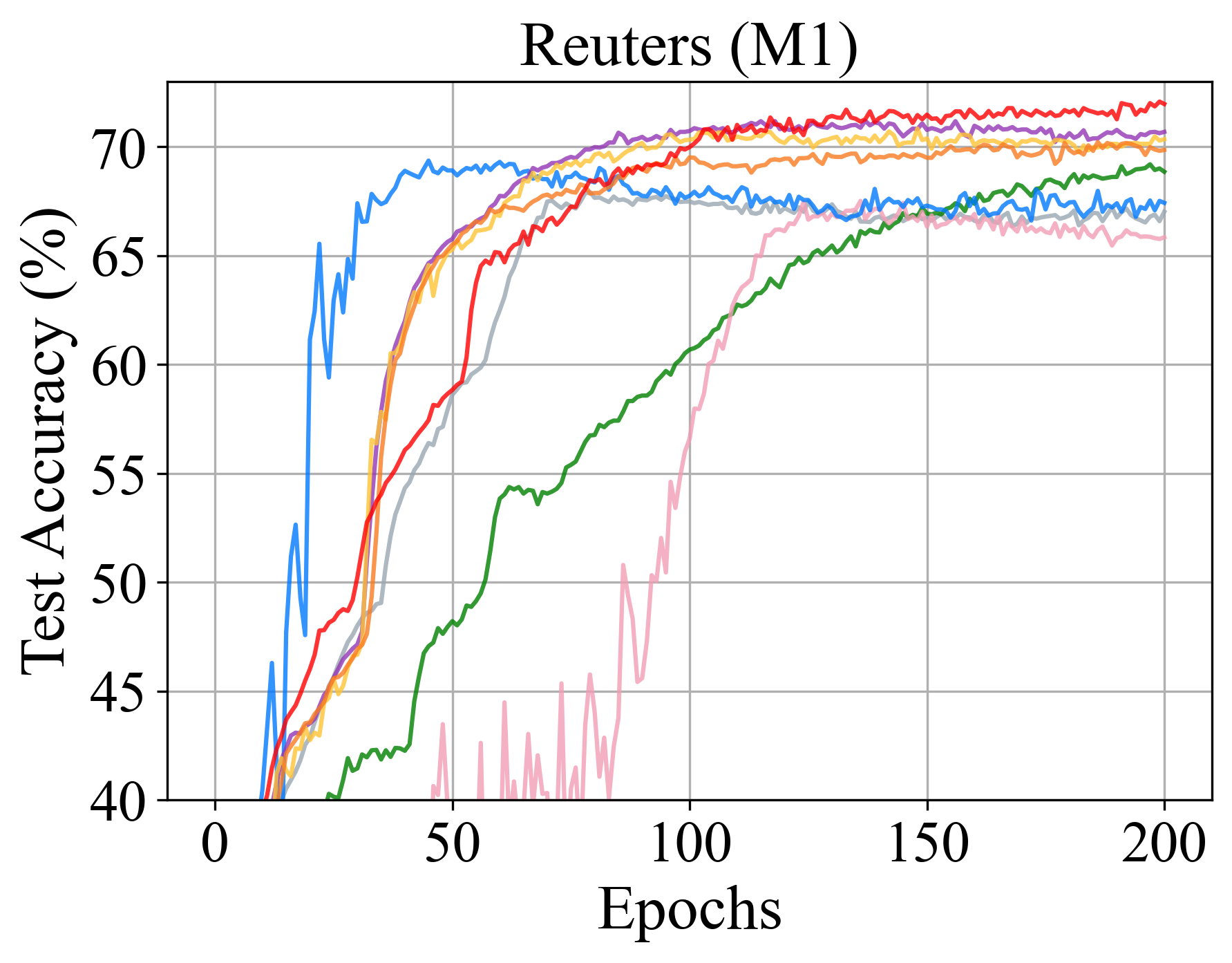}
		\caption{Reuters}
	\end{subfigure}
	\hfill
	\begin{subfigure}{0.23\textwidth}
		\includegraphics[width=\linewidth]{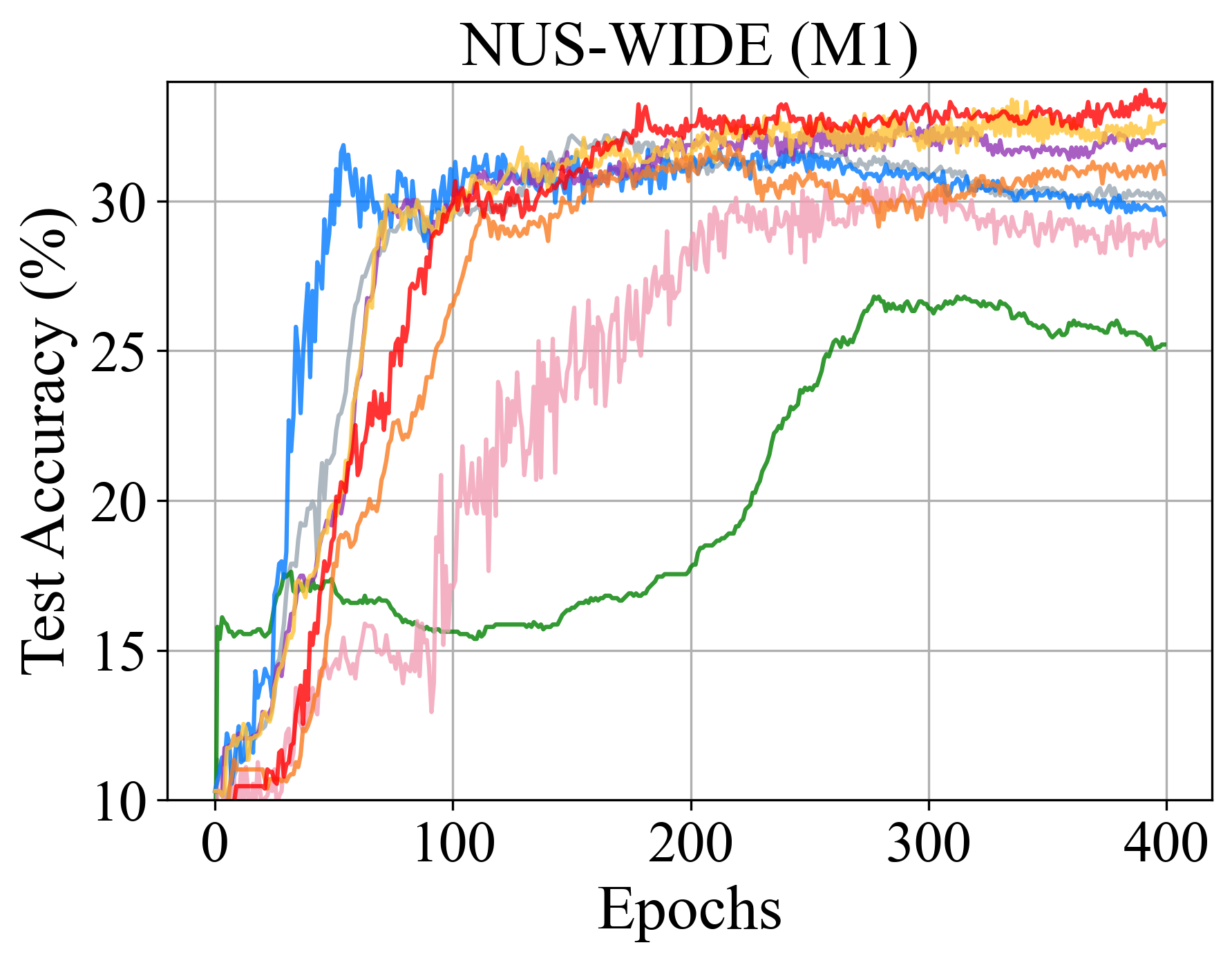}
		\caption{NUS-WIDE}
	\end{subfigure}
	\hfill
	\begin{subfigure}{0.23\textwidth}
		\includegraphics[width=\linewidth]{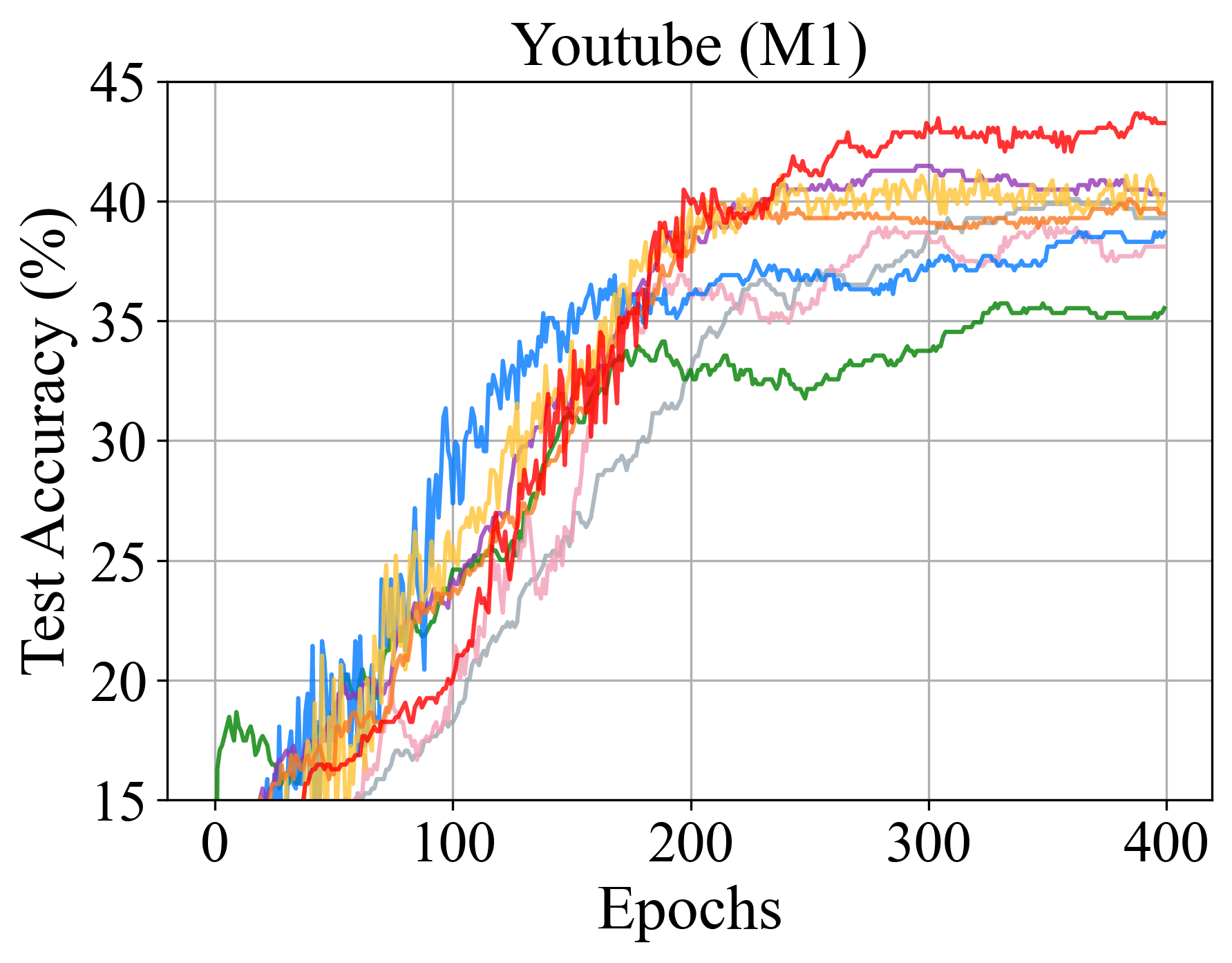}
		\caption{Youtube}
	\end{subfigure}
	\caption{The test accuracy and convergence process of each method in M1 scenario.}
	\label{apfig:M1_conv}
\end{figure*}

\begin{figure*}[htbp]
	\centering
	\includegraphics[width=0.95\textwidth]{conv/legend.png}
	\begin{subfigure}{0.23\textwidth}
		
		\includegraphics[width=\linewidth]{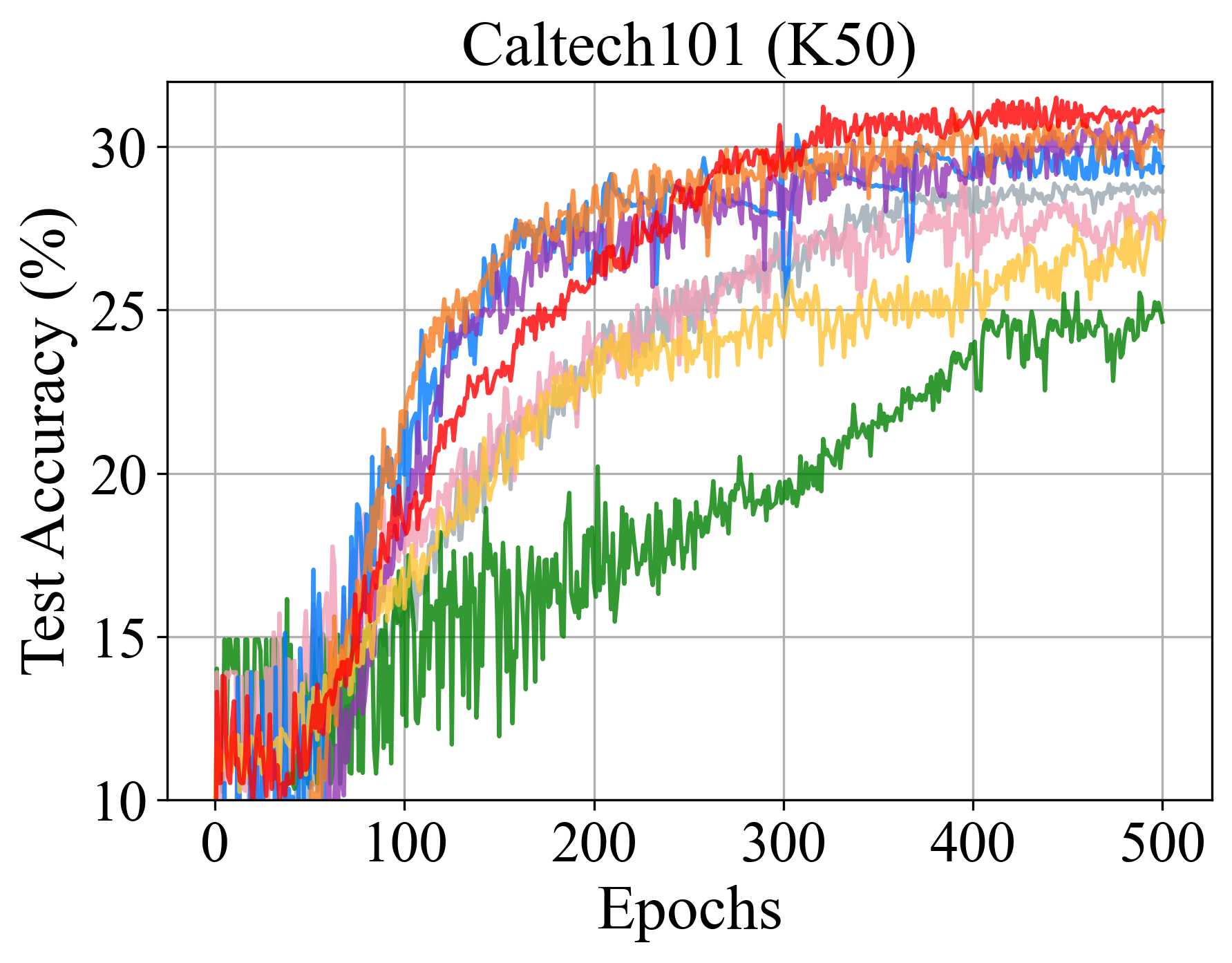}
		\caption{Caltech101}
	\end{subfigure}
	\hfill
	\begin{subfigure}{0.23\textwidth}
		\includegraphics[width=\linewidth]{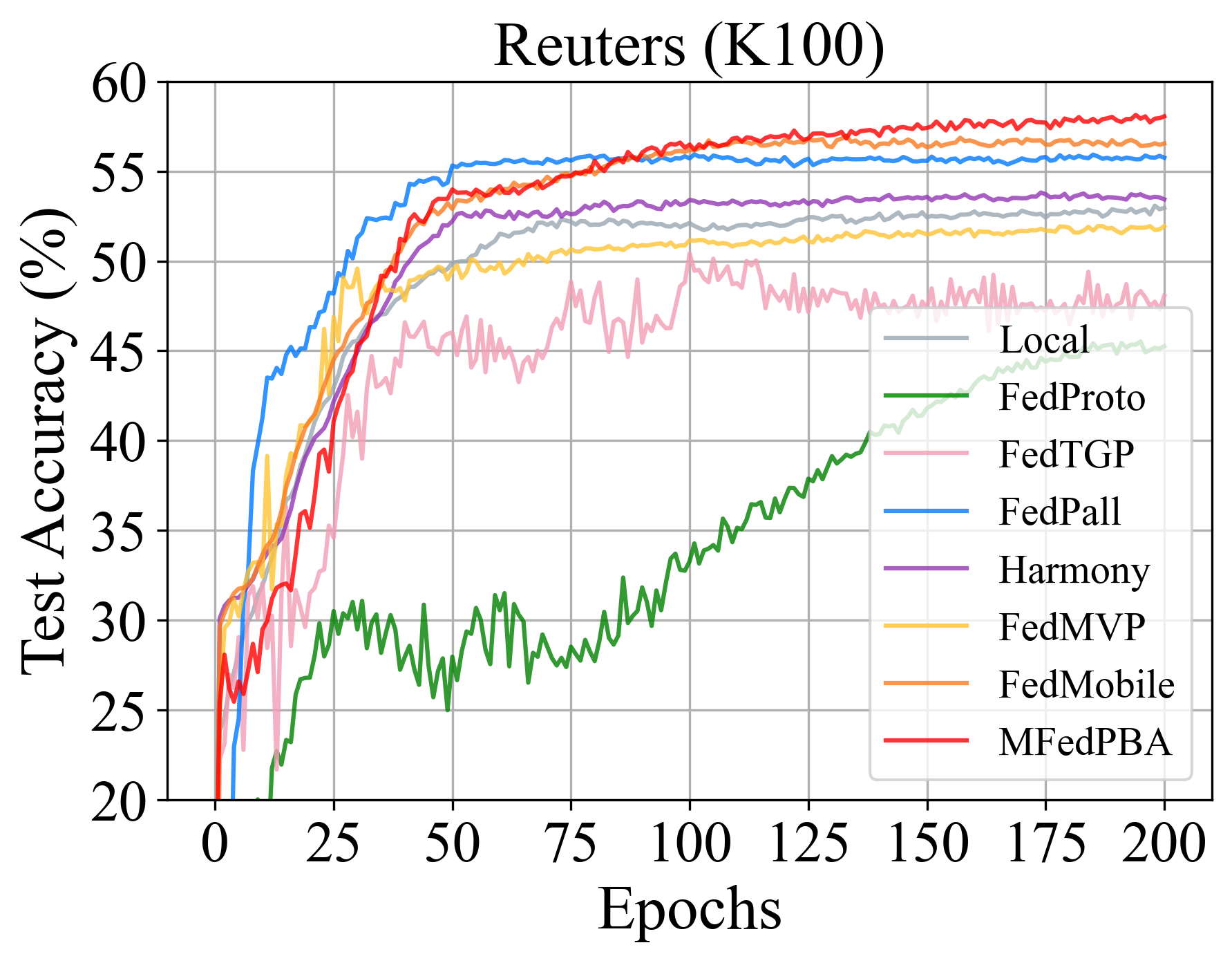}
		\caption{Reuters}
	\end{subfigure}
	\hfill
	\begin{subfigure}{0.23\textwidth}
		\includegraphics[width=\linewidth]{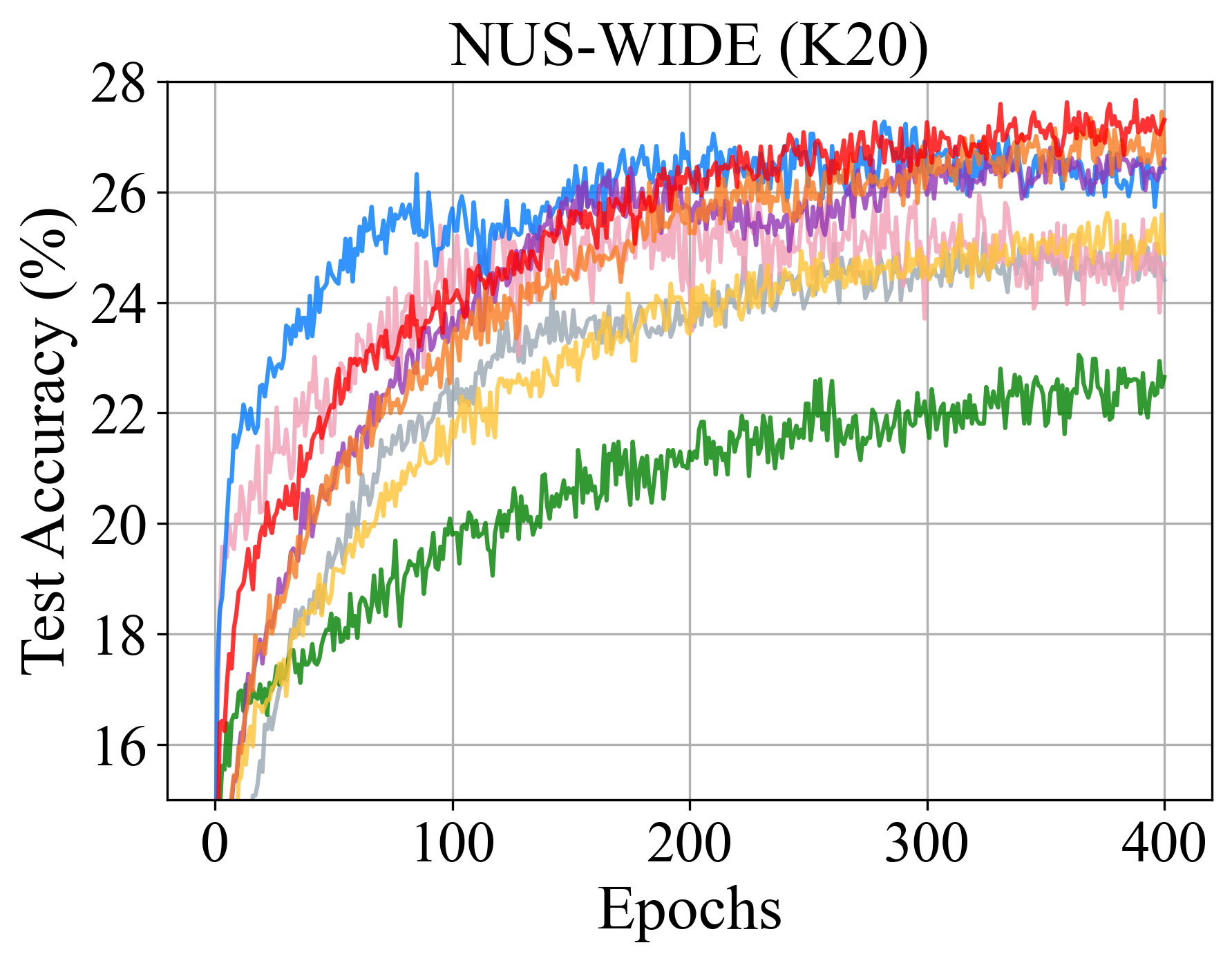}
		\caption{NUS-WIDE}
	\end{subfigure}
	\hfill
	\begin{subfigure}{0.23\textwidth}
		\includegraphics[width=\linewidth]{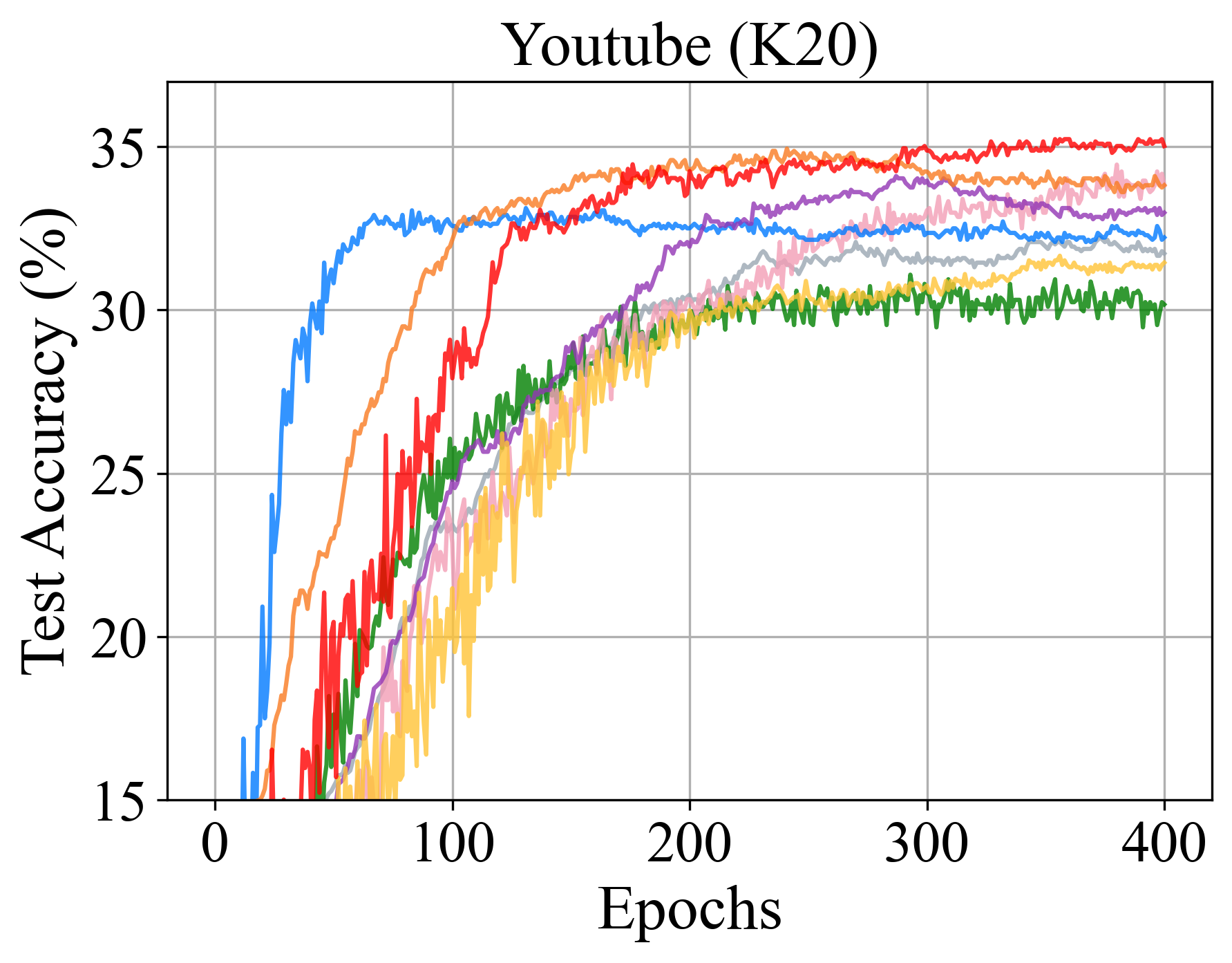}
		\caption{Youtube}
	\end{subfigure}
	\caption{The testing accuracy and convergence process of each method in the M1+ scenario, with a large number of clients participating.}
	\label{apfig:K_conv}
\end{figure*}

\end{document}